\newcommand{\E}{\ensuremath{\mathbb{E}}} 
\DeclareMathOperator*{\argmax}{\mathrm{argmax}}
\declaretheoremstyle[headfont=\sffamily\bfseries,bodyfont=\itshape]{thm-sf}
\declaretheorem[style=thm-sf]{theorem}
\declaretheorem[style=thm-sf]{remark}
\declaretheorem[style=thm-sf]{assumption}
\declaretheorem[style=thm-sf]{definition}
\declaretheorem[style=thm-sf]{example}
\declaretheorem[style=thm-sf]{corollary}
\declaretheorem[style=thm-sf]{lemma}
\declaretheorem[style=thm-sf]{fact}
\declaretheorem[style=thm-sf]{proposition}
\renewcommand{\thmcontinues}[1]{\hyperref[#1]{continued}}
\tikzstyle{every picture} += [>=stealth]
\tikzset{axis/.style={semithick, line join=miter}}
\def\@seccntformat#1{\csname the#1\endcsname.\quad}
\newcommand{\emailhref}[1]{\href{mailto:#1}{\tt #1}} 
\newcommand{\hidefastcompile}[1]{\ifthenelse{\boolean{fastcompile}}{}{#1}}
\definecolor{orange}{rgb}{0.85,0.33,0.13} 
\definecolor{green}{rgb}{0.13,0.85,0.33}
\definecolor{purple}{rgb}{0.33,0.13,0.85}
\definecolor{lime}{rgb}{0.65,0.85,0.13}
\definecolor{blue}{rgb}{0.13,0.65,0.85}
\pgfplotsset{colormap={tricolormap}{color=(orange) color=(green) color=(purple)},
  colormap={quadcolormap}{color=(orange) color=(lime) color=(blue) color=(purple)}}
\tikzstyle{rate} += [color=orange,very thick]
\pgfplotsset{compat=newest}
  \title{\textsf{\textbf{An Information-Theoretic Analysis of Nonstationary Bandit Learning}}%
  }
\author{ \\
  Seungki Min \\
  Industrial and Systems Engineering \\
  KAIST \\
  \emailhref{skmin@kaist.ac.kr} 
  \and \\
  Daniel J. Russo \\
  Graduate School of Business \\
  Columbia University \\
 \emailhref{djr2174@gsb.columbia.edu} \\
}
\date{ Initial Version: February 2023 \\
 Current Revision: December 2023}
\begin{document} 

\maketitle
\singlespacing

\begin{abstract}
In nonstationary bandit learning problems, the decision-maker must continually gather information and adapt their action selection as the latent state of the environment evolves.  
In each time period, some latent optimal action maximizes expected reward under the environment state.  
We view the optimal action sequence as a stochastic process, and take an information-theoretic approach to analyze attainable performance. 
We bound per-period regret in terms of the \emph{entropy rate of the optimal action process}.
The bound applies to a wide array of problems studied in the literature and reflects the problem's information structure through its information-ratio.

\end{abstract}


\onehalfspacing

\section{Introduction}
\label{sec:intro}

We study the problem of learning in interactive decision-making. 
Across a sequence of time periods, a decision-maker selects actions, observes outcomes, and associates these with rewards. They hope to earn high rewards, but this may require investing in gathering information. 

 Most of the literature studies stationary environments --- where the likelihood of outcomes under an action is fixed across time.\footnote{An alternative style of result lets the environment change, but tries only to compete with the best fixed action in hindsight.} Efficient algorithms limit costs required to converge on optimal behavior. We study the design and analysis of algorithms in nonstationary environments, where converging on optimal behavior is impossible. 
 
 In our model, the latent state of the environment in each time period is encoded in a parameter vector. 
 These parameters are unobservable, but evolve according to a known stochastic process. 
 The decision-maker hopes to earn high rewards by adapting their action selection as the environment evolves.
 This requires continual learning from interaction and striking a judicious balance between exploration and exploitation.  
 Uncertainty about the environment's state cannot be fully resolved before the state changes and this necessarily manifests in suboptimal decisions. 
 Strong performance is impossible under adversarial forms of nonstationarity but is possible in more benign environments. 
 Why are A/B testing, or recommender systems, widespread and effective even though nonstationarity is a ubiquitous concern? Quantifying the impact  different forms of nonstationarity have on decision-quality is, unfortunately, quite subtle.

 \paragraph{Our contributions.}

We provide a novel information-theoretic analysis that bounds the inherent degradation of decision-quality in dynamic environments. The work can be seen as a very broad generalization of \cite{russo16,russo22} from stationary to dynamic environments. To understand our results, it is important to first recognize that the latent state evolution within these environments gives rise to a latent optimal action process. This process identifies an optimal action at each time step, defined as the one that maximizes the expected reward based on the current environment state.  \cref{thm:main-result} bounds the per-period regret in terms of the \emph{entropy rate of this optimal action process.} This rate is indicative of the extent to which changes in the environment lead to unpredictable and erratic shifts in the optimal action process. Complementing  \cref{thm:main-result}, \cref{sec:lower} lower bounds regret by the entropy rate of the optimal action process in a family of nonstationarity bandit problems.

Through examples, we illustrate that the entropy rate of the action process may be much lower than the entropy rate of the latent environment states.  Subsection \ref{subsec:example} provides an illustrative example of nonstationarity, drawing inspiration from A/B testing scenarios. This distinction is further explored in the context of news recommendation problems, as detailed in \cref{ex:news-rec} and \cref{ex:news-rec-revisit}. 

We provide several generalizations of our main result. \cref{sec:sts} considers problems where the environment evolves very rapidly, so  aiming to track the optimal action sequence is futile.  The assurances of \cref{thm:main-result} become vacuous in this case. Regardless of how erratic the environment is, \cref{thm:main-result-sts}  shows that it is possible to earn rewards competitive with any weaker benchmark --- something we call a \emph{satisficing} action sequence --- whose entropy rate is low.  

\cref{sec:rate-distortion} interprets and generalizes the entropy rate through the lens of a communication problem. In this problem, an observer of the latent state aims to transmit useful information to a decision-maker. The decision-maker can implement optimal actions if and only if the observer (optimally) transmits information with bit-rate exceeding the entropy rate of the optimal action process. \cref{thm:rate-distortion} replaces the entropy rate in \cref{thm:main-result} with a rate-distortion function \citep{cover2006elements}, which characterizes the bit-rate required to enable action selection with a given per-period regret. 

This abstract theory provides an intriguing link between interactive learning problems and the limits of lossy compression. We do not engage in a full investigation, but provide a few initial connections to existing measures of the severity of nonstationarity. \cref{prop:combinatorial-bound} bounds the entropy rate (and hence the rate-distortion function) in terms the number of switches in the optimal action process, similar to \cite{auer02b} or \cite{suk22}. \cref{prop:variation-non-constructive} upper bounds the rate distortion in terms of the total-variation in arms' mean-rewards, following the influential work of \cite{besbes15}. \cref{sec:adversarial} reviews a duality between stochastic and adversarial models of nonstationarity which shows these information-theoretic techniques can be used to study adversarial models.  

Our general bounds also depend on the algorithm's information ratio. While the problem's entropy rate (or rate-distortion function) bounds the rate at which the decision-maker must acquire information, the information-ratio bounds the per-period price of acquiring each bit of information. Since its introduction in the study of stationary learning problems in \cite{russo16}, the information-ratio has been shown to properly capture the complexity of learning in a range of widely studied problems. Recent works link it to generic limits on when efficient learning is possible \cite{lattimore2022minimax, foster2022on}.  
\cref{sec:information-ratio-bounds-old-and-new} details bounds on the information ratio that apply to many of the most important sequential learning problems --- ranging from contextual bandits to online matching. Through our analysis, these imply regret bounds in nonstationary variants of these problems. 

This work emphasizes understanding of the limits of attainable performance. 
Thankfully, most results apply to variants of Thompson sampling (TS) \citep{thompson1933likelihood}, one of the most widely used learning algorithms in practice. 
In some problems, TS is far from optimal, and better bounds are attained with Information-Directed Sampling \citep{russo18}. 

A short conference version of this paper appeared in  \citep{Min2023information}. In addition to providing a more thorough exposition, this full-length paper provides substantive extensions to that initial work, including the treatment of satisficing in  \cref{sec:sts}, the connections with rate-distortion theory in  \cref{sec:rate-distortion}, and the connections with adversarial analysis in \cref{sec:adversarial}.

 \subsection{An illustrative Bayesian model of nonstationarity}
 \label{subsec:example}

Consider a multi-armed bandit environment where two types of nonstationarity coexist  -- a common variation that affects the performance of all arms, and idiosyncratic variations that affect the performance of individual arms separately.
More explicitly, let us assume that the mean reward of arm $a$ at time $t$ is given by
$$ \mu_{t,a} = \theta_t^{\rm cm} + \theta_{t,a}^{\rm id}, $$
where $( \theta_t^{\rm cm} )_{t \in \mathbb{N}}$ and $( \theta_{t,a}^{\rm id} )_{t \in \mathbb{N}}$'s are latent stochastic processes describing common and idiosyncratic disturbances.
While deferring the detailed description to \cref{app:numerical}, we introduce two hyperparameters $\tau^{\rm cm}$ and $\tau^{\rm id}$ in our generative model to control the time scale of these two types of variations.\footnote{
We assume that $( \theta_t^{\rm cm} )_{t \in \mathbb{N}}$ is a zero-mean Gaussian process satisfying $\text{Cov}( \theta_s^{\rm cm}, \theta_t^{\rm cm} ) = \exp\left( -\frac{1}{2} \left( \frac{t-s}{\tau^{\rm cm}} \right)^2 \right)$ so that $\tau^{\rm cm}$ determines the volatility of the process.
Similarly, the volatility of $( \theta_{t,a}^{\rm id} )_{t \in \mathbb{N}}$ is determined by $\tau^{\rm id}$.}
 
\begin{figure}[ht]
\begin{center}
\centerline{\includegraphics[width=\columnwidth]{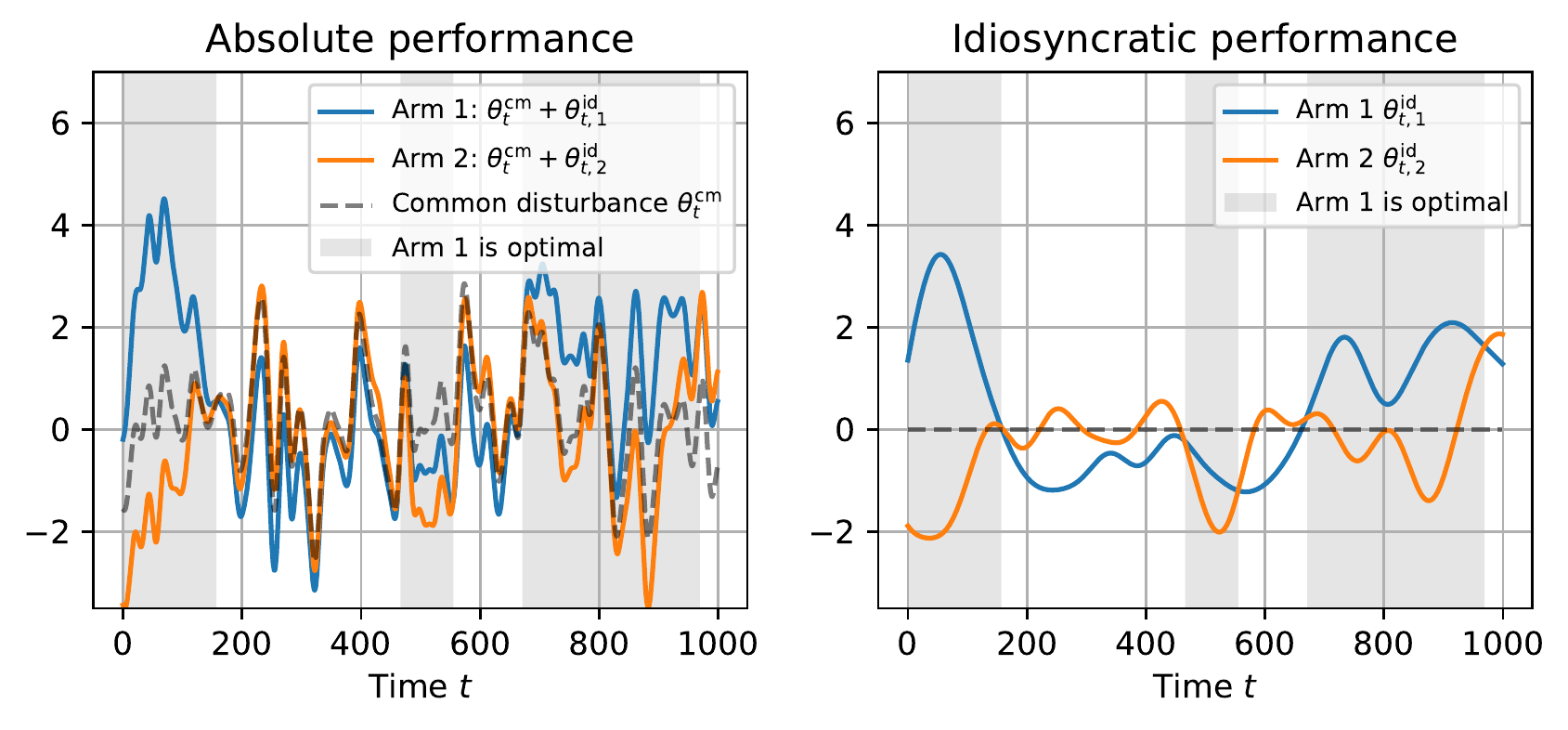}}
\caption{A two-arm bandit environment with two types of nonstationarity -- a common variation $( \theta_t^{\rm cm} )_{t \in \mathbb{N}}$ generated with a time-scaling factor $\tau^{\rm cm}=10$, and idiosyncratic variations $( \theta_{t,a}^{\rm id} )_{t \in \mathbb{N}, a \in \mathcal{A}}$ generated with a time-scaling factor $\tau^{\rm id}=50$.
While absolute performance of two arms are extremely volatile (left), their idiosyncratic performances are relatively stable (right).}
\label{fig:example-illustration}
\end{center}
\vskip -0.2in
\end{figure}

Inspired by real-world A/B tests \citep{wu2022non}, we imagine a two-armed bandit instance involving a common variation that is much more erratic than idiosyncratic variations.
Common variations reflect exogenous shocks to user behavior which impacts the reward under all treatment arms. 
\cref{fig:example-illustration} visualizes such an example, a sample path generated with the choice of $\tau^{\rm cm}=10$ and $\tau^{\rm id}=50$.
Observe that the optimal action $A_t^*$ has changed only five times throughout 1,000 periods.
Although that the environment itself is highly nonstationary and unpredictable due to the common variation term, the optimal action sequence $(A_t^*)_{t \in \mathbb{N}}$ is relatively stable and predictable since it depends only on the idiosyncratic variations.

Now we ask --- \emph{How difficult is this learning task? Which type of nonstationarity determines the difficulty?}
A quick numerical investigation shows that the problem's difficulty is mainly determined by the frequency of optimal action switches, rather than volatility of common variation.

\begin{figure}[ht]
\begin{center}
\centerline{\includegraphics[width=\columnwidth]{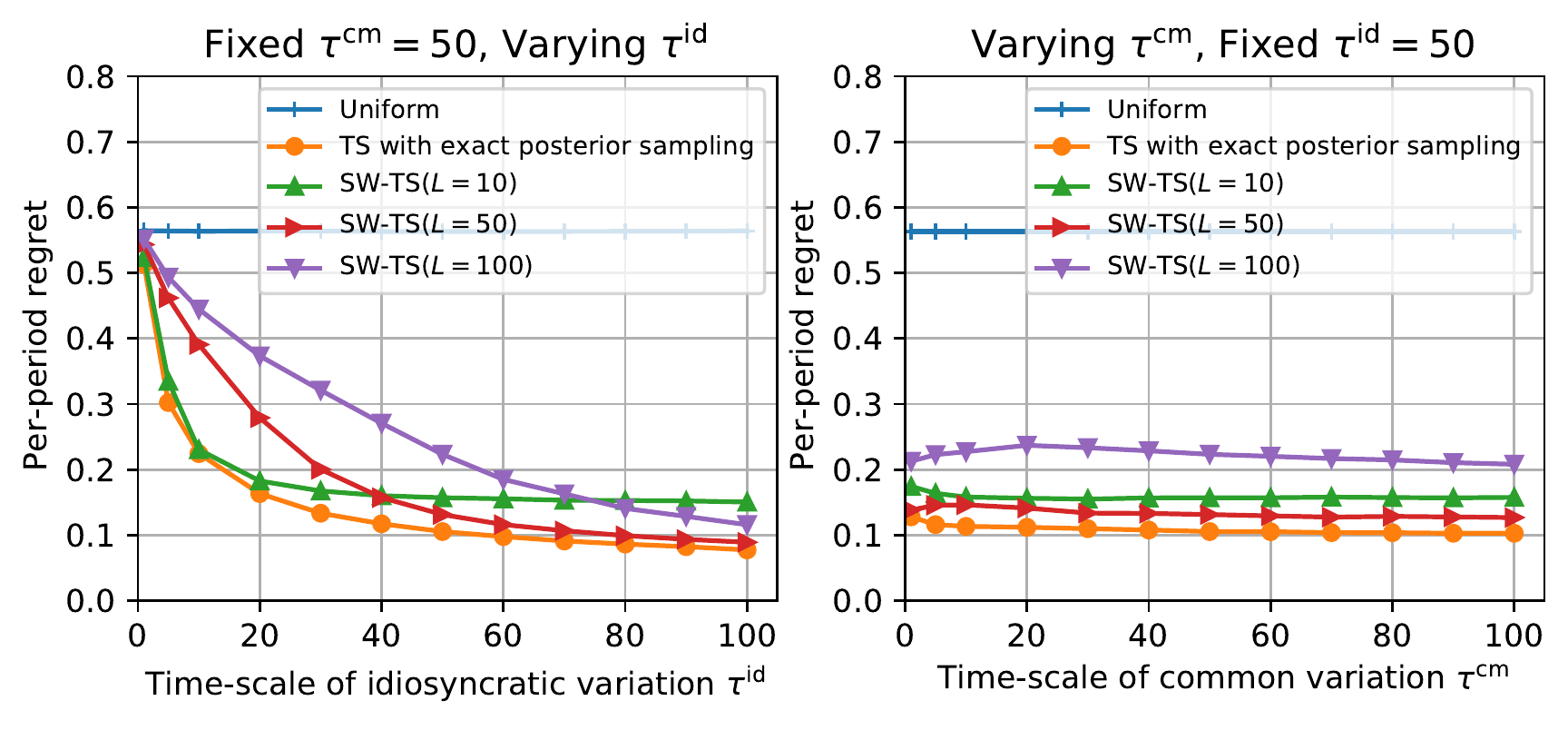}}
\caption{Performance of algorithms in two-armed bandit environments, with difference choices of time-scaling factors $\tau^{\rm cm}$ (common variation) and $\tau^{\rm id}$ (idiosyncratic variations). 
Each data point reports per-period regret averaged over 1,000 time periods and 1,000 runs of simulation.}
\label{fig:example-performance}
\end{center}
\vskip -0.2in
\end{figure}

See \cref{fig:example-performance}, where we report the effect of $\tau^{\rm cm}$ and $\tau^{\rm id}$ on the performance of several bandit algorithms (namely, Thompson sampling with exact posterior sampling,\footnote{
	In order to perform exact posterior sampling, it exploits the specified nonstationary structure as well as the values of $\tau^{\rm cm}$ and $\tau^{\rm id}$.
} and Sliding-Window TS that only uses recent $L \in \{10,50,100\}$ observations; see \cref{app:numerical} for the details).
Remarkably, their performances appear to be sensitive only to $\tau^{\rm id}$ but not to $\tau^{\rm cm}$, highlighting that nonstationarity driven by common variation is benign to the learner.

We remark that our information-theoretic analyses predict this result.
\cref{thm:main-result} shows that the complexity of a nonstationary environment can be sufficiently characterized by the entropy rate of the optimal action sequence, which depends only on $\tau^{\rm id}$ but not on $\tau^{\rm cm}$ in this example.
\cref{prop:effective-horizon-bound} further expresses the entropy rate in terms of effective horizon, which corresponds to $\tau^{\rm id}$ in this example. (See \cref{ex:two-armed-bandit-revisited}, where we revisit this two-armed bandit problem.)

\subsection{Comments on the use of prior knowledge}
A substantive discussion of Bayesian, frequentist, and adversarial perspectives on decision-making uncertainty is beyond the scope of this paper. 
We make two quick observations. 
First, where does a prior like the one in \cref{fig:example-illustration} come from? 
One answer is that company may run many thousands of A/B tests, and an informed prior may let them transfer experience across tests  \citep{azevedo2019empirical}. 
In particular, experience with past tests may let them calibrate $\tau^{\rm id}$, or form hierarchical prior where $\tau^{\rm id}$ is also random.
\cref{ex:news-rec} in \cref{subsubsec:news-rec} illustrates the possibility of estimating prior hyperparameters in a news article recommendation problem.
Second, Thompson sampling with a stationary prior is perhaps the most widely used bandit algorithm. 
One might view the model in Section \ref{subsec:example} as a more conservative way of applying TS that guards against a certain magnitude of nonstationarity.

\subsection{Literature review}

We adopt \emph{Bayesian viewpoints} to describe nonstationary environments: changes in the underlying reward distributions (more generally, changes in outcome distributions) are driven by a stochastic process.
Such a viewpoint dates back to the earliest work of \citet{whittle88} which introduces the term `restless bandits' and has motivated subsequent work \cite{slivkins08, chakrabarti08, jung19}.
On the other hand, since Thompson sampling (TS) has gained its popularity as a Bayesian bandit algorithm, its variants have been proposed for nonstationary settings accordingly: e.g., Dynamic TS \citep{gupta11}, Discounted TS \citep{raj17}, Sliding-Window TS \citep{trovo20}, TS with Bayesian changepoint detection \citep{mellor13, ghatak20}.
Although the Bayesian framework can flexibly model various types of nonstationarity, this literature rarely presents performance guarantees that apply to a broad class of models.

To provide broad performance guarantees, our analysis adopts an \emph{information-theoretic approach} introduced in \citet{russo16} and extended in \citet{russo22}. While past work applies this framework to analyze learning in stationary environments, we show that it can be gracefully extended to dynamic environments. In this extension, bounds that depend on the entropy of \emph{the static optimal action} \citep{russo16} are replaced with the \emph{entropy rate of the optimal action process}; rate-distortion functions are similarly extended. A strength of this approach is enables one to leverage a wealth of existing bounds on the information-ratio, including for $k$-armed bandits/linear bandits/combinatorial bandits \citep{russo16}, contextual bandits \citep{neu22}, logistic bandits \citep{dong19}, bandits with graph-based feedback \citep{liu18}, convex optimization \citep{bubeck2016multi,lattimore20}, etc.

  \citet{liu23} recently proposed an algorithm called Predictive Sampling for nonstationarity bandit learning problems. At a conceptual level, their algorithm is most closely related to our  \cref{sec:sts},  in which Thompson sampling style algorithms are modified to explore less aggressively in the face of rapid dynamics. The authors introduce a modified information-ratio and use it to successfully display the benefits of predictive sampling over vanilla Thompson sampling. Our framework does not allow us to study predictive sampling, but enables us to establish a wide array of theoretical guarantees for other algorithms that are not provided in \cite{liu23}. We leave a detailed comparison in \cref{app:predictive-sampling}.

Recent work of \cite{chen23} develops an algorithm regret analysis for $k$-armed bandit problems when arm means (i.e. the latent states) evolve according to an auto-regressive process. This is a special case of our formulation and a detailed application application of our  \cref{thm:main-result} or \cref{thm:rate-distortion} to such settings would be an interesting avenue for future research.  One promising route is to modify the argument in \cref{ex:two-armed-bandit-revisited}. 

Most existing theoretical studies on nonstationary bandit experiments adopt adversarial viewpoints in the modeling of nonstationarity. Implications of our information-theoretic regret bounds in adversarial environments are discussed in \cref{sec:adversarial}. These works typically fall into two categories -- ``switching environments'' and ``drifting environments''.  

\emph{Switching environments} consider a situation where underlying reward distributions change at unknown times (often referred to as changepoints or breakpoints).
Denoting the total number of changes over $T$ periods by $N$, it was shown that the cumulative regret $\tilde{O}(\sqrt{N T})$ is achievable: e.g., Exp3.S \citep{auer02a, auer02b}, Discounted-UCB \citep{kocsis06}, Sliding-Window UCB \citep{garivier2011upper}, and more complicated algorithms that actively detect the changepoints \citep{auer19, chen19}. While those results depend on the number of switches in the underlying environment, the work of \citet{auer02b} already showed that it is possible to achieve a bound of  $\tilde{O}(\sqrt{S T})$ where $S$ only counts the number of switches in identity of the best-arm.  More recent studies design algorithms that adapt to an unknown number of switches \citep{yadkori22, suk22}.
Our most comparable result, which follows from \cref{thm:main-result} with \cref{prop:combinatorial-bound}, is a bound on the expected cumulative regret of Thompson sampling is $\tilde{O}(\sqrt{S T})$, in a wide range of problems beyond $k$-armed bandits (see Section \ref{sec:information-ratio-bounds}). 

Another stream of work considers \emph{drifting environments}.
Denoting the total variation in the underlying reward distribution by $V$ (often referred to as the variation budget), 
it was shown that the cumulative regret $\tilde{O}(V^{1/3} T^{2/3})$ is achievable \citep{besbes14, besbes15, cheung19}.
Our most comparable result is given in  \cref{subsec:variation-budget}, which recovers the Bayesian analogue of this result by bounding the rate distortion function. In fact, we derive a stronger regret bound that ignores variation that is common across all arms. 
A recent preprint by \cite{jia2023smooth} reveals that stronger bounds are possible under the additional smoothness assumptions on environment variations. It is an interesting open question to study whether one can derive similar bounds by bounding the rate-distortion function.

\section{Problem Setup}
\label{sec:problem}

A decision-maker interacts with a changing environment across rounds $t \in \mathbb{N}:= \{1,2,3,\ldots\}$. 
In period $t$, the decision-maker selects some action $A_t$ from a finite set $\mathcal{A}$, observes an outcome $O_t$, and associates this with reward $R_t = R(O_t, A_t)$ that depends on the outcome and action through a known utility function $R(\cdot)$. 

There is a function $g$, an i.i.d sequence of disturbances $W=(W_t)_{t\in \mathbb{N}}$, and a sequence of latent environment states $\theta=(\theta_t)_{t\in \mathbb{N}}$ taking values in $\Theta$, such that outcomes are determined as
\begin{equation}\label{eq:outcome-generation}
O_t=g(A_t, \theta_t, W_t ). 
\end{equation}
Write potential outcomes as $O_{t,a}=g(a, \theta_t, W_t)$ and potential rewards as $R_{t,a}= R(O_{t,a}, a)$. Equation \eqref{eq:outcome-generation} is equivalent to specifying a known probability distribution over outcomes for each choice of action and environment state. 

The decision-maker wants to earn high rewards even as the environment evolves, but cannot directly observe the environment state or influence its evolution. 
Specifically, the decision-maker's actions are determined by some choice of policy $\pi=(\pi_1, \pi_2,\ldots)$. 
At time $t$, an action $A_t= \pi_t(\mathcal{F}_{t-1}, \tilde{W}_t)$ is a function of the observation history $\mathcal{F}_{t-1}=(A_1, O_1, \ldots, A_{t-1}, O_{t-1})$ and an internal random seed $\tilde{W}_t$ that allows for randomness in action selection. 
Reflecting that the seed is exogenously determined, assume $\tilde{W}=(\tilde{W}_t)_{t\in \mathbb{N}}$ is jointly independent of the outcome disturbance process $W$ and state process
 $\theta=(\theta_t)_{t\in \mathbb{N}}$.   
That actions do not influence the environment's evolution can be written formally through the conditional independence relation $(\theta_{s})_{s\geq t+1}  \perp \mathcal{F}_t \mid (\theta_{\ell})_{\ell \leq t}$.

The decision-maker wants to select a policy $\pi$ that accumulates high rewards as this interaction continues. 
They know all probability distributions and functions listed above, but are uncertain about how environment states will evolve across time. 
To perform `well', they need to continually gather information about the latent environment states and carefully balance exploration and exploitation. 

Rather than measure the reward a policy generates, it is helpful to measure its regret. 
We define the \emph{$T$-period per-period regret} of a policy $\pi$ to be
\[
	\bar{\Delta}_T(\pi) := \frac{ \mathbb{E}_{\pi}\left[ \sum_{t=1}^T \big(  R_{t,A_t^*} - R_{t,A_t} \big) \right] }{T},
\]
where the latent optimal action $A_t^*$ is a function of the latent state $\theta_t$ satisfying $A_t^* \in \argmax_{a \in \mathcal{A}} \mathbb{E}[R_{t,a} \mid \theta_t]$.
We further define the \emph{regret rate} of policy $\pi$ as its limit value,
\[
	\bar{\Delta}_\infty(\pi) := \limsup_{T \to \infty} \, \bar{\Delta}_T(\pi).
\]
It measures the (long-run) per-period degradation in performance due to uncertainty about the environment state.

\begin{remark} \label{rem:conjecture}
	The use of a limit supremum and Ces\`{a}ro averages is likely unnecessary under some technical restrictions. 
	For instance, under Thompson sampling applied to Examples~\ref{ex:k-armed-bandit}--\ref{ex:contextual-bandit}, if the latent state process $(\theta_t)_{t \in \mathbb{N}}$ is ergodic, we conjecture that $\bar{\Delta}_\infty(\pi)= \lim_{t\to \infty} \mathbb{E}_{\pi}\left[ R_{t,A_t^*} - R_{t,A_t} \right]$. 
\end{remark}

Our analysis proceeds under the following assumption, which is standard in the literature. 
\begin{assumption} \label{ass:subgaussian}
	There exists $\sigma$ such that, conditioned on $\mathcal{F}_{t-1}$, $R_{t,a}$ is sub-Gaussian with variance proxy $\sigma^2$.  
\end{assumption}

\subsection{`Stationary processes' in `nonstationary bandits'}
The way the term `nonstationarity' is used in the bandit learning literature could cause confusion as it conflicts with the meaning of `stationarity' in the theory of stochastic process, which we use  elsewhere in this paper. 
	\begin{definition}\label{def:stationary}
		A stochastic process $X=(X_t)_{t\in \mathbb{N}}$ is (strictly) stationary if for each integer $t$, the random vector $(X_{1+m}, \ldots, X_{t+m})$ has the same distribution for each choice of $m$.  
	\end{definition}
`Nonstationarity', as used in the bandit learning literature, means that \emph{realizations} of the latent state $\theta_t$ may differ at different time steps. 
The decision-maker can gather information about the current state of the environment, but it may later change.  
Nonstationarity of the stochastic process $(\theta_{t})_{t\in \mathbb{N}}$, in the language of probability theory, arises when apriori there are predictable differences between environment states at different timesteps -- e.g., if time period $t$ is nighttime then rewards tend to be lower than daytime. 
It is often clearer to model predictable differences like that through contexts, as in \cref{ex:contextual-bandit}.

\subsection{Examples}
\subsubsection{Modeling a range of interactive decision-making problems}

Many interactive decision-making problems can be naturally written as special cases of our general protocol, where actions generate outcomes that are associated with rewards. 

Our first example describes a bandit problem with independent arms, where outcomes generate information only about the selected action.

\begin{example}[$k$-armed bandit]\label{ex:k-armed-bandit}
	Consider a website who can display one among $k$ ads at a time and gains one dollar per click.
	For each ad $a \in [k] := \{1,\ldots,k\}$, the potential outcome/reward $O_{t,a} = R_{t,a} \sim \text{Bernoulli}(\theta_{t,a})$ is a random variable representing whether the ad $a$ is clicked by the $t^\text{th}$ visitor if displayed, where $\theta_{t,a} \in [0,1]$ represents its click-through-rate.
	The platform only observes the reward of the displayed ad, so $O_t = R_{t,A_t}$.
\end{example}

Full information online optimization problems fall at the other extreme of $k$-armed bandit problems. 
There the potential observation $O_{t,a}$ does not depend on the chosen action $a$, so purposeful information gathering is unnecessary.
The next example was introduced by \citet{cover91} and motivates such scenarios. 
\begin{example}[Log-optimal online portfolios]\label{ex:full-information} 
Consider a small trader who has no market impact. 
In period $t$ they have wealth $W_t$ which they divide among $k$ possible investments. 
The action $A_{t}$ is chosen from a feasible set of probability vectors, with $A_{t,i}$ denoting the proportion of wealth invested in stock $i$. 
The observation is $O_t\in \mathbb{R}^k_+$ where $O_{t,i}$ is the end-of-day value of $\$1$ invested in stock $i$ at the start of the day and the distribution of $O_t$ is parameterized by $\theta_t$.
Because the observation consists of publicly available data, and the trader has no market impact, $O_{t}$ does not depend on the investor's decision. Define the reward function $R_t=\log\left( O_t^\top A_t \right)$. Since wealth evolves according to the equation $W_{t+1} = \left(O_t^\top A_t\right) W_t$,
$$\sum_{t=1}^{T-1} R_t = \log(W_T/ W_1) .$$
\end{example}
Many problems lie in between these extremes. 
We give two examples. 
The first is a matching problem. Many pairs of individuals are matched together and, in addition to the cumulative reward, the decision-maker observes feedback on the quality of outcome from each individual match. 
This kind of observation structure is sometimes called ``semi-bandit'' feedback \cite{audibert14}. 
\begin{example}[Matching] \label{ex:matching}
	Consider an online dating platform with two disjoint sets of individuals $\mathcal{M}$ and $\mathcal{W}$.
	On each day $t$, the platform suggests a matching of size $k$, $A_t \subset \{ (m,w): m \in \mathcal{M}, w \in \mathcal{W} \}$ with $|A_t| \leq k$.
	For each pair $(m,w)$, their match quality is given by $Q_{t,(m,w)}$, where the distribution of $Q_t = (Q_{t,(m,w)}: m \in \mathcal{M}, w \in \mathcal{W})$ is parameterized by $\theta_t$.
	The platform observes the quality of individual matches, $O_t = \big( Q_{t,(m,w)} : (m,w) \in A_t \big)$, and earns their average, $R_t = \frac{1}{k}\sum_{(m,w) \in A_t} Q_{t,(m,w)}$.
\end{example}
Our final example is a contextual bandit problem.
Here an action is itself more like a policy --- it is a rule for assigning treatments on the basis of an observed context. 
Observations are richer than in the $k$-armed bandit. 
The decision-maker sees not only the reward a policy generated but also the context in which it was applied. 
\begin{example}[Contextual bandit] \label{ex:contextual-bandit}
	Suppose that the website described in \cref{ex:k-armed-bandit} can now access additional information about each visitor, denoted by $X_t \in \mathcal{X}$. 
	The website observes the contextual information $X_t$, chooses an ad to display, 
	and then observes whether the user clicks. 
	To represent this task using our general protocol, we let the decision space $\mathcal{A}$ be the set of mappings from the context space $\mathcal{X}$ to the set of ads $\{1,\ldots, k\}$, the decision  $A_t \in \mathcal{A}$  be a personalized advertising rule, and the observation $O_t=(X_t, R_t)$ contains the observed visitor information and the reward from applying the ad $A_t(X_t)$. 
	Rewards are drawn according to $R_t \mid X_t, A_t, \theta_t \sim  \text{Bernoulli}( \phi_{\theta_t}(X_t, A_t(X_t) ) )$, where $\phi_\theta:\mathcal{X}\times [k] \to [0,1]$ is a parametric click-through-rate model.
	Assume $X_{t+1} \perp (A_t, \theta) \mid X_t, \mathcal{F}_{t-1}$. 
	This assumption means that advertising decisions cannot influence the future contexts and that parameters of the click-through rate model $\theta=(\theta_t)_{t\in \mathbb{N}}$ cannot be inferred passively by observing contexts.
\end{example}

\subsubsection{Unconventional models of nonstationarity}\label{subsubsec:news-rec}
\cref{subsec:example} already presented one illustration of a $k$-armed bandit problem with in which arms' mean-rewards change across time. This section illustrates a very different kind of dynamics that can also be cast as a special case of our framework .

The next example describes a variant of the $k$-armed bandit problem motivated by a news article recommendation problem treated in the seminal work of \cite{chapelle11}. We isolate one source of nonstationarity: the periodic replacement of news articles with fresh ones. 
Each new article is viewed as a draw from a  population distribution; experience recommending other articles gives an informed prior on an article's click-through-rate, but resolving residual uncertainty requires exploration. Because articles are continually refreshed, the system must perpetually balance exploration and exploitation. 

Our treatment here is purposefully stylized. A realistic treatment might include features of the released articles which signal its click-through-rate, features of arriving users, time-of-day effects, and more general article lifetime distributions.  
\begin{example}[News article recommendation]\label{ex:news-rec}
         At each user visit, a news website recommends one out of a small pool of hand-picked news articles. The pool is dynamic -- old articles are removed and new articles are added to the pool in an endless manner. An empirical investigation on a dataset from Yahoo news (Yahoo! Front Page User Click Log dataset) shows that, as visualized in \cref{fig:example-article-pool}, the pool size varies between 19 and 25 while one article stays in the candidate pool 17.9 hours in average.
        
\begin{figure}[ht]
\begin{center}
\centerline{\includegraphics[width=\columnwidth]{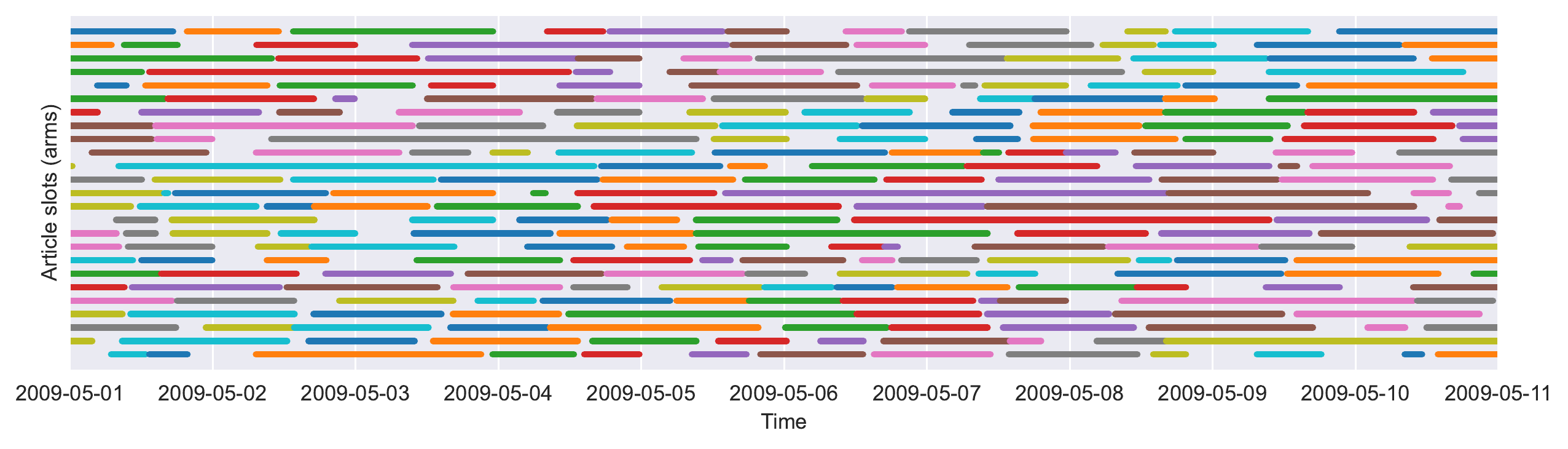}}
\caption{Addition/removal times of 271 articles recorded in ``Yahoo! Front Page User Click Log'' dataset.
Each horizontal line segment represents the time period on which an article was included in the candidate pool, and these line segments are packed into 25 slots.
Colors are randomly assigned.
	}
\label{fig:example-article-pool}
\end{center}
\vskip -0.2in
\end{figure}
We can formulate this situation with $k=25$ article slots (arms), each of which contains one article at a time.
Playing an arm $i \in [k]$ corresponds to recommending the article in slot $i$.
The article in a slot gets replaced with a new article at random times; the decision-maker observes that it was replaced but is uncertain about the click-through rate of the new article. 

Let $\theta_t \in [0,1]^{k}$ denote the vector of click-through rates across $k$ article slots, where $R_{t,a} \mid \theta_{t,a} \sim \text{Bernoulli}(\theta_{t,a})$. The observation at time $t$ when playing arm $a$ is $O_{t,a} = (R_{t,a}, \chi_t)$, where $\chi_t \in \{0,1\}^{k}$ indicates which article slots were updated. If $\chi_{t,a}=0$, then $\theta_{t+1,a}=\theta_{t,a}$ almost surely. If $\chi_{t,a}=1$, then $\theta_{t+1,a}$ is drawn independently from the past as 
\[
    \theta_{t+1,a} \mid (\chi_{t,a}=1)  \sim Q. 
\]
As this process repeats across many articles, the population click-through-rate distribution $Q$ can be learned from historical data. For purposes of analysis, we simply assume $Q$ is known.

Assume that $\chi_{t,a}\sim {\rm Bernoulli}(1/\tau)$ is independent across times $t$ and arms $a$; this means that each article's lifetime follows a  Geometric distribution with mean lifetime $\tau$.  (Assume also that  $(\chi_t)_{t\in \mathbb{N}}$ is independent of the disturbance process $W$ and random seeds $\tilde{W}$). 
\end{example}

\section{Thompson sampling}\label{sec:TS}

While this paper is primarily focused on the limits of attainable performance, 
many of our regret bounds apply to Thompson sampling \citep{thompson1933likelihood}, one of the most popular bandit algorithms. 
This section explains how to generalize its definition to dynamic environments.
 
We denote the Thompson sampling policy by $\pi^{\rm TS}$ and define it by the probability matching property:
\begin{equation}\label{eq:ts}
	\mathbb{P}(A_t=a \mid \mathcal{F}_{t-1})=\mathbb{P}(A_t^*=a \mid \mathcal{F}_{t-1}), 
\end{equation}
which holds for all $t\in \mathbb{N}$, $a\in \mathcal{A}$. Actions are chosen by sampling from the posterior distribution of the optimal action at the current time period. This aligns with the familiar definition of Thompson sampling in static environments where $A_1^*=\cdots = A_T^*.$

Practical implements of TS sampling from the posterior distribution of the optimal action by sampling plausible latent state $\hat{\theta}_t\sim \mathbb{P}(\theta_t = \cdot \mid \mathcal{F}_{t-1})$ and picking the action $A_t \in \argmax_{a\in \mathcal{A}} \E[R_{t,a} \mid \theta_t = \hat{\theta}_t]$.

\paragraph{How the posterior gradually forgets stale data.}

The literature on nonstationary bandit problems often constructs mean-reward estimators that explicitly throw away or discount old data \citep{kocsis06, garivier2011upper}. Under Bayesian algorithms like TS, forgetting of stale information happens \emph{implicitly} through the formation of posterior distributions. 
To provide intuition, we explain that proper posterior updating resembles an exponential moving average estimator under a simple model of nonstationarity.

Consider  $k$-armed bandit with Gaussian noises where each arm's mean-reward process follows an auto-regressive process with order $1$.
That is,
$$ \theta_{t,a} = \alpha \cdot \theta_{t-1,a} + \xi_{t,a}, \quad R_{t,a} = \theta_{t,a} + W_{t,a}, $$
where $\xi_{t,a} \sim \mathcal{N}(0, \sigma_\xi^2)$ are i.i.d. innovations in the mean-reward process, $W_{t,a} \sim \mathcal{N}(0, \sigma_W^2)$ are i.i.d. noises in the observations, and $\alpha \in (0,1)$ is the autoregression coefficient.
Assume $\theta_{1,a} \sim \mathcal{N}( 0, \frac{\sigma_\xi^2}{1-\alpha^2} )$ so that the mean-reward process is stationary.

\cref{alg:AR-bandit} implements Thompson sampling in this environment.
It uses a (simple) Kalman filter to obtain the posterior distribution. 
Past data is ``forgotten'' at a geometric rate over time, and the algorithm shrinks posterior mean estimates toward $0$ (i.e., the prior mean) if few recent observations are available. 

\begin{algorithm}
	\caption{Thompson sampling for AR(1)-bandit}
        \label{alg:AR-bandit}
	\KwIn{AR(1) process parameters $(\alpha, \sigma_\xi^2)$, noise variance $\sigma_W^2$.}
	   Initialize $\hat{\mu}_{0,a} \gets 0, \hat{\nu}_{0,a} \gets \frac{\sigma_\xi^2}{1-\alpha^2}$ for each $a \in \mathcal{A}$\;
    	
	\For{$t=1,2,\ldots$}{
            Diffuse posterior as it proceeds one time step: for each $a \in \mathcal{A}$,
            \begin{equation}
                \hat{\mu}_{t,a} \gets \alpha \hat{\mu}_{t-1,a}, \quad\hat{\nu}_{t,a} \gets \alpha^2 \hat{\nu}_{t-1,a} + \sigma_\xi^2
            \end{equation}

            Sample $\tilde{\theta}_a \sim \mathcal{N}( \hat{\mu}_{t,a}, \hat{\nu}_{t,a} )$ for each $a \in \mathcal{A}$\;
  
		Play $A_t = \argmax_a \tilde{\theta}_a$, and observe $R_t$\;

            Additionally update posterior for the new observation: for $a = A_t$ only,
	\begin{equation}
            \hat{\mu}_{t,a} \gets \frac{ \hat{\nu}_{t,a}^{-1} \times \hat{\mu}_{t,a} + \sigma_W^{-2} \times R_t }{ \hat{\nu}_{t,a}^{-1} + \sigma_W^{-2} }, \quad \hat{\nu}_{t,a} \gets \frac{1}{ \hat{\nu}_{t,a}^{-1} + \sigma_W^{-2} }
        \end{equation}\;
	}
\end{algorithm}

\section{Information Theoretic Preliminaries}

The entropy of a discrete random variable $X$, defined by $H(X) := -\sum_{x} \mathbb{P}(X=x)\log(\mathbb{P}(X=x))$,  measures the uncertainty in its realization. 
The entropy rate of a stochastic process $(X_1, X_2, \ldots)$ is the rate at which entropy of the partial realization $(X_1, \ldots, X_t)$ accumulates as $t$ grows.
\begin{definition}
	The \emph{$T$-period entropy rate} of a stochastic process $X=(X_t)_{t\in \mathbb{N}}$, taking values in a discrete set $\mathcal{X}$, is
	\begin{align*}
		\bar{H}_T(X) := \frac{ H\left( ~[X_1, \ldots, X_T]~ \right) }{T}.
	\end{align*}
	The \emph{entropy rate} is defined as its limit value:
	\[
		\bar{H}_\infty(X) := \limsup_{T \to \infty} \, \bar{H}_T(X).
	\]
\end{definition}

We provide some useful facts about the entropy rate:
\begin{fact}
    $0 \leq \bar{H}_T(X) \leq \log(|\mathcal{X}|)$.
\end{fact}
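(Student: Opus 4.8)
The plan is to reduce both inequalities to elementary properties of Shannon entropy applied to the single random vector $Y := [X_1,\ldots,X_T]$, which takes values in the finite set $\mathcal{X}^T$. Since $\bar{H}_T(X) = H(Y)/T$ with $T>0$, it suffices to establish $0 \le H(Y) \le T\log|\mathcal{X}|$ and then divide by $T$.

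For the lower bound, I would observe that in the definition $H(Y) = -\sum_y \PR(Y=y)\log \PR(Y=y)$ every summand $-p\log p$ is nonnegative for $p\in[0,1]$, with the usual convention $0\log 0 = 0$. Hence $H(Y)\ge 0$ for any discrete random variable, and in particular $\bar{H}_T(X)\ge 0$.

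For the upper bound, I would invoke the standard fact that among all probability distributions supported on a finite set, the uniform distribution maximizes entropy, attaining the value $\log$ of the cardinality of the support. This follows from Jensen's inequality applied to the concave function $\log$ (equivalently, from nonnegativity of the Kullback--Leibler divergence between the law of $Y$ and the uniform law on $\mathcal{X}^T$). Since $Y$ is supported on $\mathcal{X}^T$, which has $|\mathcal{X}|^T$ elements, we get $H(Y)\le \log\left(|\mathcal{X}|^T\right) = T\log|\mathcal{X}|$, and dividing by $T$ gives $\bar{H}_T(X)\le \log|\mathcal{X}|$.

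There is no substantive obstacle here; the only points worth a sentence of care are the convention $0\log 0 = 0$ underlying both steps and, optionally, a remark that both bounds are tight --- the lower bound is attained by any a.s. constant process and the upper bound by an i.i.d. process with $X_t$ uniform on $\mathcal{X}$ --- so the stated range cannot be improved.
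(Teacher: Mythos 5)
Your argument is correct and is exactly the standard reasoning the paper implicitly relies on (it states this as a Fact without proof): nonnegativity of entropy plus the maximum-entropy bound $H([X_1,\ldots,X_T])\le \log(|\mathcal{X}|^T)=T\log|\mathcal{X}|$, followed by division by $T$. Nothing further is needed.
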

\begin{fact}
    By chain rule,
    \[
        \bar{H}_T(X) = \frac{1}{T} \sum_{t=1}^{T} H(X_t | X_{t-1}, \ldots, X_1 ).
    \]
    If $X$ is a stationary stochastic process, then 
	\begin{equation}\label{eq:entropy-rate-stationary}
		\bar{H}_\infty(X) = \lim_{t\to \infty}  H(X_t | X_{t-1}, \ldots, X_1 ).
	\end{equation}
\end{fact}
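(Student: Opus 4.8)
The plan is to establish the two assertions in turn. The first identity is nothing more than the chain rule for entropy applied to the finite collection $X_1, \ldots, X_T$: one has $H(X_1, \ldots, X_T) = \sum_{t=1}^{T} H(X_t \mid X_{t-1}, \ldots, X_1)$, with the convention that the $t=1$ summand is the unconditional entropy $H(X_1)$. Dividing both sides by $T$ and recalling the definition of $\bar{H}_T(X)$ gives the claimed formula. All terms are well defined and finite by the preceding fact, $0 \le H(X_t \mid X_{t-1}, \ldots, X_1) \le \log|\mathcal{X}|$.

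For the second assertion, I would first show that under stationarity the sequence $a_t := H(X_t \mid X_{t-1}, \ldots, X_1)$ is non-increasing in $t$. Since conditioning on more variables cannot increase entropy, $H(X_{t+1} \mid X_t, \ldots, X_1) \le H(X_{t+1} \mid X_t, \ldots, X_2)$. By \cref{def:stationary}, the vector $(X_2, \ldots, X_{t+1})$ has the same distribution as $(X_1, \ldots, X_t)$; as conditional entropy is a functional of the joint law alone, this yields $H(X_{t+1} \mid X_t, \ldots, X_2) = H(X_t \mid X_{t-1}, \ldots, X_1) = a_t$. Hence $a_{t+1} \le a_t$. Being non-increasing and bounded below by $0$, the sequence $(a_t)$ converges to some limit $a_\infty := \lim_{t\to\infty} a_t \ge 0$.

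Finally I would invoke the Ces\`{a}ro mean lemma: if a real sequence $a_t$ converges to $a_\infty$, then the running averages $\frac{1}{T}\sum_{t=1}^{T} a_t$ converge to $a_\infty$ as well. Combining this with the chain-rule identity from the first part, $\bar{H}_T(X) = \frac{1}{T}\sum_{t=1}^{T} a_t \to a_\infty$. In particular the limit of $\bar{H}_T(X)$ exists, so the $\limsup$ in the definition of $\bar{H}_\infty(X)$ coincides with it, which is exactly \eqref{eq:entropy-rate-stationary}.

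I do not expect a real obstacle here: the three ingredients --- the chain rule, the combination of ``conditioning reduces entropy'' with stationarity to get monotonicity, and the Ces\`{a}ro lemma --- are all standard. The only point deserving a line of care is the passage $H(X_{t+1}\mid X_t,\ldots,X_2) = a_t$, which relies on the (elementary) observation that equality in distribution of the full vectors forces equality of the associated conditional entropies.
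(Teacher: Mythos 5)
Your proof is correct and follows the standard argument (chain rule, then ``conditioning reduces entropy'' plus stationarity to get a non-increasing sequence, then the Ces\`{a}ro mean lemma), which is exactly the classical route this paper implicitly relies on when stating the fact without proof (cf.\ Cover and Thomas). The only point worth noting is that your careful remark about equality of conditional entropies under equality in distribution of $(X_2,\ldots,X_{t+1})$ and $(X_1,\ldots,X_t)$ is precisely the step that makes the monotonicity argument rigorous, and it is handled correctly.
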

The form \eqref{eq:entropy-rate-stationary} is especially elegant. 
The entropy rate of a stationary stochastic process is the residual uncertainty in the draw of $X_t$ which cannot be removed by knowing the draw of $X_{t-1}, \ldots, X_1$. 
Processes that evolve quickly and erratically have high entropy rate. 
Those that tend to change infrequently (i.e., $X_t=X_{t-1}$ for most $t$) or change predictably will have low entropy rate.

The next lemma sharpens this understanding by establishing an upper bound on the entropy rate in terms of the switching frequency.

\begin{lemma} \label{lem:combinatorial-bound}
    Consider a stochastic process $X=(X_t)_{t \in \mathbb{N}}$ taking values in a finite set $\mathcal{X}$.
    Let $\mathcal{S}_T(X)$ be the number of switches in $X$ occurring up to time $T$,
    \begin{equation} \label{eq:switching-rate}
        \mathcal{S}_T(X) := 1 + \sum_{t=2}^T \mathbb{I}\{ X_t \ne X_{t-1} \}.
    \end{equation}
    If $\mathcal{S}_T(X) \leq S_T$ almost surely for some $S_T \in \{1,\ldots,T\}$, the $T$-period entropy rate of $X$ is upper bounded as 
    \[
        \bar{H}_T(X)
            ~\leq~
            \frac{S_T}{T} \cdot \left( 1 + \log \left( 1 + \frac{T}{S_T} \right) + \log( |\mathcal{X}| ) \right).
    \]
\end{lemma}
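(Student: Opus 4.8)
The plan is to bound the entropy $H(X_1,\ldots,X_T)$ by describing the realized trajectory $(X_1,\ldots,X_T)$ through a more economical encoding: namely, (i) the number of switches $n := \mathcal{S}_T(X)\in\{1,\ldots,S_T\}$, (ii) the set of time indices at which switches occur, and (iii) the sequence of values $X$ takes on each of its constant ``blocks''. Since this triple determines the trajectory, the entropy of the trajectory is at most the entropy of the triple, which by subadditivity is at most the sum of the three individual entropies. So the first step is to invoke $H(X_1,\ldots,X_T) \le H(\text{triple})$ and then the chain rule / subadditivity of entropy to split into the three pieces.

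**Bounding each piece.** For (i), $n$ takes values in $\{1,\ldots,S_T\}$, so $H(n)\le \log S_T$. For (iii), conditioned on $n$ there are $n$ blocks, and the value on each block lies in $\mathcal{X}$, so this contributes at most $\log\bigl(|\mathcal{X}|^n\bigr) = n\log|\mathcal{X}| \le S_T\log|\mathcal{X}|$ (taking $n=S_T$ in the worst case, or more carefully bounding $\E[n]\log|\mathcal{X}|$). For (ii), conditioned on $n=j$ there are $j-1$ switch times chosen from $\{2,\ldots,T\}$, so the number of possible configurations is $\binom{T-1}{j-1}$, giving a conditional entropy of at most $\log\binom{T-1}{j-1} \le \log\binom{T}{S_T}$ (using $j\le S_T\le T/2$-type monotonicity, or just $\binom{T-1}{j-1}\le\binom{T}{S_T}$ after checking the relevant range; one can also absorb a factor by noting $\binom{T}{j}$ is increasing for $j\le T/2$ and handling $S_T>T/2$ trivially since the claimed bound exceeds $\log|\mathcal{X}|$ there). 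The standard estimate $\log\binom{T}{S_T} \le S_T\log\bigl(eT/S_T\bigr) = S_T\bigl(1+\log(T/S_T)\bigr) \le S_T\bigl(1+\log(1+T/S_T)\bigr)$ then yields exactly the first two terms in the target bound.

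**Assembling.** Dividing through by $T$, the three contributions give
\[
\bar{H}_T(X) \le \frac{\log S_T}{T} + \frac{S_T\log|\mathcal{X}|}{T} + \frac{S_T}{T}\Bigl(1+\log\bigl(1+\tfrac{T}{S_T}\bigr)\Bigr).
\]
It remains to absorb the stray $\tfrac{1}{T}\log S_T$ term into the stated expression $\tfrac{S_T}{T}\bigl(1+\log(1+T/S_T)+\log|\mathcal{X}|\bigr)$; since $S_T\ge 1$ we have $\tfrac{1}{T}\log S_T \le \tfrac{S_T}{T}\log S_T$, and more simply $\log S_T \le \log T \le S_T\log(1+T/S_T)$ fails in general but $\tfrac{1}{T}\log S_T$ is dominated because the ``$1$'' inside the parentheses already contributes $S_T/T$, and one checks $\log S_T \le S_T$ so $\tfrac{1}{T}\log S_T\le \tfrac{S_T}{T}$ — this can be folded in by a slightly careful accounting, or one observes the description (i)+(ii) can be merged: specifying the switch times as a subset of $\{2,\ldots,T\}$ of size $j-1$ already has cardinality $\sum_{j\le S_T}\binom{T-1}{j-1}\le \binom{T}{S_T}\cdot\frac{T}{T-S_T+1}$-type bounds, avoiding the separate $\log S_T$. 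I would adopt whichever bookkeeping is cleanest.

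**Main obstacle.** The combinatorics is routine; the only genuinely fiddly point is making the constants line up exactly as stated — in particular justifying $\binom{T-1}{j-1}\le\binom{T}{S_T}$ over the full range $j\le S_T$ (which requires $S_T\le T/2$, handled separately otherwise since the RHS of the lemma is $\ge \log|\mathcal{X}|+1 \ge \bar H_T$ when $S_T>T/2$) and absorbing the leftover $\tfrac{1}{T}\log S_T$. A secondary subtlety is that $\mathcal{S}_T(X)$ is random, so strictly one should condition on it (or on an upper bound) before counting; the hypothesis $\mathcal{S}_T(X)\le S_T$ a.s. makes this clean, since all the per-realization counts above are monotone in $n$ and can be evaluated at the worst case $n=S_T$.
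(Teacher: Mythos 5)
Your overall strategy (bound $H(X_{1:T})$ by counting how many trajectories with at most $S_T$ switches there are) is the same idea as the paper, but the specific bookkeeping you describe does not deliver the stated constants, and the two places you flag as "fiddly" are exactly where it breaks. First, the stray $\tfrac{1}{T}\log S_T$ from encoding the switch count separately cannot be absorbed the way you suggest: folding it in via $\log S_T\le S_T$ adds another $\tfrac{S_T}{T}$, turning the "$1$" inside the parentheses into a "$2$", and the only genuine slack you have, $S_T\log(1+T/S_T)-S_T\log(T/S_T)=S_T\log(1+S_T/T)$, is too small in general (take $S_T=\sqrt{T}$: the slack is $\approx 1$ while $\log S_T=\tfrac12\log T\to\infty$). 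Second, your fallback for $S_T>T/2$ — "the claimed bound exceeds $\log|\mathcal{X}|$ there" — is false: for $S_T/T$ just above $1/2$ the claimed bound is roughly $\tfrac12(1+\log 3+\log|\mathcal{X}|)$, which is below $\log|\mathcal{X}|$ once $|\mathcal{X}|\ge 9$, so the trivial bound $\bar H_T\le\log|\mathcal{X}|$ does not close that case; you would instead need, e.g., $\max_j\binom{T-1}{j-1}\le 2^{T-1}$ together with $(T-1)\log 2\le S_T\bigl(1+\log(1+T/S_T)\bigr)$ for $S_T>T/2$.

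The clean repair is the merged encoding you only gesture at, and it is essentially the paper's proof: count switch-time configurations and block values in one shot. The paper observes that the set of realizations with at most $S_T$ switches has cardinality at most $\binom{(T-1)+(S_T-1)}{S_T-1}\,|\mathcal{X}|^{S_T}\le\binom{T+S_T-1}{S_T}|\mathcal{X}|^{S_T}$ (a lattice-path count, which subsumes both the number of switches and their locations, so no separate $\log S_T$ term ever appears), then uses $H(X_{1:T})\le\log(\text{cardinality})$ and $\binom{n}{k}\le(en/k)^k$, giving $S_T\bigl(1+\log(1+\tfrac{T-1}{S_T})+\log|\mathcal{X}|\bigr)$ with no case split on $S_T$. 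Equivalently, in your decomposition you can merge (i)+(ii) into the subset of switch times of size at most $S_T-1$ and use $\sum_{j\le S_T-1}\binom{T-1}{j}\le\binom{T+S_T-2}{S_T-1}$ (Vandermonde), which lands on the same count; but as written, with the separate $H(n)$ term, the claimed-constant bound is not reached and the $S_T>T/2$ case is unproved.
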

The proof is provided in \cref{app:proof}, which simply counts the number of possible realizations $(X_1, \ldots, X_T) \in \mathcal{X}^T$ with a restricted number of switches.

The conditional entropy rate and mutual information rate of two stochastic processes are defined similarly.
For brevity, we write $X_{1:T}$ to denote the random vector $(X_1, \ldots, X_T)$.

\begin{definition}
    Consider a stochastic process $(X,Z)=(X_t, Z_t)_{t\in \mathbb{N}}$, where $X_t$ takes values in a discrete set.
    The $T$-period conditional entropy rate of $X$ given $Z$ is defined as
    \[
        \bar{H}_T(X|Z) := \frac{H( ~X_{1:T}~ \mid ~Z_{1:T}~ )}{T},
    \]
    and the $T$-period mutual information rate between $X$ and $Z$ is defined as
    \[
        \bar{I}_T(X;Z) := \frac{I( ~X_{1:T}~ ; ~Z_{1:T}~ )}{T}.
    \]
    The conditional entropy rate and mutual information rate are defined as their limit values: i.e., $\bar{H}_\infty(X|Z) := \limsup_{T \to \infty} \bar{H}_T( X | Z )$, and $\bar{I}_\infty(X;Z) := \limsup_{T \to \infty} \bar{I}_T( X; Z )$.
\end{definition}

The mutual information rate $\bar{I}_T(X;Z)$ represents the average amount of communication happening between two processes, $X$ and $Z$.
This quantity can also be understood as the decrease in the entropy rate of $X$ resulting from observing $Z$, and trivially it cannot exceed the entropy rate of $X$:

\begin{fact}
    $0 \leq \bar{I}_T(X;Z) = \bar{H}_T(X) - \bar{H}_T(X|Z) \leq \bar{H}_T(X)$.
\end{fact}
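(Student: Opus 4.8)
The plan is to reduce the three claims to standard identities and inequalities for Shannon information measures applied to the random vectors $X_{1:T}$ and $Z_{1:T}$, and then divide through by $T>0$. First I would invoke the defining identity of mutual information, $I(X_{1:T};Z_{1:T}) = H(X_{1:T}) - H(X_{1:T}\mid Z_{1:T})$, which is unambiguous here because $X_t$ ranges over a discrete set, so $H(X_{1:T})$ is a well-defined (and, when $\mathcal{X}$ is finite, finite) nonnegative quantity. Dividing by $T$ gives the middle equality $\bar{H}_T(X) - \bar{H}_T(X\mid Z) = \bar{I}_T(X;Z)$.

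For the left inequality I would use nonnegativity of mutual information, $I(X_{1:T};Z_{1:T}) \geq 0$, equivalently the statement that conditioning does not increase entropy, $H(X_{1:T}\mid Z_{1:T}) \leq H(X_{1:T})$; this follows from Gibbs' inequality, i.e. nonnegativity of the relative entropy between the joint law of $(X_{1:T},Z_{1:T})$ and the product of its marginals (cf. \citealp{cover2006elements}). For the right inequality I would observe that $\bar{H}_T(X) - \bar{H}_T(X\mid Z) \leq \bar{H}_T(X)$ is equivalent to $\bar{H}_T(X\mid Z) \geq 0$, i.e. $H(X_{1:T}\mid Z_{1:T}) \geq 0$. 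Chaining these and dividing by $T$ yields $0 \leq \bar{I}_T(X;Z) = \bar{H}_T(X) - \bar{H}_T(X\mid Z) \leq \bar{H}_T(X)$.

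There is essentially no obstacle here; the only point worth a line of care is that $Z$ is not assumed discrete, so $H(X_{1:T}\mid Z_{1:T})$ and $I(X_{1:T};Z_{1:T})$ should be read through their conditional-expectation / relative-entropy definitions — in particular $H(X_{1:T}\mid Z_{1:T}) = \mathbb{E}\big[ H(X_{1:T}\mid Z_{1:T}=z) \big]$, an average of entropies of the discrete vector $X_{1:T}$ — rather than via a naive sum over atoms of $Z$. With that reading, nonnegativity of both quantities and the mutual-information identity are exactly the textbook facts, so no further work is needed.
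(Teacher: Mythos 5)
Your argument is correct and is exactly the standard reasoning the paper implicitly relies on: the paper states this as a textbook fact without proof, and your route (the identity $I(X_{1:T};Z_{1:T}) = H(X_{1:T}) - H(X_{1:T}\mid Z_{1:T})$, nonnegativity of mutual information and of conditional entropy, then division by $T$) is the intended one. Your added care about $Z$ not being assumed discrete, reading the conditional entropy as an average of entropies of the discrete vector $X_{1:T}$, is a reasonable refinement rather than a deviation.
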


The next fact shows that the mutual information rate can be expressed as the average amount of information that $Z$ accumulates about $X$ in each period, following from the chain rule.
\begin{fact}
    $\bar{I}_T(X;Z) = \frac{1}{T}\sum_{t=1}^T I( ~X_{1:T} ~; ~ Z_t ~|~ Z_{t-1},\ldots,Z_1 )$.
\end{fact}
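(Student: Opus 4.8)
$\bar{I}_T(X;Z) = \frac{1}{T}\sum_{t=1}^T I(X_{1:T}; Z_t \mid Z_{t-1},\ldots,Z_1)$.

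The plan is to start from the definition $\bar I_T(X;Z) = \frac{1}{T} I(X_{1:T}; Z_{1:T})$ and apply the chain rule for mutual information to the second argument. Specifically, for a collection of random variables $Z_1,\ldots,Z_T$ and any random variable (vector) $Y = X_{1:T}$, the chain rule gives
\[
    I(Y; Z_{1:T}) = \sum_{t=1}^T I(Y; Z_t \mid Z_{1:t-1}),
\]
where $Z_{1:0}$ is interpreted as a null conditioning (so the $t=1$ term is $I(X_{1:T}; Z_1)$). This identity is standard — it follows from writing $I(Y;Z_{1:T}) = H(Z_{1:T}) - H(Z_{1:T}\mid Y)$, expanding each joint entropy by the chain rule for entropy into $\sum_t H(Z_t\mid Z_{1:t-1})$ and $\sum_t H(Z_t\mid Z_{1:t-1}, Y)$ respectively, and grouping term by term. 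Substituting $Y = X_{1:T}$ and dividing by $T$ yields the claim.

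The only mild subtlety I anticipate is bookkeeping of the conditioning sets: I would make explicit that in the statement's notation $I(X_{1:T}; Z_t \mid Z_{t-1},\ldots,Z_1)$ the conditioning list is empty when $t=1$, matching the boundary term $I(X_{1:T};Z_1)$ of the chain rule. Since $X_t$ takes values in a discrete set (so $H(X_{1:T})$ is a well-defined finite or countable entropy), all the entropy quantities involved are well-defined and the chain rule manipulations are valid without measure-theoretic caveats. There is no real obstacle here; this Fact is a one-line consequence of the chain rule, exactly parallel to the preceding Fact for entropy, and I would present it as such.
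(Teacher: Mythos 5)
Your proof is correct and matches the paper's approach: the paper states this fact without a separate proof, attributing it directly to the chain rule for mutual information applied to $I(X_{1:T}; Z_{1:T})$, which is exactly your argument. Your additional care with the empty conditioning set at $t=1$ and the entropy expansion is fine but not needed beyond the standard chain-rule citation.
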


\section{Information-Theoretic Analysis of Dynamic Regret}
\label{sec:dregret}
We apply the information theoretic analysis of \citet{russo16} and establish upper bounds on the per-period regret, expressed in terms of (1) the algorithm's information ratio, and (2) the entropy rate of the optimal action process.

\subsection{Preview: special cases of our result}
We begin by giving a special case of our result. 
It bounds the regret of Thompson sampling in terms of the reward variance proxy $\sigma^2$, the number of actions $|\mathcal{A}|$, and the entropy rate of the optimal action process $\bar{H}_T(A^*)$. 

\begin{corollary}\label{cor:k-armed}
	Under any problem in the scope of our problem formulation,
	\[
		\bar{\Delta}_T(\pi^{\rm TS}) \leq \sigma \sqrt{2 \cdot |\mathcal{A}| \cdot \bar{H}_T(A^*) },
	\]
	\[
		\text{and} \quad \bar{\Delta}_\infty(\pi^{\rm TS}) \leq \sigma \sqrt{2 \cdot |\mathcal{A}| \cdot \bar{H}_\infty(A^*) }.
	\]

\end{corollary}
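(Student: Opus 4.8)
The statement is the specialization of \cref{thm:main-result} to Thompson sampling, so the plan is to run the information-ratio argument of \citet{russo16} along the filtration $(\mathcal{F}_t)_{t\ge 0}$ (with $\mathcal{F}_0$ trivial) and then aggregate across periods via the chain rule for mutual information. Write $\mathbb{E}_t[\,\cdot\,]:=\mathbb{E}[\,\cdot\mid\mathcal{F}_{t-1}]$ and $I_t(\,\cdot\,;\cdot\,):=I(\,\cdot\,;\cdot\mid\mathcal{F}_{t-1})$, and set $\Delta_t:=\mathbb{E}_t[R_{t,A_t^*}-R_{t,A_t}]$ (the $\mathcal{F}_{t-1}$-conditional single-period regret, which is nonnegative because $A_t^*$ maximizes the conditional expected reward and the seed $\tilde W_t$ is independent of $(\theta,W)$) and $v_t:=I_t\big(A_t^*\,;\,(A_t,O_t)\big)$ (the information that period $t$ reveals about the current optimal action). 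The first ingredient is the per-period bound $\Delta_t^2\le \bar\Gamma\, v_t$ with the universal constant $\bar\Gamma=2\sigma^2|\mathcal{A}|$: this is exactly the information-ratio bound for Thompson sampling, which I would obtain by the standard Cauchy--Schwarz-over-arms argument together with a Pinsker-type/transportation inequality (bounding $(\mathbb{E}_P f-\mathbb{E}_Q f)^2$ by $2\sigma^2$ times a KL divergence), applied conditionally on $\mathcal{F}_{t-1}$ and using only the probability-matching identity \eqref{eq:ts}, \cref{ass:subgaussian}, and the fact that $O_t$ is at least as informative as $R_{t,A_t}$.

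The second and conceptually central ingredient is the bound $\sum_{t=1}^T\mathbb{E}[v_t]\le H(A_{1:T}^*)=T\,\bar{H}_T(A^*)$. Since $A_t^*$ is a coordinate of $A_{1:T}^*$, data processing (monotonicity of mutual information) gives $v_t\le I_t\big(A_{1:T}^*\,;\,(A_t,O_t)\big)$; and since $\mathcal{F}_t=\big(\mathcal{F}_{t-1},A_t,O_t\big)$, the chain rule telescopes:
\[
  \sum_{t=1}^T I_t\big(A_{1:T}^*\,;\,(A_t,O_t)\big)
  =\sum_{t=1}^T I\big(A_{1:T}^*\,;\,(A_t,O_t)\mid\mathcal{F}_{t-1}\big)
  =I\big(A_{1:T}^*\,;\,\mathcal{F}_T\big)\le H(A_{1:T}^*).
\]
Taking expectations gives the claim; beyond the preliminaries already in the excerpt (the chain rule and $I(X;Y)\le H(X)$ for discrete $X$, here finite-valued since $|\mathcal{A}|<\infty$), nothing extra is needed.

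Combining the two ingredients, $\bar{\Delta}_T(\pi^{\rm TS})=\tfrac1T\sum_{t=1}^T\mathbb{E}[\Delta_t]\le\tfrac1T\sum_{t=1}^T\mathbb{E}\big[\sqrt{\bar\Gamma\,v_t}\big]=\tfrac{\sqrt{\bar\Gamma}}{T}\sum_{t=1}^T\mathbb{E}\big[\sqrt{v_t}\big]$, and then Jensen ($\mathbb{E}\sqrt{v_t}\le\sqrt{\mathbb{E}v_t}$) followed by Cauchy--Schwarz ($\sum_{t=1}^T\sqrt{\mathbb{E}v_t}\le\sqrt{T}\sqrt{\sum_{t=1}^T\mathbb{E}v_t}$) yield
\[
  \bar{\Delta}_T(\pi^{\rm TS})\le\frac{\sqrt{\bar\Gamma}}{T}\cdot\sqrt{T}\cdot\sqrt{\textstyle\sum_{t=1}^T\mathbb{E}[v_t]}\le\sqrt{\bar\Gamma\,\bar{H}_T(A^*)}=\sigma\sqrt{2\,|\mathcal{A}|\,\bar{H}_T(A^*)}.
\]
The asymptotic statement follows by taking $\limsup_{T\to\infty}$ on both sides and using that $x\mapsto\sqrt{x}$ is continuous and nondecreasing, so $\limsup_T\sqrt{\bar{H}_T(A^*)}=\sqrt{\bar{H}_\infty(A^*)}$.

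I expect the only genuinely substantive step to be the per-period information-ratio bound $\Delta_t^2\le 2\sigma^2|\mathcal{A}|\,v_t$; everything else is bookkeeping with the chain rule and two convexity inequalities. The points to watch there are (i) that one needs the $\mathcal{F}_{t-1}$-conditional form of the inequality, so that the $\Delta_t$ and $v_t$ being summed are exactly the conditional quantities above, and (ii) that the posterior of $A_t^*$ appearing in \eqref{eq:ts} is the $\mathcal{F}_{t-1}$-posterior, which is precisely what makes the Thompson-sampling action distribution coincide with the posterior of $A_t^*$ and lets the arm-wise transportation argument close. I would also note that the argument never uses structure of the latent process $\theta=(\theta_t)_{t\in\mathbb{N}}$ beyond the modeling assumptions of \cref{sec:problem} (in particular the conditional-independence relations governing $\tilde W$ and the evolution of $\theta$), which is why the corollary holds for every problem in scope.
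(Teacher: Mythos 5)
Your proposal is correct and takes essentially the same route as the paper: it is just the proof of \cref{thm:main-result} (per-period information-ratio bound, Cauchy--Schwarz across periods, then chain rule plus data processing to bound the cumulative information gain by $H(A^*_{1:T})$) specialized to Thompson sampling via the classical bound $\Gamma(\pi^{\rm TS})\le 2\sigma^2|\mathcal{A}|$ of \citet{russo16}, which the paper likewise invokes rather than reproves. The only cosmetic difference is that you carry the $\mathcal{F}_{t-1}$-conditional regret and information gain through the argument and aggregate with Jensen, instead of routing through the paper's maximal information ratio $\Gamma(\pi)$ defined with unconditional quantities; nothing of substance changes.
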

This result naturally covers a wide range of bandit learning tasks while highlighting that the entropy rate of optimal action process captures the level of degradation due to nonstationarity.
Note that it includes as a special case the well-known regret upper bound established for a stationary $k$-armed bandit: when $A_1^* = \ldots = A_T^*$, we have $H([A_1^*,\ldots,A_T^*]) = H(A_1^*) \leq \log k$, and thus $\bar{\Delta}_T(\pi^{\rm TS}) \leq \tilde{O}( \sigma \sqrt{ k / T} )$.

According to \eqref{eq:entropy-rate-stationary}, the entropy rate is small when the conditional entropy $H(A_t^* \mid A^*_1, \ldots, A^*_{t-1})$ is small. 
That is, the entropy rate is small if most uncertainty in the optimal action $A_t^*$ is removed through knowledge of the past optimal actions.
Of course, Thompson sampling does not observe the environment states or the corresponding optimal actions, so  its dependence on this quantity is somewhat remarkable. 

The dependence of regret on the number of actions, $|\mathcal{A}|$, is unavoidable in a problem like the $k$-armed bandit of \cref{ex:k-armed-bandit}. 
But in other cases, it is undesirable. Our general results depend on the problem's information structure in a more refined manner. 
To preview this, we give another corollary of our main result, which holds for problems with full-information feedback (see \cref{ex:full-information} for motivation).
In this case, the dependence on the number of actions completely disappears and the bound depends on the variance proxy and the entropy rate. 
The bound applies to TS and the policy $\pi^{\rm Greedy}$, which chooses $A_t \in \argmax_{a \in \mathcal{A}} \mathbb{E}[R_{t,a} \mid \mathcal{F}_{t-1}]$ in each period $t$.

\begin{corollary}\label{cor:full-info}
	For full information problems, where $O_{t,a}=O_{t,a'}$ for each $a,a'\in\mathcal{A}$, we have
	\[
		\bar{\Delta}_T(\pi^{\rm Greedy}) \leq \bar{\Delta}_T(\pi^{\rm TS}) \leq \sigma \sqrt{2 \cdot \bar{H}_T(A^*) },
	\]
	\[
		\text{and} \quad \bar{\Delta}_\infty(\pi^{\rm Greedy}) \leq \bar{\Delta}_\infty(\pi^{\rm TS}) \leq \sigma \sqrt{2 \cdot \bar{H}_\infty(A^*) }.
	\]
\end{corollary}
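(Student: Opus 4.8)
I would handle the two inequalities separately: the left one by a direct domination argument specific to full-information feedback, and the right one by specializing \cref{thm:main-result}. For $\bar{\Delta}_T(\pi^{\rm Greedy}) \le \bar{\Delta}_T(\pi^{\rm TS})$, the point is that full-information feedback makes the decision-maker's information state policy-independent. Since $O_{t,a}=O_{t,a'}$ for all $a,a'$, the outcome $O_t=g(A_t,\theta_t,W_t)$ does not actually depend on $A_t$, so the sequence $O_{1:t-1}$ is, as a random variable, the same under every policy; and because the action seeds $\tilde{W}$ are independent of $(\theta,W)$, one checks that $\mathbb{E}[R_{t,a}\mid \mathcal{F}_{t-1}]=\mathbb{E}[R_{t,a}\mid O_{1:t-1}]$ regardless of which policy generated $\mathcal{F}_{t-1}$. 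Hence in each period $t$ the greedy rule collects conditional expected reward $\max_a \mathbb{E}[R_{t,a}\mid O_{1:t-1}]$, which dominates $\sum_a \mathbb{P}(A_t^*=a\mid\mathcal{F}_{t-1})\,\mathbb{E}[R_{t,a}\mid O_{1:t-1}]$, the conditional expected reward of Thompson sampling by the probability-matching identity \eqref{eq:ts}. Since $\mathbb{E}[R_{t,A_t^*}]$ depends only on $(\theta_t,W_t)$ and is therefore policy-independent, taking expectations and averaging over $t \le T$ yields the inequality; the same comparison in fact shows $\pi^{\rm Greedy}$ is per-period optimal among all policies in full-information problems.

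For the bound on $\bar{\Delta}_T(\pi^{\rm TS})$, I would invoke \cref{thm:main-result}, which controls $\bar{\Delta}_T(\pi)$ by the square root of the product of the algorithm's conditional information ratio and the entropy rate $\bar{H}_T(A^*)$. Two ingredients make this explicit. First, because Thompson sampling matches the posterior of the \emph{current} optimal action $A_t^*$ via \eqref{eq:ts}, the classical information-ratio bounds of \cite{russo16} carry over with $A^*$ replaced by $A_t^*$, and in the full-information regime that bound is the dimension-free constant $2\sigma^2$ --- informally, one observation is as informative as seeing every potential reward, so no exploration cost is paid. Second, the information accounting underlying \cref{thm:main-result} charges information against the whole trajectory: telescoping gives $\sum_{t=1}^T I(A_{1:T}^*;(A_t,O_t)\mid\mathcal{F}_{t-1}) = H(A_{1:T}^*) - H(A_{1:T}^*\mid\mathcal{F}_T) \le T\,\bar{H}_T(A^*)$, while the data-processing inequality $I(A_t^*;(A_t,O_t)\mid\mathcal{F}_{t-1}) \le I(A_{1:T}^*;(A_t,O_t)\mid\mathcal{F}_{t-1})$ keeps the information-ratio bound valid for this larger quantity; a Jensen plus Cauchy--Schwarz step then delivers $\bar{\Delta}_T(\pi^{\rm TS}) \le \sqrt{2\sigma^2\,\bar{H}_T(A^*)} = \sigma\sqrt{2\,\bar{H}_T(A^*)}$. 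The $\infty$-versions follow by taking $\limsup_{T\to\infty}$ of both sides.

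The step I expect to demand the most care is the policy-independence claim behind the first inequality: one must verify that conditioning $\mathcal{F}_{t-1}$ on the realized past \emph{actions} --- which genuinely differ between $\pi^{\rm Greedy}$ and $\pi^{\rm TS}$ --- leaks no information about $R_{t,a}$ beyond what $O_{1:t-1}$ already carries, and this is exactly where action-independence of outcomes and independence of the seeds $\tilde{W}$ from $(\theta,W)$ are both needed. Everything downstream is bookkeeping on top of \cref{thm:main-result} and the known full-information information-ratio bound, and the $\limsup$/Ces\`{a}ro subtlety noted in \cref{rem:conjecture} is inherited uniformly from \cref{thm:main-result} rather than requiring separate treatment here.
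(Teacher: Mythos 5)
Your proposal is correct and follows essentially the same route the paper intends: the right-hand bound is exactly \cref{thm:main-result} specialized with the full-information bound $\Gamma(\pi^{\rm TS})\le 2\sigma^2$ carried over from \citet{russo16} (as listed in \cref{sec:information-ratio-bounds}), and the greedy-versus-TS comparison is the standard domination argument via the policy-independence of $O_{1:t-1}$ and the probability-matching identity, which is what the paper's statement implicitly relies on. No gaps; the care you flag about conditioning on past actions is handled correctly by the independence of the seeds $\tilde{W}$ from $(\theta,W)$.
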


\subsection{Bounds on the entropy rate} \label{subsec:entropy-rate-bounds}
Our results highlight that the difficulty arising due to the nonstationarity of the environment is sufficiently characterized by the entropy rate of the optimal action process, denoted by $\bar{H}_T(A^*)$ or $\bar{H}_\infty(A^*)$.
We provide some stylized upper bounds on these quantities to aid in their interpretation and to characterize the resulting regret bounds in a comparison with the existing results in the literature.

\paragraph{Bound with the effective time horizon.}

In settings where optimal action process $(A_t^*)_{t \in \mathbb{N}}$ is stationary (\cref{def:stationary}), define $\tau_\textup{eff} := 1/\mathbb{P}( A_t^* \ne A_{t-1}^* )$. 
We interpret $\tau_\textup{eff}$ as the problem's ``effective time horizon'', as it captures the average length of time before the identity of the optimal action changes.
The effective time horizon $\tau_\textup{eff}$ is long when the optimal action changes infrequently, so that, intuitively, a decision-maker could continue to exploit the optimal action for a long time if it were identified, achieving a low regret.
The next proposition shows that the entropy rate $\bar{H}_T(A^*)$ is bounded by the inverse of $\tau_\textup{eff}$, regardless of $T$:

\begin{proposition} \label{prop:effective-horizon-bound} When the process $(A_t^*)_{t\in \mathbb{N}}$ is stationary, 
	\begin{equation} \label{eq:effective-horizon-bound}
		 \bar{H}_T( A^* ) \leq  \frac{1 + \log(\tau_\textup{eff}) + H(A_t^* | A_t^* \ne A_{t-1}^* ) }{\tau_\textup{eff}} + \frac{H(A^*_1)}{T},
	\end{equation}
	for every $T\in \mathbb{N}$, 	where 
	\begin{equation}\label{eq:effective-horizon}
		\tau_\textup{eff} := \frac{1}{\mathbb{P}( A_t^* \ne A_{t-1}^* ) }.
	\end{equation}
\end{proposition}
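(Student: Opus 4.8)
The plan is to expand the joint entropy $H(A_{1:T}^*)$ by the chain rule and control each per-step term $H(A_t^* \mid A_{1:t-1}^*)$ through the switch indicator $B_t := \mathbb{I}\{A_t^* \ne A_{t-1}^*\}$. This is the in-expectation counterpart of the counting argument behind \cref{lem:combinatorial-bound}, and it is exactly what allows the additive $\log|\mathcal{A}|$ appearing there to be sharpened to $H(A_t^* \mid A_t^* \ne A_{t-1}^*)$.

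First I would write $\bar{H}_T(A^*) = \frac{1}{T}\big(H(A_1^*) + \sum_{t=2}^{T} H(A_t^*\mid A_{1:t-1}^*)\big)$. For a fixed $t\ge 2$, since $B_t$ is determined by $(A_{t-1}^*, A_t^*)$, the chain rule gives $H(A_t^*\mid A_{1:t-1}^*) = H(B_t\mid A_{1:t-1}^*) + H(A_t^*\mid B_t, A_{1:t-1}^*)$. I would bound the first summand by $H(B_t) = H_b(p)$, the binary entropy of $p := \mathbb{P}(B_t=1)$. For the second summand, the key observation is that on $\{B_t=0\}$ the action $A_t^* = A_{t-1}^*$ is already measurable with respect to $A_{1:t-1}^*$, so it contributes nothing; keeping $A_{1:t-1}^*$ in the conditioning precisely there, and dropping it on $\{B_t=1\}$ (conditioning reduces entropy, now applied under the law $\mathbb{P}(\cdot\mid B_t=1)$), yields $H(A_t^*\mid B_t, A_{1:t-1}^*) \le p\, H(A_t^*\mid B_t=1)$.

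Next I would invoke stationarity of $(A_t^*)_{t\in\mathbb{N}}$: the law of the pair $(A_{t-1}^*, A_t^*)$ is the same for every $t\ge 2$, so $p = 1/\tau_\textup{eff}$ and $h := H(A_t^*\mid A_t^* \ne A_{t-1}^*)$ do not depend on $t$. Summing the per-step bound over $t = 2,\ldots,T$ and using $(T-1)/T \le 1$ gives $\bar{H}_T(A^*) \le \frac{H(A_1^*)}{T} + H_b(p) + p\,h$. The proof then closes with the elementary estimate $H_b(p) \le p + p\log(1/p)$, equivalently $(1-p)\log\frac{1}{1-p} \le p$, which follows from $-\log(1-p) \le \frac{p}{1-p}$; substituting $p = 1/\tau_\textup{eff}$ produces the stated bound. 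If $\mathbb{P}(A_t^* \ne A_{t-1}^*) = 0$, the process is a.s. constant, $\bar{H}_T(A^*) = H(A_1^*)/T$, and the inequality holds with the convention $\tau_\textup{eff} = \infty$.

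I expect the one delicate point to be the handling of $H(A_t^*\mid B_t, A_{1:t-1}^*)$ in the second step: one must exploit that a ``no-switch'' step costs zero conditional entropy while a ``switch'' step costs only $h$, and a cruder treatment (replacing $h$ by $\log|\mathcal{A}|$, or forgetting the $A_{1:t-1}^*$ conditioning altogether) would either weaken the proposition or make it false. The binary-entropy inequality in the last step is routine but is the source of the additive ``$1$'' in the numerator, and it is clean when $\log$ denotes the natural logarithm, consistently with \cref{lem:combinatorial-bound}.
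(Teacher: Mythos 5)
Your argument is correct and is essentially the paper's own proof: both decompose $H(A_t^*\mid A_{1:t-1}^*)$ via the switch indicator, note the no-switch branch contributes zero while the switch branch contributes at most $\mathbb{P}(A_t^*\ne A_{t-1}^*)\,H(A_t^*\mid A_t^*\ne A_{t-1}^*)$, and finish with the same binary-entropy estimate $(1-\delta)\log\tfrac{1}{1-\delta}\le\delta$ with $\delta=1/\tau_\textup{eff}$. The only additions beyond the paper's write-up are the explicit appeal to stationarity for the $t$-independence of $\delta$ and $h$ and the $\delta=0$ convention, both of which are fine.
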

Combining this result with \cref{cor:k-armed}, and the fact that $\frac{H(A^*_1)}{T} \to 0$ as $T\to \infty$, we obtain
\[
	\bar{\Delta}_\infty(\pi^{\rm TS}) \leq \tilde{O}\left( \sigma\sqrt{ \frac{ |\mathcal{A}| }{ \tau_{\rm eff}} } \right),
\]
which closely mirrors familiar $O(\sqrt{k/T})$ regret bounds on the average per-period regret in bandit problems with $k$ arms, $T$ periods, and i.i.d rewards \citet{bubeck12}, except that the effective time horizon replaces the problem’s raw time horizon.

\begin{example}[Revisiting new article recommendation] \label{ex:news-rec-revisit}
    In \cref{ex:news-rec}, the stochastic process $(\theta_{t})_{t\in \mathbb{N}}$ is stationary as long as $\theta_{1,a} \overset{i.i.d}{\sim} Q$ for each article slot $a\in [k]$. Recall that each article slot is refreshed with probability $1/\tau$, independently. As a result, 
    \[
    \tau_{\rm eff} =\frac{1}{\mathbb{P}( A_t^* \ne A_{t-1}^* ) } \approx \frac{k+1}{2(k-1)} \cdot \tau,
    \]
    and $\bar{\Delta}_\infty(\pi^{\rm TS}) \leq \tilde{O}\left( \sigma\sqrt{ \frac{ k }{ \tau }} \right)$. Per-period regret scales with number of article slots that need to be explored and inversely with the lifetime of an article; the latter determines how long information about an article's click-rate can be exploited before it is no longer relevant. 

    To attain this result, it is critical that our theory depends on the entropy rate or effective horizon of the optimal action process, rather than the parameter process. The vector of click-through rates $\theta_t$ changes every $\tau/k$ periods, since it changes whenever an article is updated. Nevertheless, the optimal action process changes roughy once every $\tau$ periods, on average, since a new article is only optimal  $1/k^{\rm th}$ of the time. 
\end{example}

We now revisit the two-armed bandit example from \cref{subsec:example}.
\begin{example}[Revisiting example in \cref{subsec:example}]\label{ex:two-armed-bandit-revisited}
	Consider the two-armed bandit example discussed in \cref{subsec:example}.
	The idiosyncratic mean reward process of each arm is a stationary zero-mean Gaussian process such that, for each $a \in \{1,2\}$, $\textup{Cov}\left( \theta_{s,a}^{\rm id}, \theta_{t,a}^{\rm id} \right) = \exp\left( -\frac{1}{2}\left( \frac{t-s}{\tau^{\rm id}} \right)^2 \right)$ for any $s, t \in \mathbb{N}$.
	By Rice's formula (\citealp{rice44}; \citealp{barnett1991zero}, eq. (2)) (also known as the zero-crossing rate formula), we have
	\[
		\tau_{\rm eff} = \frac{ \pi }{ \cos^{-1}\left( \exp\left( - \frac{1}{ 2{\tau^{\rm id}}^2} \right) \right) }  
            \approx \pi \cdot \tau^{\rm id}.
	\]
	Consequently, if $\tau^{\rm id} \geq \pi^{-1}$,
	\[
		\bar{H}_\infty( A^* ) \leq  \frac{1 + \log(\pi \cdot \tau^{\rm id}) }{\pi \cdot \tau^{\rm id}}.
	\]
\end{example}

Below we give an example where the upper bound in \cref{prop:effective-horizon-bound} is nearly exact. 
\begin{example}[Piecewise stationary environment] \label{ex:piecewise}
	Suppose $(A_t^*)_{t\in \mathbb{N}}$ follows a switching process. 
	With probability $1-\delta$ there is no change in the optimal action, whereas with probability $\delta$ there is a change-event and $A_t$ is drawn uniformly from among the other $k-1 \equiv|\mathcal{A}|-1$ arms.  
	Precisely, $(A_t^*)_{t\in \mathbb{N}}$ follows a Markov process with transition dynamics:
	\[
	\mathbb{P}(A^*_{t+1}=a \mid  A^*_t=a')= \begin{cases}
		1-\delta  & \text{if } a=a' \\
		\delta/(k-1) & \text{if } a\neq a'
	\end{cases}
	\]
	for $a, a' \in \mathcal{A}$. Then 
	\begin{align*}
		\bar{H}_\infty(A^*) &= (1-\delta)\log\left(\frac{1}{1-\delta}\right) + \delta\log\left(\frac{k-1}{\delta}\right)\\
		&= (1-\delta)\log\left(1 + \frac{\delta}{1-\delta}\right) + \delta\log\left(\frac{k-1}{\delta}\right)\\
		&\approx \delta+\delta\log((k-1)/\delta),
	\end{align*}
	where we used the approximation $\log(1+x) \approx x$. 
	Plugging in $\tau_{\rm eff}=1/\delta$ and $H(A_t^* \mid A_t^*\neq A^*_{t-1})=\log(k-1)$ yields, 
	\[
	\bar{H}_\infty(A^*) \approx \frac{1+\log(\tau_{\rm eff}) + H(A_t^* \mid A_t^*\neq A^*_{t-1})}{\tau_{\rm eff}},
	\]
	which matches the upper bound \eqref{eq:effective-horizon-bound}.
\end{example}

\paragraph{Bound with the switching rate.}
\cref{prop:effective-horizon-bound} requires that the optimal action process is stationary, in the sense of \cref{def:stationary}. We now provide a similar result that removes this restriction. It is stated in terms of the switching rate, which in stationary settings is similar to the inverse of the effective horizon. The added genenerality comes at the expense of replacing the conditional entropy $H(A_t^* \mid A_t^* \neq A_{t-1}^*)$ in \cref{prop:effective-horizon-bound} with a the crude upper bound $\log(|\mathcal{A}|)$.

\begin{proposition}\label{prop:combinatorial-bound}
    Let $\bar{S}_T$ be the expected switching rate of the optimal action sequence, defined as
    \[
        \bar{S}_T := \frac{\mathbb{E}\left[ 1 + \sum_{t=2}^T \mathbb{I}\{A_t^* \ne A_{t-1}^* \} \right]}{T}.
    \]
    Then,
    \[
	\bar{H}_T(A^*) \leq \bar{S}_T \cdot \left( 1 + \log \left( 1 + 1/\bar{S}_T \right) + \log( |\mathcal{A}| ) \right) + \frac{\log T}{T}.
    \]    
\end{proposition}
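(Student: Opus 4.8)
The statement is essentially \cref{lem:combinatorial-bound} made to work with a switch count that is \emph{random} rather than bounded almost surely. The idea is to condition on the realized number of switches, pay an entropy cost of at most $\log T$ for revealing it, and then push the resulting expectation through the (concave) per-switch entropy budget using Jensen's inequality.

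\textbf{Step 1 (condition on the switch count).} Let $S := 1 + \sum_{t=2}^T \mathbb{I}\{A_t^* \ne A_{t-1}^*\}$, a random variable supported on $\{1,\ldots,T\}$ with $\mathbb{E}[S] = T\bar{S}_T$. By subadditivity of entropy and the chain rule,
\[
H(A^*_{1:T}) \;\leq\; H(A^*_{1:T}, S) \;=\; H(S) + H(A^*_{1:T}\mid S) \;\leq\; \log T \;+\; \sum_{s=1}^{T}\mathbb{P}(S=s)\, H\!\left(A^*_{1:T}\mid S=s\right),
\]
where $H(S)\leq \log T$ since $S$ takes at most $T$ values. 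Dividing by $T$ at the end turns this term into the additive $\tfrac{\log T}{T}$ in the claim.

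\textbf{Step 2 (count sequences with a fixed number of switches).} On $\{S=s\}$, the path $A^*_{1:T}$ is piecewise constant with exactly $s$ blocks, so it takes at most $\binom{T-1}{s-1}|\mathcal{A}|^s$ values (choose the $s-1$ switch locations among the $T-1$ gaps, then assign each block a value). Hence, using $\binom{T-1}{s-1}\leq \binom{T}{s}\leq (eT/s)^s$ and $\log(T/s)\leq \log(1+T/s)$,
\[
H\!\left(A^*_{1:T}\mid S=s\right) \;\leq\; \log\binom{T-1}{s-1} + s\log|\mathcal{A}| \;\leq\; f(s), \qquad f(s) := s\!\left(1 + \log(1 + T/s) + \log|\mathcal{A}|\right).
\]
This is exactly the counting estimate that drives \cref{lem:combinatorial-bound}.

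\textbf{Step 3 (Jensen).} A direct computation gives that $g(s):=s\log(1+T/s)$ has $g''(s) = -\,T^2\big/\big(s(s+T)^2\big) < 0$, so $f$ — being $g$ plus a linear function — is concave on $(0,\infty)$. Therefore
\[
\sum_{s=1}^{T}\mathbb{P}(S=s)\,f(s) \;=\; \mathbb{E}[f(S)] \;\leq\; f\big(\mathbb{E}[S]\big) \;=\; f\big(T\bar{S}_T\big) \;=\; T\bar{S}_T\!\left(1 + \log\!\left(1 + 1/\bar{S}_T\right) + \log|\mathcal{A}|\right).
\]
Combining Steps 1--3 and dividing by $T$ yields $\bar{H}_T(A^*) \leq \bar{S}_T\big(1 + \log(1 + 1/\bar{S}_T) + \log|\mathcal{A}|\big) + \tfrac{\log T}{T}$, as claimed. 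The only non-routine point — and the step I would flag as the crux — is that the switch count is not a.s.\ bounded; it is handled by the conditioning-plus-concavity argument above, while everything else reuses the counting in \cref{lem:combinatorial-bound}.
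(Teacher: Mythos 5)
Your proof is correct and follows essentially the same route as the paper: condition on the realized switch count, bound the conditional entropy by counting paths with a fixed number of switches (the content of Lemma \ref{lem:combinatorial-bound}), apply Jensen's inequality via concavity of $s\mapsto s(1+\log(1+T/s)+\log|\mathcal{A}|)$, and absorb $H(S)\le\log T$ into the additive $\log T/T$ term. The only cosmetic difference is that the paper uses the exact chain-rule identity $H(A^*_{1:T})=H(A^*_{1:T}\mid \mathcal{S}_T)+H(\mathcal{S}_T)$ (since $\mathcal{S}_T$ is a function of $A^*_{1:T}$) and cites the lemma's counting bound directly, whereas you re-derive it; both are fine.
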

This results follows from \cref{lem:combinatorial-bound} while slightly generalizing it by bounding the entropy rate with the expected number of switches instead of an almost sure upper bound.
Combining this result with \cref{cor:k-armed} gives a bound
\[
    \bar{\Delta}_T(\pi^{\rm TS}) \leq \tilde{O}\left( \sigma \sqrt{|\mathcal{A}| \cdot \bar{S}_T } \right),
\]
which precisely recovers the well-known results established for switching bandits \citep{auer02a,suk22,yadkori22}.
Although other features of the environment may change erratically, a low regret is achievable if the optimal action switches infrequently.

\paragraph{Bound with the entropy rate of latent state process.}
Although it can be illuminating to consider the number of switches or the effective horizon, the entropy rate is a deeper quantity that better captures a problem's intrinsic difficulty.
A simple but useful fact is that the entropy rate of the optimal action process cannot exceed that of the environment's state process:
\begin{remark}
Since the optimal action $A_t^*$ is completely determined by the latent state $\theta_t$, by data processing inequality,
\[
	\bar{H}_T(A^*) \leq H_T(\theta)
	, \quad
	\bar{H}_\infty(A^*) \leq \bar{H}_\infty(\theta).
\]
\end{remark}
These bounds can be useful when the environment's nonstationarity has predictable dynamics.
The next example illustrates such a situation, in which the entropy rate of the latent state process can be directly quantified.

\begin{example}[System with seasonality]
	Consider a system that exhibits a strong intraday seasonality.
	Specifically, suppose that the system's hourly state (e.g., arrival rate) at time $t$ can be modeled as
	\[
		\theta_t = \xi_{\text{day}(t)} \cdot \mu_{\text{time-of-the-day}(t)} + \epsilon_t,
	\]
	where $(\xi_d)_{d \in \mathbb{N}}$ is a sequence of i.i.d random variables describing the daily random fluctuation, $(\mu_h)_{h \in \{0,\ldots,23\}}$ is a known deterministic sequence describing the intraday pattern, and $(\epsilon_t)_{t \in \mathbb{N}}$ is a sequence of i.i.d random variables describing the hourly random fluctuation.
	Then we have 
	\begin{equation*} 
		\bar{H}_\infty(A^*) \leq \bar{H}_\infty(\theta) = \frac{1}{24}H(\xi) + H(\epsilon),
	\end{equation*}
	regardless of the state-action relationship.
	Imagine that the variation within the intraday pattern $\mu$ is large so that the optimal action changes almost every hour (i.e., $S_T \approx T$ and $\tau_\textup{eff} \approx 1$).
	In this case, the bound like above can be easier to compute and more meaningful than the bounds in \cref{prop:combinatorial-bound,prop:effective-horizon-bound}.
\end{example}

\subsection{Lower bound}\label{sec:lower}

We provide an impossibility result through the next theorem, showing that no algorithm can perform significantly better than the upper bounds provided in Corollary \ref{cor:k-armed} and implied by \cref{prop:effective-horizon-bound}.  Our proof is built by modifying well known lower bound examples for stationary bandits. 

\begin{proposition} \label{thm:lower-bound}
	Let $k > 1$ and $\tau \geq k$.
	There exists a nonstationary bandit problem instance  with $|\mathcal{A}|=k$ and $\tau_{\rm eff}=\tau$, such that
	$$ \inf_{\pi} \bar{\Delta}_\infty(\pi) \geq C \cdot \sigma \sqrt{ \frac{|\mathcal{A}|}{\tau_{\rm eff}} }, $$
    where $C$ is a universal constant.
\end{proposition}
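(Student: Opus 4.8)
The plan is to construct a piecewise-stationary $k$-armed bandit in the style of \cref{ex:piecewise}, where the optimal action performs a slow random walk and is deliberately hard to identify between switches, then invoke a standard stationary minimax lower bound on each stationary ``block.'' Concretely, I would let $(A_t^*)_{t\in\mathbb{N}}$ be the Markov switching process of \cref{ex:piecewise} with switch probability $\delta = 1/\tau$, so that $\tau_{\rm eff} = 1/\mathbb{P}(A_t^* \ne A_{t-1}^*) = \tau$ by construction, and couple the latent state $\theta_t$ to this process by setting $\theta_{t,a} = \tfrac{1}{2} + \Delta$ if $a = A_t^*$ and $\theta_{t,a} = \tfrac12$ otherwise, for a reward gap $\Delta$ to be tuned. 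Rewards are Bernoulli (hence sub-Gaussian with a universal variance proxy $\sigma$, up to rescaling). The point is that the identity of the current best arm is a fresh uniform draw from $k-1$ alternatives at each switch, and between switches it is frozen; the information available to the learner about which arm is best within a block comes only from pulling arms during that block.

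The key steps, in order: \textbf{(1)} Reduce per-period regret over the infinite horizon to per-period regret within a typical stationary block. Since switch times form a renewal process with i.i.d.\ Geometric$(1/\tau)$ inter-switch gaps, a block has expected length $\tau$; by the renewal-reward / ergodic theorem, $\bar\Delta_\infty(\pi)$ equals the expected regret accumulated over a block divided by the expected block length $\tau$. \textbf{(2)} Lower bound the within-block regret: at the start of each block the learner has no information distinguishing the new best arm from its $k-1$ rivals (the previous block's data is, by construction, uninformative about the new draw since the new optimal arm is independent of the past), so identifying it requires solving a fresh $(k-1)$-armed best-arm problem with gap $\Delta$. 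A standard change-of-measure / Bretagnolle--Huber argument (as in \citealp{bubeck12}, Theorem 3.5, or the $\Omega(\sqrt{kT})$ minimax lower bound for stochastic bandits) shows that over a block of length $m$, any policy incurs expected regret $\Omega(\min\{m\Delta,\ \sqrt{km}/\Delta \cdot \Delta\}) = \Omega(\min\{m\Delta, \sqrt{km}\})$; one must be slightly careful because block length $m$ is random (Geometric), but taking expectations and using $\E[\sqrt{M}] \asymp \sqrt{\tau}$ and $\E[M]=\tau$ for $M \sim$ Geometric$(1/\tau)$ preserves the bound up to constants. \textbf{(3)} Optimize $\Delta$: setting $\Delta \asymp \sqrt{k/\tau}$ balances the two terms, giving per-block regret $\Omega(\tau \Delta) = \Omega(\sqrt{k\tau})$, hence $\bar\Delta_\infty(\pi) = \Omega(\sqrt{k\tau}/\tau) = \Omega(\sqrt{k/\tau}) = \Omega(\sigma\sqrt{|\mathcal{A}|/\tau_{\rm eff}})$, with the constraint $\tau \ge k$ ensuring $\Delta \le 1/2$ so the Bernoulli parameters are valid. \textbf{(4)} Take the infimum over $\pi$, which is already built into the change-of-measure bound.

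The main obstacle I expect is handling the \emph{randomness of the block length} together with the infimum over all (history-dependent, possibly non-block-aware) policies in a clean way: a single policy sees one long interleaved stream and need not ``reset'' at switch points, so I cannot literally treat each block as an independent sub-experiment without argument. The fix is that the learner \emph{does} observe the switch events --- in the problem instance we may as well reveal switch times (this only helps the learner, strengthening the lower bound), so $\mathcal{F}_t$ determines the current block index; then conditioned on the past, the new optimal arm is genuinely uniform on $k-1$ arms independent of $\mathcal{F}$ at the switch, and the per-block analysis applies conditionally. A secondary technical point is converting the finite-horizon block bound into a statement about $\bar\Delta_\infty = \limsup_T \bar\Delta_T$; this follows because the regret lower bound per block is uniform across blocks, so the Cesàro average inherits it, and one should note the $\tau \geq k$ hypothesis is exactly what makes $\Delta \asymp \sqrt{k/\tau} \lesssim 1$ admissible. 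The rest --- verifying $\tau_{\rm eff} = \tau$, the sub-Gaussian constant, and that this instance lies in the scope of \cref{ex:k-armed-bandit} --- is routine.
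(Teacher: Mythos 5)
Your construction and argument essentially mirror the paper's proof: the paper likewise concatenates stationary blocks of length roughly $\frac{k-1}{k}\tau$ in which the best arm is redrawn uniformly with gap $\epsilon \asymp \sqrt{k/\tau}$, invokes the standard $\Omega(\sqrt{nk})$ Bayes lower bound for a $k$-armed block of length $n$, and divides by the block length, using a stationary renewal process (blocks of length $n$ or $n+1$ with an equilibrium initial delay) to make $(A_t^*)$ stationary with $\tau_{\rm eff}=\tau$ exactly, which sidesteps your random-block-length bookkeeping. The one slip in your version is the claim that the new optimal arm is independent of the past: in your Markov construction it is uniform over the \emph{other} $k-1$ arms and hence depends on the previous optimum; this is repaired either by also revealing the previous optimal arm at each switch (which only helps the learner and reduces each block to a clean $(k-1)$-armed problem, consistent with your reveal-the-switch-times device), or by redrawing uniformly from all $k$ arms and adjusting the switch probability by the factor $1-1/k$, exactly as the paper does.
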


\begin{remark}
	For the problem instance constructed in the proof, the entropy rate of optimal action process is $\bar{H}(A^*) \approx \log(| \mathcal{A} |)/\tau_{\rm eff}$.
	This implies that the upper bound established in \cref{cor:k-armed} is tight up to logarithmic factors, and so is the one established in \cref{thm:main-result}.
\end{remark}

\subsection{Main result}
The corollaries presented earlier are special cases of a general result that we present now. 
Define the (maximal) information ratio of an algorithm $\pi$ by,
\[
	\Gamma(\pi) := \sup_{t \in \mathbb{N}}  \underbrace{ \frac{ \left( \mathbb{E}_\pi\left[ R_{t,A_t^*} - R_{t,A_t} \right] \right)^2 }{ I\left( A_t^*;  (A_t, O_{t,A_t})  \mid \mathcal{F}_{t-1} \right) } }_{=: \Gamma_t(\pi)},
\]
The per-period information ratio $\Gamma_t(\pi)$ was defined by \citet{russo16} and presented in this form by \citet{russo22}. 
It is the ratio between the square of expected regret and the conditional mutual information between the optimal action and the algorithm's observation. 
It measures the cost, in terms of the square of expected regret, that the algorithm pays to acquire each bit of information about the optimum. 

The next theorem shows that any algorithm's per-period regret is bounded by the square root of the product of its information ratio and the entropy rate of the optimal action sequence.
The result has profound consequences, but follows easily by applying elegant properties of information measures. 
\begin{theorem} \label{thm:main-result}
	Under any algorithm $\pi$,
	\[
		\bar{\Delta}_T(\pi) \leq \sqrt{ \Gamma(\pi) \cdot \bar{H}_T(A^*)  },
		\quad \text{and} \quad
		\bar{\Delta}_\infty(\pi) \leq \sqrt{ \Gamma(\pi) \cdot \bar{H}_\infty(A^*) }.
	\]
	
\end{theorem}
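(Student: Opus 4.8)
The plan is to adapt the information-ratio argument of \citet{russo16,russo22}; the one genuinely new ingredient is a telescoping inequality that upgrades ``total information about the static optimal action'' to ``total information about the optimal action \emph{process}.'' Write $I_t := I\!\left(A_t^*;(A_t,O_{t,A_t})\mid\mathcal{F}_{t-1}\right)$, and note that since $O_t=g(A_t,\theta_t,W_t)=O_{t,A_t}$, the pair $Y_t:=(A_t,O_{t,A_t})$ is precisely the data extending $\mathcal{F}_{t-1}$ to $\mathcal{F}_t$, so $\mathcal{F}_{t-1}=(Y_1,\dots,Y_{t-1})$.

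First, if $\Gamma(\pi)=\infty$ the bound is vacuous, so assume $\Gamma(\pi)<\infty$. By definition of $\Gamma(\pi)$ as the supremum of the $\Gamma_t(\pi)$, for each $t$ we have $\bigl(\mathbb{E}_\pi[R_{t,A_t^*}-R_{t,A_t}]\bigr)^2\le\Gamma(\pi)\,I_t$, hence $\mathbb{E}_\pi[R_{t,A_t^*}-R_{t,A_t}]\le\sqrt{\Gamma(\pi)\,I_t}$. Summing over $t=1,\dots,T$ and applying Cauchy--Schwarz in the form $\sum_{t=1}^T\sqrt{I_t}\le\sqrt{T\sum_{t=1}^T I_t}$ gives
\[
\sum_{t=1}^T\mathbb{E}_\pi[R_{t,A_t^*}-R_{t,A_t}]\;\le\;\sqrt{\Gamma(\pi)}\sum_{t=1}^T\sqrt{I_t}\;\le\;\sqrt{\Gamma(\pi)\cdot T\cdot\sum_{t=1}^T I_t},
\]
so that $\bar{\Delta}_T(\pi)\le\sqrt{\Gamma(\pi)\cdot\frac{1}{T}\sum_{t=1}^T I_t}$.

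The remaining and conceptually central step is to show $\sum_{t=1}^T I_t\le H(A_{1:T}^*)$. Since $A_t^*$ is a coordinate of the vector $A_{1:T}^*$, the chain rule for mutual information gives $I_t\le I\!\left(A_{1:T}^*;Y_t\mid Y_{1:t-1}\right)$; summing and telescoping via the chain rule,
\[
\sum_{t=1}^T I\!\left(A_{1:T}^*;Y_t\mid Y_{1:t-1}\right)=I\!\left(A_{1:T}^*;Y_{1:T}\right)=I\!\left(A_{1:T}^*;\mathcal{F}_T\right)\le H(A_{1:T}^*),
\]
the last inequality because $A_{1:T}^*\in\mathcal{A}^T$ is discrete, so it has finite entropy dominating any mutual information involving it --- and, crucially, this does not require $O_t$ to be discrete. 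Dividing by $T$ yields $\bar{\Delta}_T(\pi)\le\sqrt{\Gamma(\pi)\cdot\bar{H}_T(A^*)}$, and taking $\limsup_{T\to\infty}$ --- using that $\Gamma(\pi)$ is independent of $T$ and $x\mapsto\sqrt{x}$ is continuous and nondecreasing --- gives the second inequality. I expect this last telescoping step to be the heart of the argument, since it is exactly where the entropy \emph{rate} of the optimal-action process emerges, generalizing the stationary identity ``information gathered about $A^*$ is at most $H(A^*)$.'' The only other points needing care are the vacuous case $\Gamma(\pi)=\infty$ (equivalently, $I_t=0$ while per-period regret is positive) and verifying the chain-rule manipulations remain valid for a general, possibly continuous, outcome space, which they do because the final bound only exploits discreteness of $A_{1:T}^*$.
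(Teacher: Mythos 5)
Your proposal is correct and follows essentially the same route as the paper's proof: bound per-period regret by $\sqrt{\Gamma(\pi)\,I_t}$, apply Cauchy--Schwarz to get $\sqrt{\Gamma(\pi)\cdot T\cdot\sum_t I_t}$, and then control $\sum_t I_t$ via the data-processing step $I_t\le I(A_{1:T}^*;Y_t\mid Y_{1:t-1})$, the chain rule telescoping to $I(A_{1:T}^*;\mathcal{F}_T)$, and the entropy bound $H(A_{1:T}^*)$. The minor differences (factoring out $\Gamma(\pi)$ before rather than after Cauchy--Schwarz, and explicitly dispatching the vacuous case $\Gamma(\pi)=\infty$) are cosmetic.
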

\begin{proof} Use the shorthand notation $\Delta_t := \mathbb{E}[ R_{t,A_t^*} - R_{t,A_t} ]$ for regret, $G_t := I( A_t^*; (A_t, O_{t,A_t}) | \mathcal{F}_{t-1} )$ for information gain, and $\Gamma_t = \Delta_t^2/G_t$ for the information ratio at period $t$. 
Then, 
	\[
		\sum_{t=1}^T \Delta_t
			= \sum_{t=1}^T \sqrt{\Gamma_t} \sqrt{G_t}
			\leq \sqrt{ \sum_{t=1}^T \Gamma_t } \sqrt{ \sum_{t=1}^T G_t },
	\]
	by Cauchy-Schwarz, and 
	\[
		\sqrt{ \sum_{t=1}^T \Gamma_t } \sqrt{ \sum_{t=1}^T G_t } \leq \sqrt{ \Gamma(\pi) \cdot T \cdot \sum_{t=1}^T G_t }.
	\]
	by definition of $\Gamma(\pi)$.
	We can further bound the information gain. 
	This uses the chain rule, the data processing inequality, and the fact that entropy bounds mutual information:
	\begin{align*}
		\sum_{t=1}^T G_t = \sum_{t=1}^T I( A_t^*; (A_t, O_{t,A_t}) | \mathcal{F}_{t-1} )
		\leq \sum_{t=1}^T I( [A_1^*,\ldots,A_T^*] ; (A_t, O_{t,A_t}) | \mathcal{F}_{t-1} )
		&= I(  [A_1^*,\ldots,A_T^*] ; \mathcal{F}_T ) \\
		& \leq H(  [A_1^*,\ldots,A_T^*] ).
	\end{align*}
	Combining these results, we obtain 
	\begin{align*}
		\bar{\Delta}_T(\pi) 
			\leq \frac{ \sqrt{ \Gamma(\pi) \cdot T \cdot H([A_1^*,\ldots,A_T^*]) } }{T}
			  = \sqrt{ \Gamma(\pi) \cdot \bar{H}_T(A^*) }.
	\end{align*}
	Taking limit yields the bound on regret rate $\bar{\Delta}_\infty(\pi)$.
\end{proof}
\begin{remark}%
	A careful reading of the proof reveals that it is possible to replace the entropy rate $\bar{H}_T(A^*)$ with the mutual information rate $\bar{I}_T(A^*; \mathcal{F})$.
\end{remark}

\begin{remark}\label{rem:lambda-info-ratio}
	Following \citet{lattimore21}, one can generalize the definition of information ratio,
	$$ \Gamma_\lambda(\pi) := \sup_{t \in \mathbb{N}} \frac{ \left( \mathbb{E}_\pi\left[ R_{t,A_t^*} - R_{t,A_t} \right] \right)^\lambda }{ I\left( A_t^*;  (A_t, O_{t,A_t})  \mid \mathcal{F}_{t-1} \right) }, $$
	which immediately yields an inequality, $\bar{\Delta}_T(\pi) \leq \left( \Gamma_\lambda(\pi) \bar{H}_T(A^*) \right)^{1/\lambda}$ for any $\lambda \geq 1$.
\end{remark}

\section{Bounds on the Information Ratio}\label{sec:information-ratio-bounds-old-and-new}

\subsection{Some known bounds on the information ratio} \label{sec:information-ratio-bounds}

We list some known results about the information ratio. These were originally established for stationary bandit problems but immediately extend to nonstationary settings considered in this paper. Most results apply to Thompson sampling, and essentially all bounds apply to Information-directed sampling, which is designed to minimize the information ratio \citep{russo18}. The first four results were shown by \citet{russo16} under \cref{ass:subgaussian}.
\begin{description}
	\item[Classical bandits.] $\Gamma(\pi^{\rm TS}) \leq 2\sigma^2|\mathcal{A}| $, for bandit tasks with a finite action set (e.g., \cref{ex:k-armed-bandit}).
	\item[Full information.] $\Gamma(\pi^{\rm TS}) \leq 2\sigma^2$, for problems with full-information feedback (e.g., \cref{ex:full-information}).
	\item[Linear bandits.] $\Gamma(\pi^{\rm TS}) \leq 2\sigma^2 d$, for linear bandits of dimension $d$ (i.e., $\mathcal{A} \subseteq \mathbb{R}^d$, $\Theta \subseteq \mathbb{R}^d$, and $\mathbb{R}[R_{t,a} | \theta_t] = a^\top \theta_t$).
	\item[Combinatorial bandits.] $\Gamma(\pi^{\rm TS}) \leq 2\sigma^2 \frac{d}{k^2}$, for combinatorial optimization tasks of selecting $k$ items out of $d$ items with semi-bandit feedback (e.g., \cref{ex:matching}). 
	\item[Contextual bandits.] See the below for a new result.  
	\item[Logistic bandits.] \citet{dong19} consider problems where mean-rewards follow a generalized linear model with logistic link function, and bound the information ratio by the dimension of the parameter vector and a new notion they call the `fragility dimension.'
	\item[Graph based feedback.]  With graph based feedback, the decision-maker observes not only the reward of selected arm but also the reward of its neighbors in feedback graph. One can bound the information ratio by the feedback graph's clique cover number \cite{liu18} or its independence number  \cite{hao2022contextual}.
	\item[Sparse linear models.] \citet{hao21} consider sparse linear bandits and show conditions under which the information ratio of Information-Directed Sampling in \cref{rem:lambda-info-ratio} is bounded by the number of nonzero elements in the parameter vector. 
	\item[Convex cost functions.] \citet{bubeck2016multi} and \citet{lattimore2020improved} study bandit learning problems where the reward function is known to be concave and bound the information ratio by a polynomial function of the dimension of the action space. 
\end{description}

\subsection{A new bound on the information ratio of contextual bandits}\label{subsec:contextual}
Contextual bandit problems are a  special case of our formulation that satisfy the following abstract assumption. 
Re-read \cref{ex:contextual-bandit} to get intuition. 
\begin{assumption}\label{assumption:contextual}
	There is a set $\mathcal{X}$ and an integer $k$ such that $\mathcal{A}$ is the set of functions mapping $\mathcal{X}$ to $[k]$. The observation at time $t$ is the tuple $O_t =(X_t, R_t) \in \mathcal{X}\times \mathbb{R}$.  
	Define $i_t := A_t(X_t)\in [k]$.	
	Assume that for each $t$,  $X_{t+1} \perp (A_t, R_t) \mid (X_t, \mathcal{F}_{t-1})$, 
	and $R_t \perp A_t \mid (X_t, i_t, \theta_t).$
\end{assumption}

Under this assumption, we provide an information ratio bound that depends on the number of arms $k$. 
It is a massive improvement over \cref{cor:k-armed}, which depends on the number of \emph{decision-rules}. 
\begin{lemma}\label{lem:contextual}
	Under Assumption \ref{assumption:contextual}, $\Gamma(\pi^{\rm TS}) \leq 2\cdot \sigma^{2} \cdot k$. 
\end{lemma}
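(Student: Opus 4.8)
The plan is to exploit the product structure of a contextual problem: conditional on the realized context $X_t$, period $t$ is a one-shot $k$-armed bandit, and Thompson sampling performs probability matching \emph{within each context}. Accordingly, I will bound the per-period information ratio $\Gamma_t(\pi^{\rm TS})=\Delta_t^2/G_t$, where $\Delta_t=\E[R_{t,A_t^*}-R_{t,A_t}]$ and $G_t=I(A_t^*;(A_t,O_{t,A_t})\mid\mathcal{F}_{t-1})$, and then take the supremum over $t$. Throughout, $i_t:=A_t(X_t)\in[k]$ denotes the played arm, and the key manipulations are carried out conditionally on $\mathcal{F}_{t-1}$ and, crucially, also on $X_t$.

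First I record the structural consequences of Assumption~\ref{assumption:contextual} together with the standing problem setup: (i) the context process is exogenous, so $\theta_t$ --- and hence $A_t^*$, a deterministic function of $\theta_t$ --- is conditionally independent of $X_t$ given $\mathcal{F}_{t-1}$; (ii) a Thompson sample is drawn from the posterior of $\theta_t$ given $\mathcal{F}_{t-1}$ without reference to $X_t$, so, combining this with the probability-matching identity \eqref{eq:ts}, one obtains the \emph{contextual probability matching} property $\PR(i_t=a\mid X_t=x,\mathcal{F}_{t-1})=\PR(A_t^*(x)=a\mid X_t=x,\mathcal{F}_{t-1})$ for all $x$ and $a\in[k]$; and (iii) $R_t\perp A_t\mid(X_t,i_t,\theta_t)$, so for each context $x$ there is a well-defined $\theta_t$-conditional mean reward for each arm and $A_t^*(x)$ is the arm maximizing it. Write $\Delta_t(x)$ for the conditional regret in context $x$ (the conditional expectation, given $X_t=x$ and $\mathcal{F}_{t-1}$, of the gap between the mean reward of $A_t^*(x)$ and that of $i_t$); it satisfies $\Delta_t(x)\ge 0$ and, by the tower property, $\Delta_t=\E[\Delta_t(X_t)\mid\mathcal{F}_{t-1}]$. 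Let $\iota_t(x):=I(A_t^*(x);(i_t,R_t)\mid X_t=x,\mathcal{F}_{t-1})$.

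Next I apply the classical bound inside each context. Conditioned on $(\mathcal{F}_{t-1},X_t=x)$, the triple (optimal arm $A_t^*(x)$, played arm $i_t$, observed reward $R_t$) has exactly the joint law arising in a one-period $k$-armed bandit run under probability matching, so the ``classical bandits'' information-ratio bound of \citet{russo16} in \cref{sec:information-ratio-bounds} (with $|\mathcal{A}|=k$) gives $\Delta_t(x)^2\le 2\sigma^2 k\cdot\iota_t(x)$. Taking square roots, averaging over $X_t$, and using concavity of $\sqrt{\cdot}$,
\[
\Delta_t=\E[\Delta_t(X_t)\mid\mathcal{F}_{t-1}]\le\sqrt{2\sigma^2 k}\cdot\E\big[\sqrt{\iota_t(X_t)}\,\big|\,\mathcal{F}_{t-1}\big]\le\sqrt{2\sigma^2 k\cdot\E[\iota_t(X_t)\mid\mathcal{F}_{t-1}]}.
\]
Since $O_{t,A_t}=O_t=(X_t,R_t)$, it remains to relate the averaged within-context information gain to $G_t$: one has $\E[\iota_t(X_t)\mid\mathcal{F}_{t-1}]=I(A_t^*(X_t);(i_t,R_t)\mid X_t,\mathcal{F}_{t-1})$, and because, given $X_t$, $A_t^*(X_t)$ is a function of $A_t^*$ while $(i_t,R_t)$ is a function of $(A_t,R_t)$, the conditional data-processing inequality together with $I(U;V\mid W)\le I(U;(V,W))$ yields
\[
\E[\iota_t(X_t)\mid\mathcal{F}_{t-1}]\le I(A_t^*;(A_t,R_t)\mid X_t,\mathcal{F}_{t-1})\le I(A_t^*;(A_t,R_t,X_t)\mid\mathcal{F}_{t-1})=G_t.
\]
Combining the last two displays gives $\Delta_t^2\le 2\sigma^2 k\cdot G_t$, i.e.\ $\Gamma_t(\pi^{\rm TS})\le 2\sigma^2 k$ for every $t$, and taking the supremum over $t$ proves the lemma.

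The step I expect to require the most care is (ii)--(iii): making precise that, after conditioning on a context, what remains is genuinely a $k$-armed bandit with probability matching --- in particular that the Thompson sample does not ``peek'' at $X_t$ and that contexts carry no passive information about $\theta_t$ --- so that the off-the-shelf $k$-armed bound can be invoked verbatim at the conditional level. Everything else is Cauchy--Schwarz/Jensen and two applications of data processing.
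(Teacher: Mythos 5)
Your proposal is correct and follows essentially the same route as the paper's proof: condition on $(X_t,\mathcal{F}_{t-1})$, derive the arm-level probability-matching property $\PR_t(i_t^*=i)=\PR_t(i_t=i)$ from \eqref{eq:ts} plus exogeneity of contexts, apply the $k$-armed information-ratio bound of \citet{russo16} within each context, average via Jensen, and then use data processing and the chain rule to pass from $I(i_t^*;(i_t,R_t)\mid X_t,\mathcal{F}_{t-1})$ to $I(A_t^*;(A_t,O_t)\mid\mathcal{F}_{t-1})$. The only differences are cosmetic (you take square roots and use concavity of $\sqrt{\cdot}$ where the paper applies Jensen to the square, and the final averaging over $\mathcal{F}_{t-1}$ should be made explicit since the numerator of $\Gamma_t$ is an unconditional expectation), neither of which affects the argument.
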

Theorem \ref{thm:main-result} therefore bounds regret in terms of the entropy rate of the optimal decision rule process $(A^*_t)_{t\in \mathbb{N}}$, the number of arms $k$, and the reward variance proxy $\sigma^2$.  

\citet{neu22} recently highlighted that information-ratio analysis seems not to deal adequately with context, and proposed a substantial modification which considers information gain about model parameters rather than optimal decision-rules.  
\cref{lem:contextual} appears to resolve this open question without changing the information ratio itself. 
Our bounds scale with the entropy of the optimal decision-rule, instead of the entropy of the true model parameter, as in \citet{neu22}. 
By the data processing inequality, the former is always smaller. 
Our proof bounds the per-period information ratio, so it can be used to provide finite time regret bounds for stationary contextual bandit problems. 
\citet{hao2022contextual} provide an interesting study of variants of Information-directed sampling in contextual bandits with complex information structure. 
It is not immediately clear how that work relates to \cref{lem:contextual} and the information ratio of Thompson sampling. 
	
The next corollary combines the information ratio bound above with the earlier bound of \cref{prop:effective-horizon-bound}. 
The bound depends on the number of arms, the dimension of the parameter space, and the effective time horizon. 
No further structural assumptions (e.g., linearity) are needed. 
An unfortunate feature of the result is that it applies only to parameter vectors that are quantized at scale $\epsilon$. 
The logarithmic dependence on $\epsilon$ is omitted in the $\tilde{O}(\cdot)$ notation, but displayed in the proof. 
When outcome distributions are smooth in $\theta_t$,  we believe this could be removed with careful analysis. 
\begin{corollary}\label{cor:contextual} Under \cref{assumption:contextual}, 
	if $\theta_t \in \{ -1, -1+\epsilon, \ldots, 1-\epsilon, 1 \}^{p}$ is a discretized $p$-dimensional vector, and the optimal policy process $(A^*_t)_{t\in \mathbb{N}}$ is stationary, then 
	\[
	\bar{\Delta}_\infty(\pi^{\rm TS}) \leq \tilde{O}\left( \sigma \sqrt{ \frac{p \cdot k}{\tau_\textup{eff}} } \right).
	\]
\end{corollary}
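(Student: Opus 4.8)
The plan is to chain together three results already in hand: the regret bound of \cref{thm:main-result}, the contextual information-ratio bound of \cref{lem:contextual}, and the effective-horizon bound on the entropy rate of \cref{prop:effective-horizon-bound}. Since $(A^*_t)_{t\in\mathbb{N}}$ is assumed stationary, all three apply. First I would invoke \cref{lem:contextual} to get $\Gamma(\pi^{\rm TS}) \leq 2\sigma^2 k$, and then \cref{thm:main-result} to obtain $\bar{\Delta}_\infty(\pi^{\rm TS}) \leq \sqrt{\Gamma(\pi^{\rm TS})\cdot\bar{H}_\infty(A^*)} \leq \sigma\sqrt{2k\cdot\bar{H}_\infty(A^*)}$. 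It then remains to show $\bar{H}_\infty(A^*) = \tilde{O}(p/\tau_\textup{eff})$.

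Next I would apply \cref{prop:effective-horizon-bound}, valid because $(A^*_t)$ is stationary: for every $T$, $\bar{H}_T(A^*) \leq \tau_\textup{eff}^{-1}\big(1 + \log\tau_\textup{eff} + H(A^*_t \mid A^*_t \ne A^*_{t-1})\big) + H(A^*_1)/T$, and letting $T\to\infty$ kills the last term, giving $\bar{H}_\infty(A^*) \leq \tau_\textup{eff}^{-1}\big(1 + \log\tau_\textup{eff} + H(A^*_t \mid A^*_t \ne A^*_{t-1})\big)$. The only nontrivial step is bounding the conditional entropy $H(A^*_t \mid A^*_t \ne A^*_{t-1})$: the crude estimate $\log|\mathcal{A}| = |\mathcal{X}|\log k$ is unacceptable here because $\mathcal{A}$ is a space of decision rules. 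Instead, note that under \cref{assumption:contextual} the optimal decision rule $A^*_t$ is a deterministic (measurable) function of $\theta_t$, and by hypothesis $\theta_t$ ranges over the finite set $\{-1, -1+\epsilon, \ldots, 1\}^{p}$ of cardinality at most $(1 + 2/\epsilon)^{p}$. Hence the conditional law of $A^*_t$ given the event $\{A^*_t \ne A^*_{t-1}\}$ is supported on at most $(1+2/\epsilon)^{p}$ points, so $H(A^*_t \mid A^*_t \ne A^*_{t-1}) \leq p\log(1 + 2/\epsilon)$.

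Combining, $\bar{H}_\infty(A^*) \leq \tau_\textup{eff}^{-1}\big(1 + \log\tau_\textup{eff} + p\log(1+2/\epsilon)\big)$, and therefore
\[
  \bar{\Delta}_\infty(\pi^{\rm TS}) \leq \sigma\sqrt{2k}\cdot\sqrt{\frac{1 + \log\tau_\textup{eff} + p\log(1+2/\epsilon)}{\tau_\textup{eff}}} = \tilde{O}\!\left(\sigma\sqrt{\frac{p\cdot k}{\tau_\textup{eff}}}\right),
\]
where the $\tilde{O}(\cdot)$ absorbs the factors logarithmic in $\tau_\textup{eff}$ and in $1/\epsilon$, exactly the dependence the statement promises to suppress. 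The main obstacle — and the sole place the discretization hypothesis is used — is the conditional-entropy estimate in the previous paragraph: without a cap on the number of distinct optimal decision rules, $H(A^*_t \mid A^*_t \ne A^*_{t-1})$ (and hence the entropy rate) could be as large as $\log|\mathcal{A}|$, which is exponential in $|\mathcal{X}|$, destroying the bound. One should also remark, as the surrounding text notes, that for outcome distributions depending smoothly on $\theta_t$ a covering/quantization argument would let one replace the discreteness assumption with a controlled $\epsilon$-net, retaining only the $\log(1/\epsilon)$ dependence.
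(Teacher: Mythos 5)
Your proposal is correct and follows essentially the same route as the paper: combine \cref{lem:contextual} with \cref{thm:main-result}, then bound $\bar{H}_\infty(A^*)$ via \cref{prop:effective-horizon-bound} using the discretization of $\theta_t$ to get $H(A^*_t \mid A^*_t \ne A^*_{t-1}) \leq p\log(2/\epsilon + 1)$. Your support-size argument for the conditional entropy is in fact slightly cleaner than the paper's intermediate step (which bounds it by $H(\theta_t)$), but the resulting bound and the overall structure are identical.
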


\section{Extension 1: Satisficing in the Face of Rapidly Evolving Latent States}\label{sec:sts}
In  nonstationary learning problems with short effective horizon, the decision-maker is continually uncertain about the latent states and latent optimal action. Algorithms like Thompson sampling explore aggressively in an attempt to resolve this uncertainty. But this effort may be futile, and whatever information they do acquire may have low  value since it cannot be exploited for long. 

Faced with rapidly evolving environments, smart algorithms should \emph{satisfice}. As noted by Herbert Simon (\cite{simon1979rational}, p. 498) when receiving his Nobel Prize, ``decision makers can satisfice either by finding optimum solutions for a simplified world, or by finding satisfactory solutions for a more realistic world.'' We focus on the latter case, and decision-makers that realistically model a rapidly evolving environment but seek a more stable decision-rule with satisfactory performance. 

We introduce a broad generalization of TS that performs probability matching with respect to a latent \emph{satisficing} action sequence instead of the latent optimal action sequence. Our theory gracefully generalizes to this case, and confirms that the decision-maker can attain rewards competitive with any satisficing action sequence whose entropy rate is low. Whereas earlier results were meaningful only if the optimal action sequence had low entropy rate --- effectively an assumption on the environment --- this result allows the decision-maker to compete with any low entropy action sequence regardless of the extent of nonstationarity in the true environment. We provide some implementable examples of satisficing action sequences in \cref{subsec:satisficing-examples}.

One can view this section as a broad generalization of the ideas in  \cite{russo22}, who explore the connection between satisficing, Thompson sampling, and information theory in stationary environments. It also bears a close conceptual relationship to the work of \cite{liu23}, which is discussed in detail in \cref{app:predictive-sampling}. In the adversarial bandit literature, the most similar work is that of \citet{auer02a}. They show a properly tuned exponential weighting algorithm attains low regret relative to any sequence of actions which switches infrequently.

\subsection{Satisficing regret: competing with a different benchmark}
In our formulation, the latent optimal action process $A^*=(A^*_t)_{t\in \mathbb{N}}$ serves as both a \emph{benchmark} and a \emph{learning target}. 
When calculating regret, the rewards a decision maker accrues are compared against the rewards $A^*$ would have accrued, using $A^*$ as a benchmark.
When defining TS, we implicitly direct it to resolve uncertainty about the latent optimal action $A^*$ through probability matching, using $A^*$ as a learning target.
When the environment evolves rapidly and unpredictably, it is impossible to learn about changes in the latent state quickly enough to exploit them, as $A^*$ does in such cases, the latent optimal action is not a reasonable benchmark or learning target. 

We introduce a \emph{satisficing action sequence} $A^\dagger = \big( A_t^\dagger \big)_{t \in \mathbb{N}}$ that serves as an alternative benchmark and learning target. We want the optimal action sequence $A^*=(A^*_t)_{t\in \mathbb{N}}$ to be a feasible choice of satisficing action sequence, so we must allow $A^{\dagger}$ to be a function of $\theta$. The next definition also allows for randomization in the choice. For now, this definition is very abstract, but we give specific examples in Section \ref{subsec:satisficing-examples}.
\begin{definition}\label{def:satisficing_action}
	A collection of random variables $A^\dagger = \big( A_t^\dagger \big)_{t \in \mathbb{N}}$ taking values in $\mathcal{A}^{\infty}$ is a satisficing action sequence if there exists a function $f(\cdot)$ and an exogenous random variable $\xi$ (independent of $\theta$, $W$ and $\tilde{W}$) such that $A^\dagger = f(\theta, \xi)$. 
\end{definition}

Replacing $A^*$ with $A^\dagger$ as a learning target,  we modify Thompson sampling so its exploration aims to resolve uncertainty about $A^\dagger$, which could be much simpler. \emph{Satisficing Thompson sampling} (STS) \citep{russo22} with respect to this satisficing action sequence is denoted by $\pi^\dagger$. It instantiates the probability matching with respect to $A^\dagger$ so that, under $\pi^\dagger$,
\begin{equation*}\label{eq:sts}
	\mathbb{P}(A_t=a \mid \mathcal{F}_{t-1})=\mathbb{P}(A_t^\dagger=a \mid \mathcal{F}_{t-1}),
\end{equation*}
for all $t \in \mathbb{N}$ and $a \in \mathcal{A}$.

Replacing $A^*$ with $A^\dagger$ as a benchmark, we define the regret rate of an action sequence $A=(A_t)_{t\in \mathbb{N}}$ (i.e., some sequence of non-anticipating random variables) with respect to a satisficing action sequence $A^{\dagger}$ to be
\[
\bar{\Delta}_{T}(A ; A^\dagger) := \frac{1}{T} \E\left[ \sum_{t=1}^{T} R_{t,A_t^\dagger}  - R_{t, A_t} \right], \qquad  \bar{\Delta}_{\infty}(A ; A^\dagger) := \limsup_{T \rightarrow \infty}\bar{\Delta}_{T}(A ; A^\dagger) . 
\] 
Each policy $\pi$ induces an action sequence $A$, so we can overload notation to write $\bar{\Delta}_{T}(\pi; A^\dagger)$.

\cref{thm:main-result} is easily generalized to bound satisficing regret.  Define the information ratio with respect to $A^\dagger$:
\begin{equation} \label{eq:information-ratio-sts}
	\Gamma(\pi;A^\dagger) := \sup_{t \in \mathbb{N}} \frac{ \left( \mathbb{E}_\pi\left[ R_{t,A_t^\dagger} - R_{t,A_t} \right] \right)^2 }{ I\left( A_t^\dagger; (A_t, O_{t,A_t}) \mid \mathcal{F}_{t-1} \right) }.
\end{equation}
It represents the (squared) cost that the algorithm $\pi$ needs to pay in order to acquire one unit of information about the satisficing action sequence $A^\dagger$.

Replicating the proof of \cref{thm:main-result} with respect to $A^\dagger$ yields a bound on regret relative to this alternative benchmark in terms of the information ratio with respect to this alternative benchmark. Intuitively, less information is needed to identify simpler satisficing action sequences, reducing the $\bar{I}_{\infty}(A^\dagger ; \theta)$ term in this bound. 
\begin{theorem} \label{thm:main-result-sts}
	For any algorithm $\pi$ and satisficing action sequence $A^{\dagger}$, 
	\[
	\bar{\Delta}_{\infty}(\pi; A^{\dagger}) \leq \sqrt{ \Gamma(\pi; A^{\dagger})\times \bar{I}_{\infty}(A^\dagger ; \theta)}.
	\]
	If the satisficing action sequence satisfies $A^{\dagger}=f(\theta_{1:T}, \xi)$ for some function $f(\cdot)$, then 
	\[
	\bar{\Delta}_T(\pi; A^{\dagger}) \leq \sqrt{ \Gamma(\pi; A^\dagger) \times \bar{I}_T(A^\dagger_{1:T} ; \theta_{1:T}) }. 
	\]
\end{theorem}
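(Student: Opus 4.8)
The plan is to replay the proof of \cref{thm:main-result} verbatim, with the learning target $A^*$ replaced throughout by the satisficing sequence $A^\dagger$, and then insert one additional data-processing step at the end that trades the observed history $\mathcal{F}_T$ for the latent state process $\theta$. Write $\Delta_t^\dagger := \E_\pi[R_{t,A_t^\dagger}-R_{t,A_t}]$, $G_t^\dagger := I(A_t^\dagger;(A_t,O_{t,A_t})\mid\mathcal{F}_{t-1})$, and $\Gamma_t^\dagger := (\Delta_t^\dagger)^2/G_t^\dagger$. Exactly as in \cref{thm:main-result}, Cauchy--Schwarz gives $\sum_{t=1}^T\Delta_t^\dagger \le \sqrt{\sum_{t=1}^T\Gamma_t^\dagger}\,\sqrt{\sum_{t=1}^T G_t^\dagger}$, definition \eqref{eq:information-ratio-sts} gives $\sum_{t=1}^T\Gamma_t^\dagger \le \Gamma(\pi;A^\dagger)\cdot T$, and --- since $A_t^\dagger$ is a deterministic function of $A^\dagger_{1:T}$ for $t\le T$, so $A_t^\dagger - A^\dagger_{1:T} - (A_t,O_{t,A_t})$ is a conditional Markov chain --- the data-processing inequality followed by the chain rule for mutual information collapses the information sum, $\sum_{t=1}^T G_t^\dagger \le \sum_{t=1}^T I(A^\dagger_{1:T};(A_t,O_{t,A_t})\mid\mathcal{F}_{t-1}) = I(A^\dagger_{1:T};\mathcal{F}_T)$.

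The only new ingredient is to bound $I(A^\dagger_{1:T};\mathcal{F}_T)$ by a mutual-information rate involving $\theta$, rather than by the entropy of $A^\dagger$ (which, unlike in the stationary case treated by \cref{thm:main-result}, could be infinite per period since $A^\dagger$ need not be constant in $t$). For this I would exploit the model's independence structure. Unrolling the structural equations $O_t=g(A_t,\theta_t,W_t)$ and $A_t=\pi_t(\mathcal{F}_{t-1},\tilde W_t)$ shows $\mathcal{F}_T$ is a deterministic function of $(\theta_{1:T},W_{1:T},\tilde W_{1:T})$, while $A^\dagger=f(\theta,\xi)$ with $\xi$ exogenous, i.e. independent of $(\theta,W,\tilde W)$. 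Since $(W,\tilde W)$ is independent of $(\theta,\xi)$, conditioning on $\theta_{1:T}$ leaves $\mathcal{F}_T$ and $A^\dagger_{1:T}$ independent, so $\mathcal{F}_T - \theta_{1:T} - A^\dagger_{1:T}$ is Markov and $I(A^\dagger_{1:T};\mathcal{F}_T)\le I(A^\dagger_{1:T};\theta_{1:T})=T\cdot\bar I_T(A^\dagger_{1:T};\theta_{1:T})$. Dividing by $T$ and combining the displays yields the finite-horizon claim; the stated hypothesis $A^\dagger=f(\theta_{1:T},\xi)$ makes $A^\dagger_{1:T}$ manifestly $\theta_{1:T}$-measurable and keeps this bookkeeping transparent (in fact the bound seems to survive without it, precisely because $\mathcal{F}_T$ already depends on $\theta$ only through $\theta_{1:T}$). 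Taking $\limsup_T$ and recalling $\bar I_\infty(A^\dagger;\theta)=\limsup_T\bar I_T(A^\dagger;\theta)$ gives the first inequality. As a sanity check, with $A^\dagger=A^*$ one has $I(A^\dagger_{1:T};\theta_{1:T})=H(A^*_{1:T})$ because $A^*_t$ is a function of $\theta_t$, recovering \cref{thm:main-result}.

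The main obstacle is not the inequality chain --- which is the established Cauchy--Schwarz / chain-rule skeleton of \cref{thm:main-result} --- but carefully justifying the two structural facts feeding the last step: that $\mathcal{F}_T$ is measurable with respect to $(\theta_{1:T},W_{1:T},\tilde W_{1:T})$ (using $O_t=g(A_t,\theta_t,W_t)$ and that actions do not influence the state, $(\theta_s)_{s\ge t+1}\perp\mathcal{F}_t\mid(\theta_\ell)_{\ell\le t}$), and that the disturbance and seed processes are jointly independent of $(\theta,\xi)$, so conditioning on $\theta_{1:T}$ leaves $A^\dagger_{1:T}\perp\mathcal{F}_T$. The conceptual subtlety worth flagging is that $A^\dagger$ may depend on the entire infinite path $\theta$, so the stationary-case argument of \cref{thm:main-result} cannot be quoted directly; one must route the information flow through $\theta$ (or $\theta_{1:T}$) explicitly.
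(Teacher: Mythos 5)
Your proposal is correct and follows essentially the same route as the paper's proof: replay the Cauchy--Schwarz/chain-rule argument of \cref{thm:main-result} with $A^\dagger$ in place of $A^*$, then replace $\mathcal{F}_T$ by $\theta_{1:T}$ using the conditional independence $A^\dagger_{1:T} \perp \mathcal{F}_T \mid \theta_{1:T}$. The paper phrases this last step as $I(A^\dagger_{1:T};\mathcal{F}_T) \le I(A^\dagger_{1:T};(\theta_{1:T},\mathcal{F}_T)) = I(A^\dagger_{1:T};\theta_{1:T})$ via the chain rule, which is equivalent to your data-processing step through the Markov chain $\mathcal{F}_T - \theta_{1:T} - A^\dagger_{1:T}$, and your careful justification of that conditional independence (via $\xi \perp (\theta,W,\tilde W)$ and $(\theta_s)_{s\ge t+1}\perp\mathcal{F}_t\mid(\theta_\ell)_{\ell\le t}$) is a valid elaboration of what the paper attributes to \cref{def:satisficing_action}.
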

Interpret the mutual information $\bar{I}_T(A^\dagger  ; \theta  ) $ as the rate at which bits about $\theta=\big( \theta_t \big)_{t \in \mathbb{N}}$ must be communicated in order to implement the changing decision rule $A^{\dagger}$. Interpret $\Gamma(\pi; A^\dagger)$ as the price (in terms of squared regret) that policy $\pi$ pays per bit of information acquired about $A^{\dagger}.$ As made clear in the next remark, satisficing TS controls the price per bit of information $\Gamma(\pi; A^\dagger)$ regardless of the choice of satisficing action sequence $A^{\dagger}$. \cref{thm:main-result-sts} therefore offers much stronger guarantees than \cref{thm:main-result} as whenever $I(A^{\dagger}; \theta) \lll H(A^*)$, i.e. whenever the the satisficing action sequence can be implemented while acquiring much less information about the latent states. 
\begin{remark} \label{rem:information-ratio-bounds-sts}
	All the upper bounds on $\Gamma(\pi^\text{TS};A^*)$ listed in \cref{sec:information-ratio-bounds} also apply for $\Gamma(\pi^\dagger;A^\dagger)$.
	Namely, for any satisficing action sequence $A^\dagger$, the corresponding STS $\pi^\dagger$ satisfies (a) $\Gamma(\pi^\dagger; A^\dagger) \leq 2\sigma^2 |\mathcal{A}| $ for bandit tasks with finite action set, (b) $\Gamma(\pi^\dagger; A^\dagger) \leq 2\sigma^2 $ for problems with full-information feedback, (c) $\Gamma(\pi^\dagger; A^\dagger) \leq 2\sigma^2 d $ for linear bandits, (d) $\Gamma(\pi^\dagger; A^\dagger) \leq 2\sigma^2 d/k^2 $ for combinatorial bandits, and (e) $\Gamma(\pi^\dagger; A^\dagger) \leq 2\sigma^2 k $ for contextual bandits. 
	
	Therefore it satisfies similar bounds on satisficing regret. For linear bandits, 
	\[
	\bar{\Delta}_{\infty}(\pi^{\dagger}; A^{\dagger}) \leq \sigma\sqrt{ 2d\times \bar{I}_{\infty}(A^\dagger ; \theta)}.
	\]
\end{remark}

\subsection{Examples of satisficing action sequences}\label{subsec:satisficing-examples}
The concept introduced above is quite abstract. Here we introduce three examples. Each aims to construct a satisficing action sequence that is near optimal (i.e., low $\bar{\Delta}_{\infty}(A^{\dagger}; A^*))$ while requiring limited information about $\theta$ (i.e., low $\bar{I}_{\infty}(A^{\dagger}; \theta)$).  As a warmup, we visualize in \cref{fig:example-satisficing} these examples of satisficing actions; they switch identity infrequently, ensuring they have low entropy, but are nevertheless very nearly optimal.

\begin{figure}[ht]
	\centering
 \begin{tikzpicture}
		\node at (0,0) {\includegraphics[width=.99\linewidth]{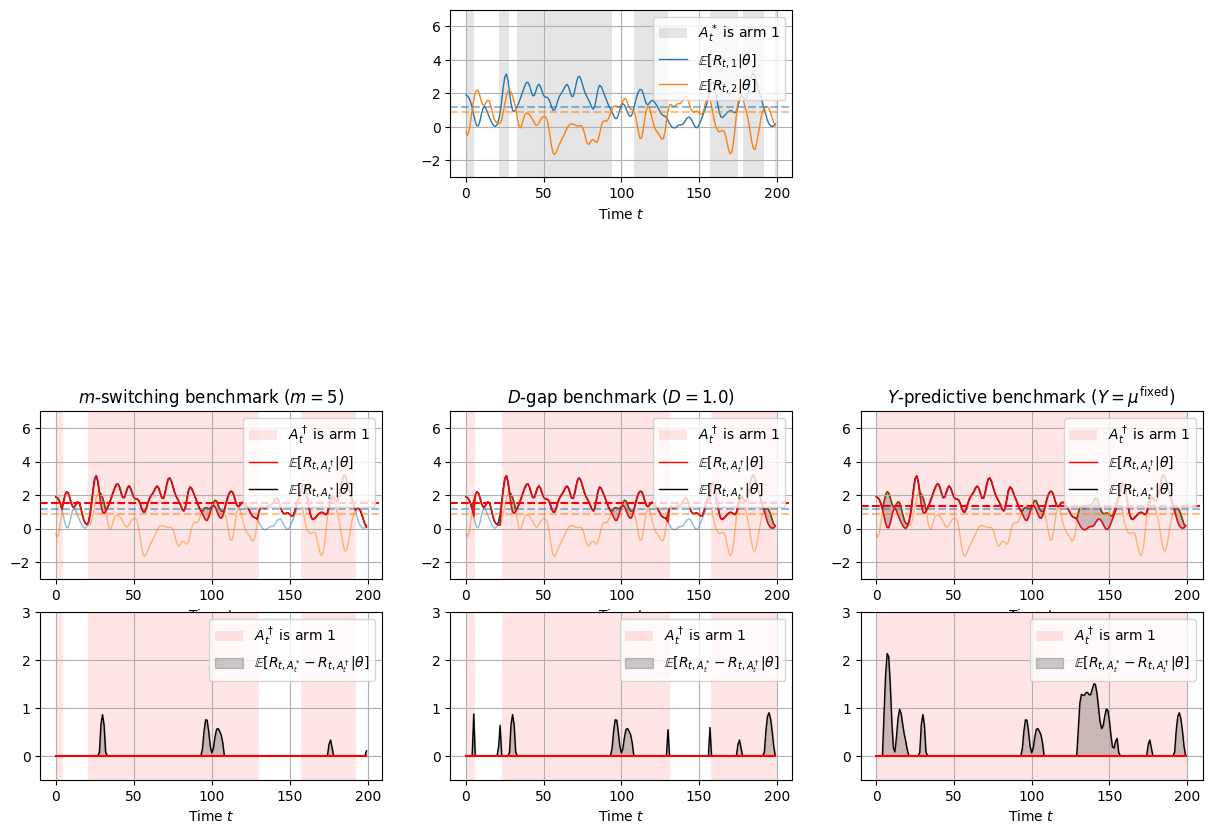}};
            \node at (0,2.5) (origin) {};
            \node at (-4,0.5) (ex1) {};
            \node at (0,0.5) (ex2) {};
            \node at (4,0.5) (ex3) {};
            \draw [->] (origin) -- (ex1);
            \draw [->] (origin) -- (ex2);
            \draw [->] (origin) -- (ex3);
 \end{tikzpicture}
		\caption{
                Three examples of satisficing actions in a two-armed bandit environment described below \cref{ex:sts-obscuring-info}.
                The top figure shows the performance of two arms, as visualized in \cref{fig:example-illustration}, where shaded vs. unshaded regions indicate which action is optimal at each time.
                For each choice of alternative benchmarks (suggested in \cref{ex:sts-restricting-switches,ex:sts-ignoring-small-suboptimality,ex:sts-obscuring-info}), we show two figures stacked vertically.
                The above ones plot the performance of alternative action sequences that are more stable, where shaded vs unshaded regions indicate which action the benchmark chooses.
                The below ones plot the suboptimality of these alternative benchmarks.
                    }
		\label{fig:example-satisficing}
\end{figure}

Our first example is the optimal action sequence among those that switch identity no more than $m$ times within $T$ periods. 
\begin{example}[Restricting number of switches]\label{ex:sts-restricting-switches} Define the number of switches in a sequence $a_{1:T} \in \mathcal{A}^T$ by  $\mathcal{S}_T(a_{1:T}) := 1 +\sum_{t=2}^{T} \mathbb{I}\{a_t \neq a_{t-1}\}$, a measure used in \eqref{eq:switching-rate}. Let
	\[
	(A^\dagger_1, \ldots, A^\dagger_T) \in \argmax_{a_{1:T}\in \mathcal{A}^T: \mathcal{S}_T(a_{1:T}) \leq m}  \frac{1}{T} \sum_{t=1}^{T}   \E\left[ R_{t,a_t} \mid \theta_t \right].
	\]
	be the best action sequence with fewer than $m$ switches. 
	Satisficing regret $\bar{\Delta}_{T}(\pi^\dagger; A^\dagger)$ measures the gap between the rewards its actions generate and the true best action sequence with few switches. By \cref{lem:combinatorial-bound}, one can bound the accumulated bits of information about this satisficing action sequence as:
	\[
	\bar{I}_{T}(A^{\dagger}_{1:T}; \theta) \leq 	\bar{H}_{T}(A^{\dagger}_{1:T}) \leq \frac{m}{T} \cdot \left( 1 + \log \left( 1 + \frac{T}{m} \right) + \log( |\mathcal{A}| ) \right).
	\]	
	Satisficing Thompson sampling can be implemented by at time $t$ drawing a sequence 
	\[ 
	\tilde{A}_{1:T} \in \argmax_{a_{1:T}\in \mathcal{A}^T: \mathcal{S}_T(a_{1:T}) \leq m}  \frac{1}{T} \sum_{\ell=1}^{T}   \E\left[ R_{\ell,a_t} \mid \theta_\ell = \tilde{\theta}_\ell \right]  \quad \text{where} \quad 
	(\tilde{\theta}_1, \ldots, \tilde{\theta}_T) \sim \mathbb{P}(\theta_{1:T} \in \cdot \mid \mathcal{F}_{t-1}). 
	\]
	and then picking $A_t = \tilde{A}_t$. This samples an arm according to the posterior probability it is the arm chosen in the true best action sequence with $m$ switches. A simple dynamic programming algorithm can be used to solve the optimization problem defining $\tilde{A}_{1:T}$. 
\end{example}
The next example also tries to create a more reasonable benchmark that switches less frequently. Here though, we define a satisficing action sequence that switches to a new arm only once an arm's subotimality exceeds some threshold. This leads to a version of satisficing Thompson sampling that has a simpler implementation.  
\begin{example}[Ignoring small suboptimality]\label{ex:sts-ignoring-small-suboptimality}
	Define a sequence of actions as follows:
	\begin{align*}
		A^\dagger_t = \begin{cases}
			A^\dagger_{t-1}  & \text{ if }  \E[R_{t, A_{t-1}^\dagger} \mid \theta_t] \geq \E[\max_{a \in \mathcal{A}}R_{t,a} \mid \theta_t] -D,\\
			\argmax_{a \in \mathcal{A}} \E[R_{t,a} \mid \theta_t] & \text{otherwise}.
		\end{cases}
	\end{align*}
	This sequence of arms switches only when an arm's suboptimality exceeds a distortion level $D>0$. The entropy of $A^\dagger$ will depend on the problem, but we will later bound it in problems with ``slow variation'' as in \cite{besbes14}. Algorithm \ref{alg:sts-ignoring-small-suboptimality} provides an implementation of satisficing Thompson sampling with respect to $A^\dagger$. 
\end{example}
\SetKwComment{Comment}{/* }{ */}
\begin{algorithm}
	\caption{Satisficing Thompson sampling in Example \ref{ex:sts-ignoring-small-suboptimality}}\label{alg:sts-ignoring-small-suboptimality}
	\KwIn{Distortion level $D>0$.}
	Define $\mu_{\theta_t}(a) = \E[R_{t,a} \mid \theta_t]$\;
	
	\For{$t=1,2,\ldots$}{
		Sample $(\tilde{\theta}_1, \ldots, \tilde{\theta}_t) \sim \mathbb{P}\left( (\theta_1, \ldots, \theta_t) \in \cdot \mid \mathcal{F}_{t-1}\right)$\;
  
		$A^{\dagger}_1 = \argmax \mu_{\tilde{\theta}_1}(a)$\;
  
        \For{$s=2,\ldots, t-1$}{
        \eIf{$\mu_{\tilde{\theta}_s}(A^{\dagger}_{s-1}) \geq \max_{a \in \mathcal{A}} \mu_{\tilde{\theta}_{s}}(a) - D$}{
			$A_s^\dagger \gets A^\dagger_{s-1}$\; 
		}{ $A^\dagger_{s} \gets \argmax_{a \in \mathcal{A}} \mu_{\tilde{\theta}_s}(a)$\;
		}
        }		
		Play $A_t=A_t^\dagger$, observe $O_t$ and update history\;
	}
\end{algorithm}

The next example induces satisficing in a very different way. Instead of altering how often the benchmark optimal action can change, it restricts the granularity of information about $\theta$ on which it can depend.  

\begin{example}[Optimizing with respect to obscured information]\label{ex:sts-obscuring-info}
	Let 
	\[
	A^\dagger_t = \argmax_{a \in \mathcal{A}} \E[R_{t,a}\mid Y] \quad \text{where} \quad Y = h( \theta, \xi ),
	\]
	for some function $h(\cdot)$. Since $A^\dagger - Y - \theta$ forms a Markov chain, choosing a $Y$ that reveals limited information about $\theta$ restricts the magnitude of the mutual information $\bar{I}_{\infty}(A^{\dagger}; \theta)$. 
 
	To implement satisficing TS, one needs to draw from the posterior distribution of  $A_t^\dagger$. Since $A_t^\dagger \in \argmax_{a\in \mathcal{A}} \mu_{t,a}(Y)$ where  $\mu_t(Y) := \E[ R_{t,a} \mid Y]$, we can sample from the posterior just as we do in TS: 
	one needs to draw a sample $\tilde{\mu}_{t} \sim \mathbb{P}( \mu_{t}(Y) \in \cdot \mid \mathcal{F}_{t-1})$ from the posterior distribution of $\mu_{t}(Y)$ an then pick $A_t \in \argmax_{a \in \mathcal{A}} \tilde{\mu}_{t,a}$. 

    There are two natural strategies for providing obscured information $Y$ about the latent states $\theta$. First, one can provide noise corrupted view, like $Y= \theta + \xi$ where $\xi$ is mean-zero noise. For stationary problems, this is explored in \cite{russo22}. For the $k$-armed nonstationary bandits in \cref{ex:k-armed-bandit}, providing fictitious reward observations $Y=(\theta_{t,a} + \xi_{t,a})_{t\in \mathbb{N}, a\in \mathcal{A}}$, where $\xi_{t,a}\sim N(0,\sigma^2)$, is similar to the proposed algorithm in \cite{liu23}. The second approach one can take is to reveal some simple summary statistics of $\theta$. 
    
    We describe a simple illustration of the potential benefits of the second approach. Consider a problem with bandit feedback (i.e. $O_{t,a}= R_{t,a}$) and Gaussian reward observations; that is, 
    \[
        R_{t,a} = \theta_{t,a}+W_{t,a}, \quad \theta_{t,a} = \mu_a^\text{fixed} + \mu_{t,a}^\text{GP},
    \]
    where $W_{t,a} \sim \mathcal{N}(0,\sigma^2)$ is i.i.d Gaussian noise, $\mu_a^\text{fixed} \sim \mathcal{N}(0, v^2)$, and $\mu_a^\text{GP} = (\mu_{t,a}^\text{GP})_{t \in \mathbb{N}}$ follows a Gaussian process such that $\textup{Cov}(\mu_{t,a}^\text{GP}, \mu_{s,a}^\text{GP}) = \exp\left( -\frac{1}{2} \left( \frac{t-s}{\tau} \right)^2 \right)$. 
    
    We construct a variant of satisficing TS that should vastly outperform regular TS when $\tau \approx 0$. 
    Choose $Y$ as
    \[
        Y = \big( \mu_a^\text{fixed} \big)_{a \in \mathcal{A}}.
    \]
    Since $\mathbb{E}[ R_{t,a} | Y ] = \mu_a^\text{fixed}$, we have $A_1^\dagger = A_2^\dagger = \ldots = \argmax_{a \in \mathcal{A}} \mu_a^\text{fixed}$, and satisficing TS works as follows.
	\begin{description}
		\item[Satisficing TS:] Sample $\tilde{\mu}_{t,a} \sim \mathcal{N}\left( \E[ \mu_a^\text{fixed} \mid \mathcal{F}_{t-1} ]  \, , \, {\rm Var}\left( \mu_a^\text{fixed} \mid \mathcal{F}_{t-1}\right) \right)$ and pick $A_t \in \argmax_{a \in \mathcal{A}} \tilde{\mu}_{t,a}$. 
		\item[Regular TS:]	 Sample $\tilde{\mu}_{t,a} \sim \mathcal{N}\left( \E[ \mu_a^\text{fixed} + \mu_{t,a}^\text{GP} \mid \mathcal{F}_{t-1} ]  \, , \, {\rm Var}\left( \mu_a^\text{fixed} + \mu_{t,a}^\text{GP}  \mid \mathcal{F}_{t-1}\right)\right)$ and pick $A_t \in \argmax_{a \in \mathcal{A}} \tilde{\mu}_{t,a}$. 
 	\end{description}	
    To understand the difference between these algorithms, consider the limiting regimes in which $\tau \to \infty$ and $\tau \to 0$. In the first case, environment dynamics are so slow that the problems looks like a statationry $k$-armed bandit. 
    
    When $\tau \to 0$, the latent states evolve so rapidly and erratically that the problem looks is again equivalent to a stationary problem. Precisely, as $\tau \to 0$, $\textup{Cov}[\mu_{t,a}^\text{GP}, \mu_{s,a}^\text{GP}] \to \mathbb{I}\{t = s\}$ for any $t,s \in \mathbb{N}$; the process $(\mu_{t,a}^\text{GP})_{t \in \mathbb{N}}$ becomes a white noise process, i.e., behaves almost like a sequence of i.i.d. random variables distributed with $\mathcal{N}(0,1^2)$.
    The overall problem looks like a stationary $k$-armed Gaussian bandit with prior $\mathcal{N}(0,v^2)$ and i.i.d. noise with law $\mathcal{N}(0,\sigma^2 + 1^2)$.

    Satisficing TS is similar in the $\tau \to 0$ and $\tau \to \infty$ limits; in ether case it attempts to identify the arm with best performance throughout the horizon.  But regular TS behaves very differently in these two limits -- making its behavior seemingly incoherent. As $\tau \to \infty$, regular TS coincides with satisficing TS. But as $\tau \to 0$, the samples maximized by regular TS always have high variance, causing the algorithm to explore in a futile attempt to resolve uncertainty about the white noise process $(\mu_{t,a}^\text{GP})_{t \in \mathbb{N}}$.
\end{example}

\section{Extension 2: Generalizing the Entropy Rate with Rate Distortion Theory }\label{sec:rate-distortion} 

Our main result in Theorem \ref{thm:main-result} bounds regret in a large class of sequential learning problems in terms of the entropy rate of the optimal action process --- linking the theory of exploration with the theory of optimal lossless compression. In this section, we replace the dependence of our bounds on the entropy rate with dependence on a \emph{rate distortion function}, yielding our most general and strongest bounds. The section result is quite abstract, but reveals  an intriguing connection between interactive decision-making and compression. 

 As an application, we bound the rate distortion function for environments in which the reward process has low total variation rate. This recovers regret bounds that are similar to those in the influential work of \cite{besbes15}, and greatly strengthens what Theorem \ref{thm:main-result} guarantees under this assumption.
 
While this section is abstract,  the proofs follow easily  from attempting to instantiate \cref{thm:main-result-sts} with the best possible choice of satisficing action sequence. Like \cref{thm:main-result-sts}, the results generalize \cite{russo22} to nonstationary environments.

\subsection{Rate distortion and lossy compression}
To derive a rate distortion function, we momentarily ignore the sequential learning problem which is our focus. Consider a simpler communication problem. Imagine that some observer knows the latent states of the environment; their challenge is to encode this knowledge as succinctly as possible and transmit the information to a downstream decision-maker.

 As a warmup, \cref{subfig:lossy_state_estimation} depicts a more typical problem arising in the theory of optimal communication, in which the goal is to succinctly encode latent states themselves. 
 Assuming the law of the latent states $(\theta_1, \theta_2, \ldots)$ is known, one wants to design a procedure that efficiently communicates information about the realization across a noisy channel. Having observed $\theta_{1:T} = (\theta_1, \theta_2, \ldots, \theta_T)$, the observer transmits some signal $f_T(\theta_{1:T})$ to a decoder, which recovers an estimate $\hat{\theta}_{1:T}$. The encoder is said to communicate at bit-rate $n$ if $f_{T}:\Theta^T \to \{0,1\}^{n\cdot T}$. While $\theta_{1:T}$ could be very complex, $f_T(\theta_{1:T})$ can be encoded in a binary string of length $n \cdot T$. Shannon proved that lossless communication is possible as $T\to \infty$ while transmitting at any bit-rate exceeding the entropy rate\footnote{We use the natural logarithm when defining entropy in this paper. The equivalence between minimal the bit-rate of a binary encoder and the entropy rate requires that log base 2 is used instead.}  of $\theta$. If one is willing to tolerate imperfect reconstruction of the latent states, then the bit-rate can be further reduced. Rate distortion theory quantifies the necessary bit-rate to achieve a given recovery loss, e.g. $\E\left[ \frac{1}{T}\sum_{t=1}^{T} \| \theta_t - \hat{\theta}_t \|^2 \right]\leq D$ in \cref{subfig:lossy_state_estimation}.

 \cref{subfig:lossy_decision-making} applies similar reasoning to a decision-making problem.  Having observed $\theta_{1:T} = (\theta_1, \theta_2, \ldots, \theta_T)$, the observer transmits some signal $f_T(\theta_{1:T})$ to a decision-maker, who processes that information and implements the decision $A^\dagger_{1:T}$. It is possible for the decision-maker to losslessly recover the optimal action process if and only if the encoder communicates at bit-rate exceeding the entropy rate of the optimal action process $\bar{H}_{\infty}(A^*)$. Rate distortion theory tells us that the decision-maker can achieve regret-rate lower than some distortion level $D$ when the bit-rate exceeds $\bar{\mathcal{R}}_{\infty}(D)$ defined as
    \begin{equation}\label{eq:rate-disortion-function}
    \bar{\mathcal{R}}_{\infty}(D) := \inf_{A^{\dagger}}  \bar{I}_{\infty}(A^\dagger  ; \theta  )  \quad \text{subject to}  \quad 		\bar{\Delta}_\infty(A^\dagger;A^*) \leq D.
    \end{equation}
    The function $\bar{\mathcal{R}}_{\infty}: \mathbb{R}_+ \to \mathbb{R}_+$ is called the rate-distortion function. It minimizes the information about the latent states required 
    to implement a satisficing action sequence $A^\dagger$ (see \cref{def:satisficing_action}), over all choices with regret rate less than $D$. For any fixed horizon $T$, one can analogously define 
    \begin{equation}\label{eq:rate-disortion-function-fixed-T}
    \bar{\mathcal{R}}_{T}(D) := \inf_{A^{\dagger}_{1:T}}  \bar{I}_{T}(A^\dagger_{1:T}  ; \theta  )  \quad \text{subject to}  \quad 		\bar{\Delta}_T(A^\dagger_{1:T}; A^*_{1:T}) \leq D.
    \end{equation}

   \begin{figure}
    \begin{framed}
    \centering

        \begin{subfigure}{.99\linewidth} 
            \centering
            \begin{tikzpicture}[>=latex]
        \node [draw, rectangle, minimum width=2cm, minimum height=1cm] (encoder) at (0, 0) {Encoder};
        \node (theta) at (-2.5, 0) {$\theta_{1:T}$};
    
        \node [draw, rectangle, minimum width=2cm, minimum height=1cm] (decoder) at (5, 0) {Decoder};
    
        \node [draw, rectangle, dashed, rounded corners,  minimum width=4cm, minimum height=1cm,  text width=3cm, align=center] (loss) at (10, 0) {Loss: \, \\ $\frac{1}{T} \sum_{t=1}^{T} \|\theta_t - \hat{\theta}_t \|^2$};


        \draw[dashed, ->, bend right=20] (theta) to node[above] {} (loss);
        
        \draw[->] (encoder) -- node[above] {$f_{T}(\theta_{1:T})$} (decoder);
        \draw[->] (theta) -- (encoder);
        \draw[->] (decoder) -- node[above] {$\hat{\theta}_{1:T}$} (loss);    
         \end{tikzpicture}
        \caption{Optimal lossy compression for state estimation}
        \label{subfig:lossy_state_estimation}
    \end{subfigure}

    \vspace{7mm} 
    \begin{subfigure}{.99\linewidth} 
        \centering
        \begin{tikzpicture}[>=latex]
        \node [draw, rectangle, minimum width=2cm, minimum height=1cm] (encoder) at (0, 0) {Encoder};
        \node (theta) at (-2.5, 0) {$\theta_{1:T}$};
    
        \node [draw, rectangle, minimum width=2cm, minimum height=1cm] (decoder) at (5, 0) {Decision-rule};
    
        \node [draw, rectangle, dashed, rounded corners,  minimum width=3.5cm, minimum height=1cm,  text width=2cm, align=center] (loss) at (10, 0) {Regret: \, \\ $\bar{\Delta}_{T}(A^{\dagger}_{1:T} ; A^*_{1:T})$};

        \draw[dashed, ->, bend right=20] (theta) to node[above] {} (loss);

        \draw[->] (encoder) -- node[above] {$f_{T}(\theta_{1:T})$} (decoder);
        \draw[->] (theta) -- (encoder);
        \draw[->] (decoder) -- node[above] {$A^\dagger_{1:T}$} (loss);    
        \end{tikzpicture}
        \caption{Optimal lossy compression for decision-making}
            \label{subfig:lossy_decision-making}
    \end{subfigure}
        \vspace{7mm} 
    \caption{Lossy compression of the environment states}
\end{framed}
\end{figure}

\subsection{General regret bounds in terms of the rate distortion function}

As discussed in Remark \ref{rem:information-ratio-bounds-sts}, many of the most widely studied online learning problems have the feature that there is a uniform upper bound on the information ratio. While there is a price (in terms of squared regret suffered)
for acquiring information, the price-per-bit does not explode regardless of the information one seeks. 

For such problems, we can bound the attainable regret-rate of a DM who learns through interaction by solving the kind of lossy compression problem described above. Namely, we bound regret in terms the information ratio and the problem's rate distortion function. At this point, the proof is a trivial consequence of Theorem \ref{thm:main-result-sts}. But apriori, the possibility of such a result is quite unclear. 

\begin{theorem} \label{thm:rate-distortion}
	Suppose that $\Gamma_{U} := \sup_{A^\dagger}  \inf_{\pi} \Gamma(\pi; A^\dagger) < \infty$. Then 
	\[
	\inf_{\pi} \bar{\Delta}_\infty(\pi) \leq \inf_{D\geq 0} \left\{ D  + \sqrt{ \Gamma_{U} \cdot \bar{\mathcal{R}}_\infty(D)} \right\}.
	\]
    Similarly, for any finite $T<\infty$, 
    \[
    \inf_{\pi} \bar{\Delta}_T(\pi) \leq \inf_{D\geq 0} \left\{ D  + \sqrt{ \Gamma_{U} \cdot \bar{\mathcal{R}}_T(D)} \right\}.
    \]    
\end{theorem}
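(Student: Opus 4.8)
The plan is to obtain \cref{thm:rate-distortion} as an essentially immediate corollary of \cref{thm:main-result-sts}, by decomposing regret against $A^*$ through an intermediate \emph{satisficing} benchmark $A^\dagger$ and then optimizing the choice of that benchmark. The starting observation is that regret telescopes across benchmarks: for any policy $\pi$ and any satisficing action sequence $A^\dagger$ (in the sense of \cref{def:satisficing_action}), the definitions give the exact identity
\[
\bar{\Delta}_T(\pi) \;=\; \bar{\Delta}_T(\pi; A^*) \;=\; \bar{\Delta}_T(A^\dagger; A^*) \;+\; \bar{\Delta}_T(\pi; A^\dagger),
\]
since $R_{t,A_t^*}-R_{t,A_t} = (R_{t,A_t^*}-R_{t,A_t^\dagger}) + (R_{t,A_t^\dagger}-R_{t,A_t})$ termwise inside the expectation. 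Passing to $\limsup_{T\to\infty}$ and using subadditivity of $\limsup$ yields $\bar{\Delta}_\infty(\pi) \le \bar{\Delta}_\infty(A^\dagger; A^*) + \bar{\Delta}_\infty(\pi; A^\dagger)$.

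Next I would fix a distortion level $D \ge 0$ and a slack $\varepsilon>0$. By the definition of the rate-distortion function in \eqref{eq:rate-disortion-function}, there is a satisficing action sequence $A^\dagger$ with $\bar{\Delta}_\infty(A^\dagger; A^*) \le D$ and $\bar{I}_\infty(A^\dagger;\theta) \le \bar{\mathcal{R}}_\infty(D) + \varepsilon$. For this now-fixed $A^\dagger$, the hypothesis $\Gamma_U = \sup_{A^\dagger}\inf_\pi \Gamma(\pi;A^\dagger) < \infty$ guarantees a policy $\pi$ with $\Gamma(\pi; A^\dagger) \le \Gamma_U + \varepsilon$. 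Applying \cref{thm:main-result-sts} to this pair then gives $\bar{\Delta}_\infty(\pi; A^\dagger) \le \sqrt{\Gamma(\pi;A^\dagger)\,\bar{I}_\infty(A^\dagger;\theta)} \le \sqrt{(\Gamma_U+\varepsilon)(\bar{\mathcal{R}}_\infty(D)+\varepsilon)}$. Combining with the decomposition above, then sending $\varepsilon \downarrow 0$, gives $\inf_\pi \bar{\Delta}_\infty(\pi) \le D + \sqrt{\Gamma_U\,\bar{\mathcal{R}}_\infty(D)}$, and taking the infimum over $D \ge 0$ finishes the first claim.

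The finite-horizon statement follows the same template verbatim, using the second bound in \cref{thm:main-result-sts} (valid for $A^\dagger = f(\theta_{1:T},\xi)$, which loses nothing for a length-$T$ satisficing sequence as it can be taken to depend only on $\theta_{1:T}$ and exogenous randomness) together with the definition \eqref{eq:rate-disortion-function-fixed-T} of $\bar{\mathcal{R}}_T(D)$; here the regret identity is already exact at horizon $T$, so no $\limsup$ and hence no subadditivity step is needed. There is no genuine obstacle in this argument — all the substance sits in \cref{thm:main-result-sts} — so the only points requiring care are bookkeeping: (i) choosing $A^\dagger$ (near-optimal for the rate-distortion problem) \emph{before} choosing $\pi$ (near-optimal for $\inf_\pi\Gamma(\pi;A^\dagger)$), so that $\pi$ is allowed to depend on the fixed $A^\dagger$; (ii) carrying an $\varepsilon$-slack to replace the two infima in \eqref{eq:rate-disortion-function} and in the definition of $\Gamma_U$ by attainable values and then letting it vanish; and (iii) using subadditivity rather than additivity of $\limsup$ in the infinite-horizon case.
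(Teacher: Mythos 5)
Your proposal is correct and follows essentially the same route as the paper: apply \cref{thm:main-result-sts} with an arbitrary satisficing sequence, use the exact decomposition $\bar{\Delta}_T(\pi)=\bar{\Delta}_T(A^\dagger;A^*)+\bar{\Delta}_T(\pi;A^\dagger)$, take the infimum over $\pi$ (bounding $\inf_\pi\Gamma(\pi;A^\dagger)$ by $\Gamma_U$), then infimize over satisficing sequences with $\bar{\Delta}(A^\dagger;A^*)\le D$ and finally over $D$. Your $\varepsilon$-slack bookkeeping and the explicit use of subadditivity of $\limsup$ merely make precise steps the paper leaves implicit.
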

\begin{proof}
Theorem \ref{thm:main-result-sts} shows that for any satisficing action sequence and policy $\pi$, 
\[
\bar{\Delta}_{\infty}(\pi)    \leq \bar{\Delta}_{\infty}(A^{\dagger}; A^*) +  \sqrt{ \Gamma(\pi, A^\dagger) \times \bar{I}_{\infty}(A^\dagger ; \theta)}.
\]
Taking the infimum over $\pi$ yields
	\begin{equation}\label{eq:bound-for-all-benchmarks}
\inf_{\pi} \bar{\Delta}_{\infty}(\pi)    \leq \bar{\Delta}_{\infty}(A^{\dagger}; A^*) +  \sqrt{ \Gamma_U \times \bar{I}_{\infty}(A^\dagger ; \theta)}.
\end{equation}    
Taking the infimum over of the right-hand side over satisficing action sequences with $\bar{\Delta}_{\infty}(A^{\dagger}; A^*) \leq D$, and then minimizing over $D$ yields the result. 
\end{proof}

As in \cref{sec:dregret}, we provide some more easily interpreted special cases of our result. The next result is an analogue of Corollary \ref{cor:k-armed} which upper bounds the attainable regret-rate in terms of the number of actions and the rate-distortion function $\bar{\cal R}(D)$.  For brevity, we state the result only for the infinite horizon regret rate. 
\begin{corollary}\label{cor:k-armed-rate-distortion} Under any problem in the scope of our formulation,
    \[
    \inf_{\pi} \bar{\Delta}_{\infty}(\pi) \leq \inf_{D\geq 0} \left\{ D+ \sigma \sqrt{2 \cdot |\mathcal{A}| \cdot \bar{R}_{\infty}(D)} \right\}.
    \] 
\end{corollary}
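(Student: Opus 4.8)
The plan is to obtain this as a one-line specialization of Theorem~\ref{thm:rate-distortion}; the only real work is to verify the hypothesis $\Gamma_U < \infty$ and to replace $\Gamma_U$ by an explicit constant. Recall that $\Gamma_U := \sup_{A^\dagger} \inf_{\pi} \Gamma(\pi; A^\dagger)$, where the supremum ranges over all satisficing action sequences in the sense of \cref{def:satisficing_action}.

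First I would invoke \cref{rem:information-ratio-bounds-sts}, which asserts that for \emph{every} satisficing action sequence $A^\dagger$ the associated satisficing Thompson sampling policy $\pi^\dagger$ inherits the classical finite-action information-ratio bound, namely $\Gamma(\pi^\dagger; A^\dagger) \leq 2\sigma^2 |\mathcal{A}|$. Since $\inf_\pi \Gamma(\pi; A^\dagger) \leq \Gamma(\pi^\dagger; A^\dagger)$ and the right-hand side does not depend on $A^\dagger$, taking the supremum over $A^\dagger$ gives $\Gamma_U \leq 2\sigma^2|\mathcal{A}| < \infty$. Hence Theorem~\ref{thm:rate-distortion} applies.

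Second I would substitute this bound into the conclusion of Theorem~\ref{thm:rate-distortion}. For each $D \geq 0$,
\[
\inf_\pi \bar{\Delta}_\infty(\pi) \;\leq\; D + \sqrt{\Gamma_U \cdot \bar{\mathcal{R}}_\infty(D)} \;\leq\; D + \sigma\sqrt{2\,|\mathcal{A}|\,\bar{\mathcal{R}}_\infty(D)},
\]
and taking the infimum of the right-hand side over $D \geq 0$ (noting that $\bar{R}_\infty$ in the statement is the rate-distortion function $\bar{\mathcal{R}}_\infty$) yields the claim.

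I do not expect a genuine obstacle here: the substance is entirely carried by Theorem~\ref{thm:rate-distortion} (itself a consequence of \cref{thm:main-result-sts}) and by the already-established uniform information-ratio bound. The single subtlety worth flagging in the write-up is that the finite-action information-ratio bound must hold for satisficing TS relative to \emph{arbitrary} satisficing sequences, not merely the latent optimal sequence $A^*$; this uniformity --- supplied by \cref{rem:information-ratio-bounds-sts} --- is exactly what keeps the benchmark-supremum $\Gamma_U$ finite and lets the rate-distortion machinery go through.
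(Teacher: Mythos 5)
Your proposal is correct and matches the paper's intended argument: the corollary is exactly the specialization of \cref{thm:rate-distortion} obtained by bounding $\Gamma_U \leq 2\sigma^2|\mathcal{A}|$ via the uniform satisficing-TS information-ratio bound of \cref{rem:information-ratio-bounds-sts}. Your flagged subtlety --- that the bound must hold for arbitrary satisficing sequences $A^\dagger$, not just $A^*$ --- is precisely the point the paper relies on, so nothing is missing.
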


A dependence on the number of actions can be avoided when it is possible to acquire information about some actions while exploring other actions. Full information problems are an extreme case where information can be acquired without any active exploration. In that case the optimal policy is the rule $\pi^{\rm Greedy}$ that chooses $A_t \in \argmax_{a \in \mathcal{A}} \mathbb{E}[R_{t,a} \mid \mathcal{F}_{t-1}]$ in each period $t$. It satisfies the following bound.

\begin{corollary} In  full information problems, where $O_{t,a}=O_{t,a'}$ for each $a,a'\in\mathcal{A}$, we have
	\[
	\bar{\Delta}_\infty(\pi^{\rm Greedy})  \leq \inf_{D\geq 0} \left\{ D+ \sigma \sqrt{2 \cdot \bar{R}_{\infty}(D)} \right\}.
	\]
\end{corollary}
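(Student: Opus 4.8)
The plan is to derive this as a corollary of \cref{thm:rate-distortion}, adapted so that the bound holds for the specific policy $\pi^{\rm Greedy}$ rather than only for $\inf_\pi \bar{\Delta}_\infty(\pi)$. First I would check that the hypothesis of \cref{thm:rate-distortion} holds here with $\Gamma_U = 2\sigma^2$: for any satisficing action sequence $A^\dagger$, item (b) of \cref{rem:information-ratio-bounds-sts} gives $\Gamma(\pi^\dagger;A^\dagger)\leq 2\sigma^2$ for the satisficing Thompson sampling policy $\pi^\dagger$ (full-information feedback), so $\Gamma_U = \sup_{A^\dagger}\inf_\pi\Gamma(\pi;A^\dagger)\leq 2\sigma^2<\infty$. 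Plugging this into \cref{thm:rate-distortion} already yields $\inf_\pi\bar{\Delta}_\infty(\pi)\leq\inf_{D\geq 0}\{D+\sigma\sqrt{2\,\bar{\mathcal{R}}_\infty(D)}\}$; it remains to show $\pi^{\rm Greedy}$ attains this.

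To replace $\inf_\pi$ by $\pi^{\rm Greedy}$, I would retrace the short proof of \cref{thm:rate-distortion}, which invokes the algorithm only through \cref{thm:main-result-sts}. So it suffices to establish, for every satisficing action sequence $A^\dagger$, the per-benchmark bound $\bar{\Delta}_\infty(\pi^{\rm Greedy};A^\dagger)\leq\sigma\sqrt{2\,\bar{I}_\infty(A^\dagger;\theta)}$. This follows by combining two facts. \textbf{(i)} In a full information problem $O_{t,a}$ does not depend on $a$, so there is no exploration value and $\pi^{\rm Greedy}$ minimizes per-period expected reward loss against \emph{any} benchmark: since $A_t$ depends only on $(\mathcal{F}_{t-1},\tilde W_t)$ with $\tilde W_t$ independent of $(\theta,W,\tilde W_{1:t-1})$, one has $\E[R_{t,A_t}\mid\mathcal{F}_{t-1}]=\sum_a\PR(A_t=a\mid\mathcal{F}_{t-1})\,\E[R_{t,a}\mid\mathcal{F}_{t-1}]\leq\max_a\E[R_{t,a}\mid\mathcal{F}_{t-1}]=\E[R_{t,A_t^{\rm Greedy}}\mid\mathcal{F}_{t-1}]$ almost surely, so $\bar{\Delta}_T(\pi^{\rm Greedy};A^\dagger)\leq\bar{\Delta}_T(\pi;A^\dagger)$ for every policy $\pi$ and $T$, hence also for the $\limsup$; in particular $\bar{\Delta}_\infty(\pi^{\rm Greedy};A^\dagger)\leq\bar{\Delta}_\infty(\pi^\dagger;A^\dagger)$. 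This mirrors the domination argument already used for \cref{cor:full-info}. \textbf{(ii)} \cref{thm:main-result-sts} with item (b) of \cref{rem:information-ratio-bounds-sts} gives $\bar{\Delta}_\infty(\pi^\dagger;A^\dagger)\leq\sqrt{\Gamma(\pi^\dagger;A^\dagger)\,\bar{I}_\infty(A^\dagger;\theta)}\leq\sigma\sqrt{2\,\bar{I}_\infty(A^\dagger;\theta)}$.

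I would then close the argument with the reward decomposition $R_{t,A_t^*}-R_{t,A_t^{\rm Greedy}}=(R_{t,A_t^*}-R_{t,A_t^\dagger})+(R_{t,A_t^\dagger}-R_{t,A_t^{\rm Greedy}})$, which upon taking expectations, averaging over $t\leq T$, and passing to the $\limsup$ (using subadditivity of $\limsup$) yields $\bar{\Delta}_\infty(\pi^{\rm Greedy})\leq\bar{\Delta}_\infty(A^\dagger;A^*)+\bar{\Delta}_\infty(\pi^{\rm Greedy};A^\dagger)\leq\bar{\Delta}_\infty(A^\dagger;A^*)+\sigma\sqrt{2\,\bar{I}_\infty(A^\dagger;\theta)}$. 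Restricting to satisficing sequences with $\bar{\Delta}_\infty(A^\dagger;A^*)\leq D$ and taking the infimum over them (monotonicity of $\sqrt{\cdot}$ lets it pass through the infimum defining $\bar{\mathcal{R}}_\infty(D)$ in \eqref{eq:rate-disortion-function}) gives $\bar{\Delta}_\infty(\pi^{\rm Greedy})\leq D+\sigma\sqrt{2\,\bar{\mathcal{R}}_\infty(D)}$ for every $D\geq 0$, and finally infimizing over $D$ gives the claim.

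The step I expect to require the most care is \textbf{(i)} --- the measurability and independence bookkeeping that justifies pulling the action out of the conditional expectation of the reward (i.e.\ $\E[R_{t,a}\mid\mathcal{F}_{t-1},A_t=a]=\E[R_{t,a}\mid\mathcal{F}_{t-1}]$) and hence the claim that $\pi^{\rm Greedy}$ dominates every policy benchmark-by-benchmark in full information. Everything else is a direct appeal to \cref{thm:rate-distortion}, \cref{thm:main-result-sts}, \cref{rem:information-ratio-bounds-sts}, and subadditivity of $\limsup$.
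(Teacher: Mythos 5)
Your proposal is correct and follows essentially the route the paper intends (which it leaves implicit): in full-information problems the greedy rule maximizes expected reward period-by-period since the observation process is policy-independent, so it inherits the bound of \cref{thm:main-result-sts}/\cref{thm:rate-distortion} instantiated with $\Gamma_U \leq 2\sigma^2$ from item (b) of \cref{rem:information-ratio-bounds-sts}, and optimizing over $A^\dagger$ and $D$ gives the claim. The only point to state a bit more carefully in step (i) is that the histories $\mathcal{F}_{t-1}$ differ across policies; the domination goes through because $O_{t}$ does not depend on $A_t$, so $\E[R_{t,a}\mid \mathcal{F}_{t-1}]$ coincides with $\E[R_{t,a}\mid O_{1:t-1}]$ under every policy, exactly the fact the paper relies on when asserting that $\pi^{\rm Greedy}$ is optimal in this setting.
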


\subsection{Application to environments with low total variation} \label{subsec:variation-budget}

One of the leading ways of analyzing nonstationary online optimization problems assumes little about the environment except for a bound on the total variation across time \citep{besbes14,besbes15, cheung19}. In this section, we upper bound the rate distortion function in terms of the normalized expected total variation in the suboptimality of an arm:
\begin{equation}\label{eq:total-variation}
	\bar{V}_T :=  \E\left[\frac{1}{T}\sum_{t=2}^{T}\max_{a \in \mathcal{A}} \left|\Delta_{t}(a) - \Delta_{t-1}(a)\right| \right] \quad \text{where} \quad \Delta_{t}(a) := \E[ R_{t,A_t^*} - R_{t,a}  \mid  \theta_t].
\end{equation}
It is also possible to study the total variation in mean-rewards $\E[R_{t,a} \mid \theta_t]$, as in \citet{besbes15}.  Figure \ref{fig:example-illustration} displays an environment in which total variation of the optimality gap $\Delta_{t}(a)$ is much smaller, since it ignores variation that is common across all arms.

The next proposition shows it is possible to attain a low regret rate whenever the the total variation of the environment is low. We establish this by bounding the rate distortion function, i.e., the number of bits about the latent environment states that must be communicated in order to implement a near-optimal action sequence. To ease the presentation, we use $\tilde{O}(\cdot)$ notation that hides logarithmic factors, but all constants can be found in \cref{subsec:variation-budget-proof}. 
\begin{proposition}\label{prop:variation-non-constructive}
	The rate distortion function is bounded as
	\[
	\bar{\mathcal{R}}_{T}(D) \leq \tilde{O}\left( 1+\frac{\bar{V}_{T}}{D}  \right).
	\]
	If there is a uniform bound  on the information ratio $\Gamma_U :=  \sup_{A^\dagger} \inf_{\pi} \Gamma(\pi ; A^{\dagger}) < \infty$, then
	\[
	\inf_{\pi} \bar{\Delta}_T\left(  \pi\right) \leq \tilde{O}\left( \left( \Gamma_U \bar{V}_T\right)^{1/3} + \sqrt{ \frac{ \Gamma_U}{T} }\right).
	\]
\end{proposition}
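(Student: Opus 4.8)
The plan is to prove the rate-distortion bound by exhibiting a single near-optimal satisficing action sequence whose switching rate is governed by the total variation, and then to deduce the regret bound by feeding that estimate into \cref{thm:rate-distortion} and optimizing over the distortion level $D$. For the first bound I would take $A^{\dagger}$ to be the ``ignoring small suboptimality'' sequence of \cref{ex:sts-ignoring-small-suboptimality} at distortion level $D$: it retains its current arm as long as that arm's suboptimality gap $\Delta_t(\cdot)$ (from \eqref{eq:total-variation}) stays at most $D$, and otherwise jumps to the current optimal arm $A_t^{*}$, with $\argmax$ ties broken by the exogenous randomizer $\xi$ of \cref{def:satisficing_action} so that $A^{\dagger}$ is a legitimate satisficing action sequence. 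By construction $\Delta_t(A_t^{\dagger}) \le D$ for every $t$ (right after a switch the held arm is optimal, and between switches its gap never exceeds $D$), hence $\bar{\Delta}_T(A^{\dagger}; A^{*}) \le D$, so $A^{\dagger}$ is feasible for the program defining $\bar{\mathcal{R}}_T(D)$ and therefore $\bar{\mathcal{R}}_T(D) \le \bar{I}_T(A^{\dagger}; \theta) \le \bar{H}_T(A^{\dagger})$.

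The heart of the argument is a charging bound on the number of switches $\mathcal{S}_T(A^{\dagger})$. Listing consecutive switch times $1 = \tau_0 < \tau_1 < \cdots < \tau_N \le T$ and writing $b_j := A^{\dagger}_{\tau_{j-1}}$ for the arm held on the epoch $(\tau_{j-1}, \tau_j]$, the key observations are that $b_j$ is the optimal arm at the epoch's left endpoint, so $\Delta_{\tau_{j-1}}(b_j) = 0$, whereas the switch at $\tau_j$ fires precisely because $\Delta_{\tau_j}(b_j) > D$. Telescoping the increments of $s \mapsto \Delta_s(b_j)$ over the epoch and bounding each increment by $\max_{a}|\Delta_s(a) - \Delta_{s-1}(a)|$ shows each epoch ``consumes'' more than $D$ units of arm-wise total variation; since the epochs are disjoint subintervals of $\{2,\dots,T\}$, summing gives $N D < \sum_{t=2}^{T}\max_{a}|\Delta_t(a) - \Delta_{t-1}(a)|$, hence, after taking expectations, $\mathbb{E}[\mathcal{S}_T(A^{\dagger})] \le 1 + T\bar{V}_T/D$, i.e. the expected switching rate is at most $\tfrac{1}{T} + \tfrac{\bar{V}_T}{D}$. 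Plugging this into \cref{prop:combinatorial-bound} (using monotonicity of $x \mapsto x(1 + \log(1 + 1/x))$ to substitute the upper bound) yields $\bar{\mathcal{R}}_T(D) \le \bar{H}_T(A^{\dagger}) = \tilde{O}\big(\tfrac{1}{T} + \tfrac{\bar{V}_T}{D}\big) \le \tilde{O}\big(1 + \tfrac{\bar{V}_T}{D}\big)$, which is the first claim; I would retain the sharper $\tfrac{1}{T}$ form for the next step.

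For the regret bound I would substitute $\bar{\mathcal{R}}_T(D) \le c\big(\tfrac{1}{T} + \tfrac{\bar{V}_T}{D}\big)$ (with $c = \tilde{O}(1)$) into \cref{thm:rate-distortion}, use $\sqrt{x+y} \le \sqrt{x} + \sqrt{y}$ to peel off the $D$-independent term $\sqrt{c\,\Gamma_U/T}$, and then minimize $D + \sqrt{c\,\Gamma_U \bar{V}_T / D}$ over $D > 0$. The optimum is at $D \asymp (\Gamma_U \bar{V}_T)^{1/3}$, where both terms are of order $(\Gamma_U \bar{V}_T)^{1/3}$, giving $\inf_{\pi}\bar{\Delta}_T(\pi) \le \tilde{O}\big((\Gamma_U \bar{V}_T)^{1/3} + \sqrt{\Gamma_U/T}\big)$, as claimed.

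I expect the main obstacle to be making the charging argument airtight: ensuring each epoch's held arm genuinely starts with zero gap (which is why the construction must jump to the exact optimal arm, not merely a $D$-good one), that the epochs are disjoint subintervals of $\{2,\dots,T\}$ so their variation contributions do not double-count, and that the last, possibly incomplete, epoch and the additive ``$+1$'' in $\mathcal{S}_T$ are absorbed without loss. One must also confirm $A^{\dagger}$ is a bona fide function of $(\theta,\xi)$ so that the inequality $\bar{\mathcal{R}}_T(D) \le \bar{I}_T(A^{\dagger}; \theta)$ is legitimate. Everything downstream --- the entropy-rate estimate via \cref{prop:combinatorial-bound} and the $D$-optimization --- is routine.
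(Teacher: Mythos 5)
Your proposal is correct and follows essentially the same route as the paper's proof: the same ``ignore small suboptimality'' satisficing sequence from \cref{ex:sts-ignoring-small-suboptimality}, a total-variation bound on its expected switching rate, the entropy bound of \cref{prop:combinatorial-bound}, and then \cref{thm:rate-distortion} optimized at $D \asymp (\Gamma_U \bar{V}_T)^{1/3}$. The only difference is minor: your per-epoch charging of the held arm's gap (which starts at $0$ and must exceed $D$ before a switch) gives $\E[\mathcal{S}_T(A^\dagger)] \leq 1 + T\bar{V}_T/D$, a constant-factor sharpening of the paper's global telescoping argument, which pays a factor $2$; this has no effect on the $\tilde{O}$ statement.
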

From \cref{cor:k-armed-rate-distortion}, one can derived bounds for $k$-armed bandits similar to those in \cite{besbes15}. One strength of the general result is it can be specialized to a much broader class of important online decision-making problems with bounded information ratio, ranging from combinatorial bandits to contextual bandits. 
\begin{proof}[Proof sketch]
	Even if the total variation $\bar{V}_T$ is small, it is technically possible for the entropy rate of the optimal action process to be very large. This occurs if the optimal action switches identity frequently and unpredictably, but the gap between actions' performance vanishes. 
	
	Thankfully, the satisificing action sequence in \cref{ex:sts-ignoring-small-suboptimality} is near optimal and has low entropy. Recall the definition 
	\begin{align*}
		A^\dagger_t = \begin{cases}
			A^\dagger_{t-1}  & \text{ if }  \E[R_{t, A_{t-1}^\dagger} \mid \theta_t] \geq \E[\max_{a \in \mathcal{A}}R_{t,a} \mid \theta_t] -D\\
			\argmax_{a \in \mathcal{A}} \E[R_{t,a} \mid \theta_t] & \text{otherwise}.
		\end{cases}
	\end{align*}
	This sequence of arms switches only when an arm's suboptimality exceeds a distortion level $D>0$. It is immediate that  $\bar{\Delta}_{T}(A^*; A^\dagger) \leq D$ and our proof reveals
	\[
	\bar{H}_{T}(A^\dagger_{1:T}) \leq \frac{2 \bar{V}_T }{D} \cdot \left( 1 + \log \left( 1 + T \wedge \left(1 \vee \frac{D}{2 \bar{V}_T} \right) \right) + \log( |\mathcal{A}| ) \right) + \frac{3 \log(  |\mathcal{A}| T )}{T}.
	\]
	Together, these give a constructive bound on the rate distortion function. The remaining claim is Theorem \ref{thm:rate-distortion}. 
\end{proof}

While we state this result in terms of fundamental limits of performance, it is worth noting that the proof itself implies Algorithm \ref{alg:sts-ignoring-small-suboptimality} attains regret bounds in terms of the variation budget in the problem types mentioned in Remark \ref{rem:information-ratio-bounds-sts}. 

It is also worth mentioning that the entropy rate of the optimal action process could, in general, be large even in problems with low total-variation. This occurs when the optimal action keeps switching identity, but the arms it switches among tend to still be very nearly optimal. The generalization of  \cref{thm:main-result} to \cref{thm:rate-distortion} is essential to this result.

\section{Extension 3: Bounds Under Adversarial Nonstationarity}\label{sec:adversarial}
The theoretical literature on nonstationary bandit learning  mostly focuses on the limits of attainable performance when faced the true environment, or latent states $\theta$, can be chosen adversarially. At first glance, our results seem to be very different. We use the language of stochastic processes, rather than game theory, to model the realization of the latent states. Thankfully, these two approaches are, in a sense, dual to one another. 

In statistical decision theory, is common to minimax risk by studying Bayesian risk under a least favorable prior distribution. Using this insight, it is possible to deduce bounds on attainable regret in \emph{adversarial} environments from our bounds on \emph{stochastic} environments. We illustrate this first when the adversary is constrained to select latent states  under which the optimal action switches infrequently \citep[see e.g.,][]{suk22}. Then, we study when the adversary must pick a sequence under which arm-means have low total-variation, as in \citet{besbes15}. We bound the rate-distortion function in terms of the total-variation, and recover adversarial regret bounds as a result. 

Our approach builds on \cite{bubeck2016multi}, who leveraged `Bayesian' information ratio analysis to study the fundamental limits of attainable regret of adversarial convex bandits;  see also \cite{lattimore2019information, lattimore2020improved, lattimore21,lattimore2022minimax}. These works study regret relative to the best fixed action in hindsight, wheres we study regret relative to the best changing action sequence --- often called ``dynamic regret.'' In either case, a weakness of this analysis approach is that it is non-constructive. It reveals that certain levels of performance are possible even in adversarial environments, but does not yield concrete algorithms that attain this performance. It would be exciting to synthesize our analysis with recent breakthroughs of  \cite{xu2023bayesian}, who showed how to derive algorithms for adversarial environments out of information-ratio style analysis.  

\subsection{A minimax theorem}
Define the expected regret rate incurred by a policy $\pi$ under parameter realization $\theta'$ by
\[
\bar{\Delta}_{T}(\pi ; \theta') := \E\left[ \frac{1}{T} \sum_{t=1}^{T}  (R_{t,A^*_t} -   R_{t,A_t} \mid \theta_t = \theta_t')\right].
\]
The Bayesian, or average, regret 
\[
\bar{\Delta}_{T}(\pi ; q) := \E_{\theta \sim q}\left[  \bar{\Delta}_{T}(\pi ; \theta)\right],
\] 
produces a scalar performance measure by averaging. Many papers instead study the worst-case regret $\sup_{\theta \in \Xi}  \bar{\Delta}_{T}(\pi; \theta)$ across a constrained family $\Xi \subseteq \Theta^T$ of possible parameter realizations.
The next proposition states that the optimal worst-case regret is the same as the optimal Bayesian regret under a worst-case prior. We state stringent regularity conditions to apply the most basic version of the minimax theorem. Generalizations are possible, but require greater mathematical sophistication. 

\begin{proposition}\label{prop:minimax}
	Let $\Xi$ be a finite set and take $\mathcal{D}(\Xi)$ to be the set of probability distributions over $\Xi$. Suppose the range of the outcome function $g(\cdot)$ in \eqref{eq:outcome-generation} is a finite set. Then,  
	\[ 
	\underbrace{\min_{\pi \in \Pi} \max_{q \in \mathcal{D}(\Xi) } \bar{\Delta}_{T}(\pi ; q)}_{\text{minimax regret}} = \underbrace{\max_{ q \in \mathcal{D}(\Xi)} \min_{\pi \in  \Pi} \bar{\Delta}_{T}(\pi ; q).}_{\text{Bayes regret under least favorable prior}}
	\]
\end{proposition}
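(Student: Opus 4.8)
The plan is to recast $\bar{\Delta}_T(\pi;q)$ as the payoff of a finite two‑player zero‑sum game and then invoke the classical von Neumann minimax theorem; the only real work is compactifying and convexifying the policy side so that no delicate topological hypotheses are needed. First I would note that $\bar{\Delta}_T$ depends only on the first $T$ periods, and that since the range of $g(\cdot)$ in \eqref{eq:outcome-generation} is finite and $\mathcal{A}$ is finite, the collection $\mathcal{H}_T$ of histories $\mathcal{F}_{t-1}$ that can arise for $t\leq T$ is finite. A deterministic $T$‑horizon policy is therefore just a map $\mathcal{H}_T\to\mathcal{A}$, so there are only finitely many of them; call this set $\Pi_T^{\mathrm{det}}$. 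Because the decision‑maker observes the entire past (perfect recall), Kuhn's theorem guarantees that the first‑$T$‑period behavior of any $\pi\in\Pi$ — including its use of the random seeds $\tilde{W}_t$ — induces the same law over action–outcome sequences, hence the same value of $\bar{\Delta}_T(\cdot;\theta')$ for every $\theta'$, as some mixture over $\Pi_T^{\mathrm{det}}$. Consequently, both the inner $\min_{\pi\in\Pi}$ on the right‑hand side and the outer $\min_{\pi\in\Pi}$ on the left‑hand side may be replaced by an optimization over the simplex $\mathcal{D}(\Pi_T^{\mathrm{det}})$ of mixed $T$‑horizon policies, and the infima are attained because we are optimizing continuous functions over compact sets.

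Next I would define $F(p,q):=\sum_{\pi\in\Pi_T^{\mathrm{det}}}\sum_{\theta'\in\Xi} p(\pi)\,q(\theta')\,\bar{\Delta}_T(\pi;\theta')$ on $\mathcal{D}(\Pi_T^{\mathrm{det}})\times\mathcal{D}(\Xi)$. Since the benchmark term $\E[R_{t,A_t^*}\mid\theta_t=\theta_t']$ is a fixed constant once $\theta'$ is given, and since expectations are linear in mixing weights, $F$ is bilinear, and both factors are simplices, i.e.\ nonempty compact convex subsets of finite‑dimensional Euclidean spaces (here finiteness of $\Xi$ is used on the adversary side). Von Neumann's minimax theorem then gives $\min_p\max_q F(p,q)=\max_q\min_p F(p,q)$. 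To finish I would translate each side back: for fixed $p$, $q\mapsto F(p,q)$ is linear on $\mathcal{D}(\Xi)$, so $\max_q F(p,q)=\max_{q\in\mathcal{D}(\Xi)}\bar{\Delta}_T(p;q)$; for fixed $q$, $p\mapsto F(p,q)$ is linear on $\mathcal{D}(\Pi_T^{\mathrm{det}})$, so its minimum is attained at a vertex and equals $\min_{\pi\in\Pi_T^{\mathrm{det}}}\bar{\Delta}_T(\pi;q)=\min_{\pi\in\Pi}\bar{\Delta}_T(\pi;q)$. Combining these identifications with the policy reduction of the first paragraph yields exactly the asserted equality.

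The main obstacle is the policy‑side reduction — justifying that optimizing over the simplex of mixtures of the finitely many deterministic $T$‑horizon policies loses nothing relative to optimizing over all (seed‑randomized, infinite‑horizon) policies in $\Pi$. This rests on (i) finiteness of the outcome and action alphabets together with the finite horizon, which makes $\mathcal{H}_T$ and $\Pi_T^{\mathrm{det}}$ finite, and (ii) the perfect‑recall structure, which via Kuhn's theorem lets us pass between behavioral and mixed policies without changing any induced play distribution. Once this is established, bilinearity of $F$, compactness of the two simplices, and the extremal characterizations of the inner optimizations are all routine, so the most elementary form of the minimax theorem applies with no further regularity conditions.
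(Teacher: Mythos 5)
Your proposal is correct and follows essentially the same route as the paper's own proof sketch: reduce to a finite two-player zero-sum game with the finitely many deterministic $T$-horizon history-to-action maps as the experimenter's pure strategies and elements of $\Xi$ as nature's pure strategies, then apply von Neumann's minimax theorem to the resulting bilinear payoff over the two simplices. Your extra care in justifying, via perfect recall and Kuhn's theorem, that seed-randomized (behavioral) policies are payoff-equivalent to mixtures of deterministic policies fills in a step the paper leaves implicit, but does not change the argument.
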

\begin{proof}[proof sketch]
	We apply Von-Neumann's minimax theorem to study a two player game between an experimenter, who chooses $\pi$ and nature, who chooses $\theta$. A deterministic strategy for nature is a choice of $\theta \in \Xi$. Take $n=|\Xi|$ and label the possible choice of nature by $\theta^{(1)}, \ldots, \theta^{(n)}$. A randomized strategy for nature is a probability mass function $q=(q_1, \ldots, q_n) \in [0,1]^n$. 
	
	For the experimenter, we need to make a notational distinction between a possibly randomized strategy $\pi$ and a deterministic one, which we denote by $\psi$. 
	A deterministic strategy $\psi$ for the experimenter is a mapping from a history of past actions and observations to an action. Let $\mathcal{O}$ denote the range of $g$, i.e., the set of possible observations. There are at most $m = |\mathcal{A}|^{|\mathcal{O}|^T}$ deterministic policies. Label these $\psi^{(1)}, \ldots, \psi^{(m)}$. 
	A randomized strategy for the experimenter is a probability mass function $\pi = (\pi_1, \ldots, \pi_m) \in [0,1]^m$. 

	Define $\Delta \in \mathbb{R}^{m\times n}$ by $\Delta(i,j) = \bar{\Delta}_{T}(\psi^{(i)} ; \theta^{(j)})$. By Von-Neuman's minimax theorem 
	\[
	\min_{\pi} \max_{q} \, \pi^\top \Delta q = \max_{q} \min_{\pi} \, \pi^\top \Delta q.
	\]	
\end{proof} 

\subsection{Deducing bounds on regret in adversarial environments: environments with few switches}
As a corollary of this result, we bound minimax regret when the adversary is constrained to choose a sequence of latent states under which the optimal action changes infrequently. For concreteness, we state the result for linear bandit problems, though similar statements hold for any of the problems with bounded information ratio discusses in Section \ref{sec:information-ratio-bounds}.

\begin{corollary}[Corollary of \cref{thm:main-result} and \cref{prop:combinatorial-bound}]
        Suppose $\mathcal{A}, \Theta  \subset \mathbb{R}^d $,  rewards follow the linear model 
            $ \E\left[R_{t,a} \mid \theta_t\right] = \theta_t^\top a$, and the range of the reward function $R(\cdot)$ is contained in $[-1,1]$. Under the conditions in \cref{prop:minimax}, for any $T \in \mathbb{N}$, 
	\[
	\inf_{\pi \in \Pi} \max_{ \theta :  \bar{\mathcal{S}}_{T}(A^*;\theta) \leq \bar{S} }  \bar{\Delta}_{T}(\pi ; \theta) \leq  \sqrt{\frac{d}{2} \cdot \bar{S} \cdot \left( 1 + \log \left( 1 + 1/\bar{S} \right) + \log (|\mathcal{A}|) \right)}.
	\]
  where $\bar{\mathcal{S}}_{T}(A^*;\theta) := \frac{1}{T}(1+\sum_{t=2}^{T} \mathbb{I}\{A^*_t \neq A^*_{t-1}\})$ is the switching rate defined in \eqref{eq:switching-rate}. 
\end{corollary}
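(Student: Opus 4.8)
The plan is to derive the adversarial bound from the stochastic bound \cref{thm:main-result} via the minimax identity in \cref{prop:minimax}. Write $\Xi := \{\theta \in \Theta^T : \bar{\mathcal S}_T(A^*;\theta) \le \bar S\}$ for the adversary's feasible set, and assume $\bar S \ge 1/T$ (otherwise $\Xi$ is empty, since $\mathcal S_T(A^*)\ge 1$ always, and the claim is vacuous). Because $\bar\Delta_T(\pi;q)$ is affine in $q$, its maximum over the simplex $\mathcal D(\Xi)$ is attained at a vertex, so $\max_{q\in\mathcal D(\Xi)}\bar\Delta_T(\pi;q)=\max_{\theta\in\Xi}\bar\Delta_T(\pi;\theta)$, and \cref{prop:minimax} (whose regularity hypotheses the statement assumes) yields
\[
\inf_{\pi\in\Pi}\max_{\theta\in\Xi}\bar\Delta_T(\pi;\theta)=\max_{q\in\mathcal D(\Xi)}\inf_{\pi\in\Pi}\bar\Delta_T(\pi;q).
\]
It therefore suffices to bound $\inf_{\pi}\bar\Delta_T(\pi;q)$ by the claimed right-hand side \emph{uniformly} over all priors $q$ supported on $\Xi$.

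First I would fix such a $q$ and regard the resulting Bayesian problem as a nonstationary linear bandit within our formulation, with latent-state law $q$. Two facts drive the argument. (i) Since $R(\cdot)$ is bounded, $R_{t,a}$ is conditionally sub-Gaussian, so \cref{ass:subgaussian} holds and the linear-bandit information-ratio bound from \cref{sec:information-ratio-bounds} applies; this bound, $\Gamma(\pi^{\rm TS})\le 2\sigma^2 d$, is prior-free and, with the variance proxy afforded by the reward range, equals $d/2$. (ii) For each realization $\theta$ in the support of $q$ we have $\mathcal S_T(A^*) = T\,\bar{\mathcal S}_T(A^*;\theta)\le T\bar S$, and since $\mathcal S_T(A^*)$ is integer-valued this forces $\mathcal S_T(A^*)\le \lfloor T\bar S\rfloor =: S_T\in\{1,\dots,T\}$ almost surely under $q$. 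Taking $\pi=\pi^{\rm TS}$ and chaining \cref{thm:main-result} with \cref{lem:combinatorial-bound},
\[
\inf_{\pi}\bar\Delta_T(\pi;q)\le\bar\Delta_T(\pi^{\rm TS};q)\le\sqrt{\Gamma(\pi^{\rm TS})\,\bar H_T(A^*)}\le\sqrt{\tfrac{d}{2}\cdot\tfrac{S_T}{T}\Big(1+\log\big(1+\tfrac{T}{S_T}\big)+\log|\mathcal A|\Big)}.
\]
To get the stated form I would then use that $S_T/T\le\bar S$ together with the fact that $x\mapsto x\big(1+\log(1+1/x)+\log|\mathcal A|\big)$ is increasing on $(0,\infty)$ — its derivative is $1+\log|\mathcal A|+\log\tfrac{x+1}{x}-\tfrac{1}{x+1}>0$ because $\log\tfrac{x+1}{x}>\tfrac{1}{x+1}$ — so the right-hand side is at most $\sqrt{\tfrac d2\,\bar S\big(1+\log(1+1/\bar S)+\log|\mathcal A|\big)}$. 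This quantity does not depend on $q$; maximizing over $q\in\mathcal D(\Xi)$ and invoking the displayed minimax identity completes the proof.

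The substantive work is in the setup, not in any single estimate. The steps I expect to need the most care are: verifying the conditions needed to invoke \cref{prop:minimax} (finiteness of $\Xi$ and of the range of $g$), which the statement sidesteps by hypothesis; fixing a measurable tie-breaking rule in $A^*_t\in\argmax_a\theta_t^\top a$ so that the switching-rate constraint and the probability-matching definition \eqref{eq:ts} of $\pi^{\rm TS}$ refer to the same optimal-action process; and the conversion of the rate constraint $\bar{\mathcal S}_T(A^*;\theta)\le\bar S$ into an almost-sure \emph{integer} switch-count bound. This last point is precisely what lets the clean constant and the exact form $\bar S(1+\log(1+1/\bar S)+\log|\mathcal A|)$ survive without an additive $\log T/T$ term: unlike in \cref{prop:combinatorial-bound}, here the switch bound holds surely, so \cref{lem:combinatorial-bound} can be applied pathwise under every feasible prior $q$.
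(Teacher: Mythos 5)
Your route is the same as the paper's: fix a prior $q$ supported on $\Xi=\{\theta:\bar{\mathcal{S}}_T(A^*;\theta)\le\bar S\}$, apply \cref{thm:main-result} to Thompson sampling with the linear-bandit information-ratio bound $\Gamma(\pi^{\rm TS})\le 2\sigma^2 d$ from \cref{sec:information-ratio-bounds}, bound the entropy rate through \cref{lem:combinatorial-bound}, observe the resulting bound is uniform in $q$, and finish with \cref{prop:minimax}. Your handling of the switch-count constraint is actually slightly more careful than the paper's: converting $\bar{\mathcal{S}}_T\le\bar S$ into the integer bound $\mathcal{S}_T\le\lfloor T\bar S\rfloor$ and then using monotonicity of $x\mapsto x\bigl(1+\log(1+1/x)+\log|\mathcal{A}|\bigr)$ to pass to $\bar S$ is exactly the step the paper elides when it applies \cref{lem:combinatorial-bound} directly with the (possibly non-integer) level $T\bar S$, so that part is a welcome addition, as is the explicit remark that affineness of $\bar\Delta_T(\pi;q)$ in $q$ reduces the max over $\mathcal{D}(\Xi)$ to a max over $\Xi$.

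The one step that does not hold up is the constant. You assert that ``the variance proxy afforded by the reward range'' turns $2\sigma^2 d$ into $d/2$, i.e.\ $\sigma^2=1/4$. For rewards with range contained in $[-1,1]$, Hoeffding's lemma only gives a sub-Gaussian variance proxy of $\bigl((1-(-1))/2\bigr)^2=1$, and this is tight (a $\pm1$ coin flip has variance $1$), so the hypothesis as stated yields $\Gamma(\pi^{\rm TS})\le 2d$ and hence the final bound $\sqrt{2d\cdot\bar S\bigl(1+\log(1+1/\bar S)+\log|\mathcal{A}|\bigr)}$ --- a factor of $2$ weaker than the displayed claim. This is in fact exactly what the paper's own proof derives ($\sigma^2\le1$, $\Gamma\le 2d$), so the $d/2$ in the corollary statement appears to be an internal inconsistency of the paper (it would be correct if rewards were confined to an interval of length one, e.g.\ $[-1/2,1/2]$), not something your argument --- or the paper's --- actually establishes under the stated hypothesis. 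Apart from flagging that you cannot legitimately claim $\sigma^2=1/4$ from $R(\cdot)\in[-1,1]$, the proof is sound.
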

\begin{proof}
     Fix a prior distribution $q$ supported on $\Xi := \{\theta \in \Theta^T:  \bar{\mathcal{S}}_{T}(\theta) \leq \bar{S}\}$. \cref{thm:main-result} yields the bound 
     $\bar{\Delta}_{T}(\pi) \leq \sqrt{\Gamma(\pi) \bar{H}_{T}(A^*)}$, where implicitly the information ratio $\Gamma(\pi)$ and the entropy rate $\bar{H}_{T}(A^*)$ depend on $q$. We give uniform upper bounds on these quantities. 

     First, since $\bar{\mathcal{S}}_{T}(A^*; \theta) \leq \bar{S}$ for any $\theta \in \Xi$, \cref{lem:combinatorial-bound} implies  
     \[
     \bar{H}_{T}(A^*) \leq  \bar{S} \cdot \left( 1 + \log \left( 1 + 1/\bar{S} \right) + \log |\mathcal{A}|\right).
     \]
    Bounded random variables are sub-Gaussian. In particular, since the reward function $R(\cdot)$ is contained in $[-1,1]$, we know the variance proxy is bounded as $\sigma^2 \leq 1$. From  \cref{sec:information-ratio-bounds}, we know, $\Gamma(\pi^{\rm TS}) \leq 2 \cdot \sigma^2 \cdot d \leq  2 \cdot d$. Combining these results gives 
    \[ 
    \inf_{\pi} \bar{\Delta}_{T}(\pi; q) \leq   \sqrt{\inf_{\pi} \Gamma(\pi) \bar{H}_{T}(A^*)} \leq \sqrt{ 2 \cdot d \cdot \bar{S} \cdot \left( 1 + \log \left( 1 + 1/\bar{S} \right) + \log |\mathcal{A}|\right)}
    \]
    The claim then follows from \cref{prop:minimax}.
\end{proof}

\subsection{Deducing bounds on regret in adversarial environments: environments with low total variation}

From the regret bound for stochastic environments in \cref{prop:variation-non-constructive}, we can deduce a bound on minimax regret in linear bandits when the adversary is constrained in the rate of total variation in mean rewards \citep{besbes15}. The proof is omitted for brevity.  
\begin{corollary}[Corollary of  \cref{prop:variation-non-constructive}]
	        Suppose $\mathcal{A}, \Theta \subset \mathbb{R}^d $,  rewards follow the linear model 
        $ \E\left[R_{t,a} \mid \theta_t\right] = \theta_t^\top a$, and rewards are bounded as $|R_{t,a}|\leq 1$. Under the conditions in \cref{prop:minimax}, for any $T \in \mathbb{N}$, 
	\[
	\inf_{\pi \in \Pi} \max_{ \theta :  \bar{\mathcal{V}}_{T}(\theta) \leq \bar{V} }  \bar{\Delta}_{T}(\pi ; \theta) \leq \tilde{O}\left( \left( d \bar{V} \right)^{1/3} + \sqrt{ \frac{ d }{T} }\right).
	\]
  where $\bar{\mathcal{V}}_{T}(\theta) := \frac{1}{T}\sum_{t=2}^{T}\max_{a \in \mathcal{A}} \left|\Delta_{t}(a) - \Delta_{t-1}(a)\right|$ is the total variation measure used in  \eqref{eq:total-variation}
\end{corollary}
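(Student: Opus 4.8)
The plan is to mirror the proof of the preceding corollary on environments with few switches, swapping the switching-based bound on the entropy rate (\cref{lem:combinatorial-bound}, \cref{prop:combinatorial-bound}) for the total-variation bound on the rate-distortion function in \cref{prop:variation-non-constructive}, and then applying the minimax theorem \cref{prop:minimax}. First I would fix an arbitrary prior $q$ supported on the finite set $\Xi := \{\theta \in \Theta^T : \bar{\mathcal{V}}_T(\theta) \leq \bar{V}\}$ and bound the Bayesian regret $\inf_{\pi} \bar{\Delta}_T(\pi; q)$. Since the adversary's constraint $\bar{\mathcal{V}}_T(\theta) \leq \bar{V}$ is imposed almost surely, the normalized expected total variation $\bar{V}_T$ from \eqref{eq:total-variation} satisfies $\bar{V}_T = \E_{\theta \sim q}[\bar{\mathcal{V}}_T(\theta)] \leq \bar{V}$ for every such $q$.

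Next I would supply a uniform bound on the information ratio. Rewards bounded in $[-1,1]$ are, conditionally on $\mathcal{F}_{t-1}$, sub-Gaussian with variance proxy $\sigma^2 \leq 1$, so \cref{ass:subgaussian} holds with $\sigma^2\le 1$; by \cref{rem:information-ratio-bounds-sts}, for linear bandits $\inf_\pi \Gamma(\pi; A^\dagger) \leq \Gamma(\pi^\dagger; A^\dagger) \leq 2\sigma^2 d \leq 2d$ uniformly over satisficing action sequences $A^\dagger$, hence $\Gamma_U := \sup_{A^\dagger}\inf_{\pi}\Gamma(\pi; A^\dagger) \leq 2d < \infty$. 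Plugging $\bar{V}_T \leq \bar{V}$ and $\Gamma_U \leq 2d$ into the second inequality of \cref{prop:variation-non-constructive} yields $\inf_{\pi}\bar{\Delta}_T(\pi; q) \leq \tilde{O}((d\bar{V})^{1/3} + \sqrt{d/T})$, and — crucially — this bound does not depend on $q$. Taking the supremum over $q \in \mathcal{D}(\Xi)$ and invoking \cref{prop:minimax} (whose hypotheses are in force by assumption, $\Xi$ being finite and the range of $g(\cdot)$ finite) gives
\[
\inf_{\pi \in \Pi} \max_{\theta \in \Xi} \bar{\Delta}_T(\pi; \theta) = \max_{q \in \mathcal{D}(\Xi)}\inf_{\pi \in \Pi}\bar{\Delta}_T(\pi; q) \leq \tilde{O}\left( (d\bar{V})^{1/3} + \sqrt{d/T}\right),
\]
which is the assertion.

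I expect the argument to be essentially routine once \cref{prop:variation-non-constructive} and \cref{prop:minimax} are in hand; the one genuinely load-bearing point is that the uniform information-ratio bound $\Gamma_U$ — and likewise the logarithmic factors hidden in the $\tilde{O}(\cdot)$ — are prior-independent, since this is exactly what legitimizes interchanging the infimum over policies with the supremum over priors in the minimax step. If a fully explicit constant were desired, I would substitute the concrete bound recorded in \cref{subsec:variation-budget-proof} for the $\tilde{O}(\cdot)$; tracking the $\log$ factors there confirms they depend only on $T$ and $|\mathcal{A}|$, not on $q$, so the uniformity used in the minimax step indeed holds.
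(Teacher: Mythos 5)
Your proposal is correct and is essentially the argument the paper intends (its proof is omitted only ``for brevity''): it exactly parallels the paper's proof of the few-switches corollary, bounding Bayes regret uniformly over priors $q$ supported on $\Xi$ via $\sigma^2\le 1$, $\Gamma_U\le 2d$ from \cref{rem:information-ratio-bounds-sts}, and $\bar V_T\le \bar V$ in \cref{prop:variation-non-constructive}, then invoking \cref{prop:minimax}. Your emphasis that the hidden logarithmic factors (in $|\mathcal{A}|$ and $T$) and $\Gamma_U$ are prior-independent is precisely the point that makes the minimax step legitimate.
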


\section{Conclusion and Open Questions}

We have provided a unifying framework to analyze interactive learning in changing environments.
The results offer an intriguing measure of the difficulty of learning: the entropy rate of the optimal action process. 
A strength of the approach is that it applies to nonstationary variants of many of the most important learning problems. 
Instead of designing algorithms to make the proofs work, most results apply to Thompson sampling (TS), one of the most widely used bandit algorithms, and successfully recover the existing results that are proven individually for each setting with different proof techniques and algorithms.

While our analyses offer new theoretical insights, practical implementation of algorithms in real-world problem involves numerous considerations and challenges that we do not address. \cref{sec:TS}
described a simple setting in which Thompson sampling with proper posterior updating resembles 
simple exponential moving average estimators. The A/B testing scenario of \cref{subsec:example} and the news recommendation problem in \cref{ex:news-rec} reflect there is often natural problem structure that is different from what agnostic exponential moving average estimators capture.  To leverage this, it is crucial to properly model the dynamics of the environment and leverage auxiliary data to construct an appropriate prior. Implementing the posterior sampling procedure adds another layer of complexity, given that the exact posterior distribution is often not available in a closed form in for models with complicated dynamics. Modern generative AI  techniques  (e.g. diffusion models) provide a promising path to enhance both model flexibility and sampling efficiency.

Lastly, we call for a deeper exploration of the connection between learning in dynamic environments and the theory of optimal compression.   \cref{sec:rate-distortion} provides intriguing connections to rate-distortion theory, but many questions remain open. One open direction is around information-theoretic lower bounds. For that, we conjecture one needs to construct problem classes in which the uniform \emph{upper bound} $\Gamma_U$ on the information ratio is close to a lower bound. Another open direction is to try to characterize or bound the rate-distortion function in other scenarios.   
In the information-theory literature, numerous studies have provided theoretical characterizations of rate distortion functions \citep{cover2006elements, gray71, blahut97, derpich12, stavrou18, stavrou20}.
It is worth investigating whether a synthesis of these existing rate distortion results with our framework can produce meaningful regret bounds, particularly for the nonstationary bandit environments driven by Markov processes such as Brownian motion \citep{slivkins08} or autoregressive processes \citep{chen23}.
Furthermore, computational methods for obtaining the rate distortion function and the optimal lossy compression scheme \citep{blahut97, jalali08, theis17} can be implemented to construct the rate-optimal satisficing action sequence.

\vfill

{\small
\bibliography{nonstat-ts}}

\newpage

\appendix

\newpage

\section{Proofs} \label{app:proof}

\subsection{Proof of \cref{lem:combinatorial-bound}}
    We first bound the number of possible realizations that involve at most $S_T$ switches:\footnote{One can imagine a two-dimensional grid of size $T \times S_T$, represented with coordinates $\left( (t,s) : t \in [T], s \in [S_T] \right)$.
	A feasible switching time configuration corresponds to a path from the lower left corner $(1,1)$ to the upper right corner $(T,S_T)$ that consists of $T-1$ rightward moves and $S_T-1$ upward moves ; whenever a path makes an upward move, from $(t,s)$ to $(t,s+1)$, we can mark that a switch occurs at time $t$ (if the path makes two or more upward moves in a row, the actual number of switches can be smaller than $S_T$). The number of such paths is given by $\binom{(T-1)+(S_T-1)}{S_T-1}$.}
    \[
        \left| \left\{ (x_1,\ldots,x_T) \in \mathcal{X}^T \left| 1+\sum_{t=2}^T \mathbb{I}\{x_t \ne x_{t-1} \} \leq S_T \right. \right\} \right| 
        \leq \binom{(T-1)+(S_T-1)}{S_T-1} \times |\mathcal{X}|^{S_T}
        \leq \binom{T+S_T-1}{S_T} \times |\mathcal{X}|^{S_T}.
    \]
	Note that for any $k \leq n \in \mathbb{N}$,
	\[
		\binom{n}{k} = \frac{n \times (n-1) \times \ldots \times (n-k+1) }{k!} \leq \frac{n^k}{k!} \leq \frac{n^k}{\sqrt{2\pi k}(k/e)^k} \leq \frac{n^k}{(k/e)^k} = \left( \frac{e n}{k} \right)^k,
	\]
	where the second inequality uses Stirling.
	Therefore,
    \[
        \log\left( |\mathcal{X}|^{S_T} \times \binom{T+S_T-1}{S_T} \right) 
        \leq \log\left( |\mathcal{X}|^{S_T} \times \left( \frac{e (T+S_T-1)}{S_T} \right)^{S_T} \right)
        = S_T \times \left( \log(|\mathcal{X}|) + 1 + \log \left( 1 + \frac{T-1}{S_T} \right)\right).
    \]
    By observing $ \log \left( 1 + \frac{T-1}{S_T} \right) \leq \log \left( 1 + \frac{T}{S_T} \right)$, we obtain the desired result.

\subsection{Proof of \cref{prop:effective-horizon-bound}}
	Let $Z_t := \mathbb{I}\{ A_t^* \ne A_{t-1}^* \}$, an indicator of a ``switch''.
	Then, $\tau_\textup{eff}^{-1} = \mathbb{P}(Z_t = 1)$ and for any $t \geq 2$,
	\begin{align*}
		H(A_t^* | A_{1:t-1}^*) 
			&= H(A_t^* | A_{1:t-1}^* ) + \underbrace{ H( Z_t | A_{1:t-1}^*, A_t^* ) }_{=0}
			\\&= H( (Z_t, A_t^*) | A_{1:t-1}^* ) 
			\\&= H(Z_t | A_{1:t-1}^* ) + H(A_t^* | Z_t, A_{1:t-1}^* )
			\\&\leq H(Z_t) + H(A_t^* | Z_t, A_{1:t-1}^* )
			\\&= H(Z_t) + \mathbb{P}( Z_t = 1 ) H(A_t^* | Z_t=1, A_{1:t-1}^* )
			\\& \qquad + \underbrace{ \mathbb{P}( Z_t = 0 ) H(A_t^* | Z_t=0, A_{1:t-1}^* ) }_{= 0}
			\\&\leq H(Z_t) + \mathbb{P}(Z_t = 1) H(A_t^* | Z_t=1 ).
	\end{align*}
	With $\delta := \tau_\textup{eff}^{-1}$,
	\begin{align*}
		& H(Z_t) + \mathbb{P}( Z_t = 1 ) H(A_t^* | Z_t=1 )
			\\&= \delta \log(1/\delta) + (1-\delta) \log(1/(1-\delta)) + \delta H(A_t^* | Z_t=1 )
			\\&= \delta \log(1/\delta) + (1-\delta) \log(1+\delta/(1-\delta)) + \delta H(A_t^* | Z_t=1 )
			\\&\leq \delta \log(1/\delta) + \delta + \delta H(A_t^* | Z_t=1 )
			\\&= \frac{1}{\tau_\textup{eff}} \left[ \log(\tau_\textup{eff}) + 1 + H(A_t^* | Z_t=1) \right].
	\end{align*}
 Therefore, we deduce that
         \[
            \bar{H}_T(A^*) = \frac{1}{T} \sum_{t=1}^T H(A_t^* | A_{1:t-1}^* )
                \leq \frac{1 + \log(\tau_\textup{eff})+ H(A_t^* | Z_t=1) }{\tau_\textup{eff}}  + \frac{H(A_1^*)}{T}.
        \]

\subsection{Proof of \cref{prop:combinatorial-bound}}
    We use $A_{1:T}^*$ to denote $(A_1^*, \ldots, A_T^*)$ for shorthand.
    Let $\mathcal{S}_T := 1 + \sum_{t=2}^T \mathbb{I}\{A_t^* \ne A_{t-1}^*\}$.
    By \cref{lem:combinatorial-bound},
    \[
        H( A_{1:T}^* | \mathcal{S}_T = n ) \leq n \times \left( 1 + \log \left( 1 + \frac{T}{n} \right) + \log |\mathcal{A}| \right),
    \]
    for any $n \in \{1,\ldots,T\}$.
    Since the right hand side is a concave function of $n$, by Jensen's inequality
    \begin{align*}
        \frac{1}{T} H( A_{1:T}^* | \mathcal{S}_T ) 
            &\leq \mathbb{E}\left[ \frac{\mathcal{S}_T}{T} \times \left( 1 + \log \left( 1 + \frac{T}{\mathcal{S}_T} \right) + \log |\mathcal{A}| \right) \right],
            \\&\leq \frac{\mathbb{E}[ \mathcal{S}_T ]}{T} \times \left( 1 + \log \left( 1 + \frac{T}{ \mathbb{E}[ \mathcal{S}_T ] }  \right) + \log |\mathcal{A}| \right)
            \\&= \bar{S}_T \times \left( 1 + \log \left( 1 + 1/\bar{S}_T  \right) + \log |\mathcal{A}| \right).
    \end{align*}
    On the other hand, we have $H(\mathcal{S}_T) \leq \log T$ since $\mathcal{S}_T \in \{1, 2,\ldots,T\}$.
    Therefore,
    \[
        \bar{H}_T(A^*) = \frac{H( A_{1:T}^* | \mathcal{S}_T )}{T} + \frac{H( \mathcal{S}_T )}{T}
            \leq \bar{S}_T \times \left( 1 + \log \left( 1 + 1/\bar{S}_T  \right) + \log |\mathcal{A}| \right) + \frac{\log T}{T}.
    \]

\subsection{Proof of \cref{thm:lower-bound}}
We start with a proof sketch. 
Our proof is built upon a well-known result established for stationary bandits: there exists a stationary (Bayesian) bandit instance such that any algorithm's (Bayesian) cumulative regret is lower bounded by $\Omega( \sqrt{nk} )$ where $n$ is the length of time horizon.

More specifically, we set $n = \Theta( \tau_{\rm eff} ) \in \mathbb{N}$ and construct a nonstationary environment by concatenating independent $n$-period stationary Gaussian bandit instances, i.e., the mean rewards changes periodically every $n$ time steps.
In each block (of length $n$), the best arm has mean reward $\epsilon > 0$ and the other arms has zero mean reward, where the best arm is drawn from $k$ arms uniformly and independently per block.
When $\epsilon = \Theta( \sqrt{k/n} )$, no algorithm can identify this best arm within $n$ samples, and hence the cumulative regret should increase by $\Omega( n \epsilon )$ per block. 
Consequently, the per-period regret $\bar{\Delta}_\infty(\pi)$ should be $\Omega(\epsilon) = \Omega( \sqrt{k/n} ) = \Omega( \sqrt{k/\tau_{\rm eff}} )$.
In our detailed proof, we additionally employ some randomization trick in determination of changepoints in order to ensure that the optimal action sequence $(A_t^*)_{t \in \mathbb{N}}$ is stationary and $\mathbb{P}( A_t^* \ne A_{t-1}^* ) = \tau_{\rm eff}^{-1}$ exactly.
Now, we give the formal proof. 

\begin{proof} 
We will consider Gaussian bandit instances throughout the proof.
Without loss of generality, we assume $\sigma = 1$ and the noise variances are always one.

We begin by stating a well-known result for the stationary bandits, adopted from \citet[Exercise 15.2]{lattimore20}:
With $\epsilon = (1-1/k) \sqrt{k/n}$, for each $i \in \{1, \ldots, k\}$, let mean reward vector $\mu^{(i)} \in \mathbb{R}^k$ satisfy $\mu_a^{(i)} = \epsilon \mathbb{I}\{ i = a \}$.
It is shown that, when $k > 1$ and $n \geq k$, under any algorithm $\pi$ 
\begin{equation} \label{eq:lower-bound-known-result}
	\frac{1}{k} \sum_{i=1}^k \mathbb{E}_{\mu^{(i)}}^\pi\left[ \sum_{t=1}^n (R_{t,i} - R_{t,A_t}) \right] \geq \frac{1}{8} \sqrt{nk},
\end{equation}
where $\mathbb{E}_{\mu^{(i)}}^\pi\left[ \sum_{t=1}^n (R_{t,i} - R_{t,A_t}) \right]$ is the (frequentist's) cumulative regret of $\pi$ in a $k$-armed Gaussian bandit instance specified by the time horizon length $n$ and mean reward vector $\mu^{(i)}$ (i.e., the reward distribution of arm $a$ is $\mathcal{N}(\mu_a^{(i)}, 1^2)$).
Considering a uniform distribution over $\{ \mu^{(1)}, \cdots, \mu^{(k)} \}$ as a prior, we can construct a Bayesian $K$-armed bandit instance of length $n$ such that $\mathbb{E}[ \sum_{t=1}^n (R_{t,A^*} - R_{t,A_t}) ] \geq \sqrt{nk}/8$ under any algorithm.

Given $\tau_{\rm eff} \geq 2$, set $\tilde{\tau} = \frac{k-1}{k} \tau_{\rm eff}$, $n = \lfloor \tilde{\tau} \rfloor$, and $p = \tilde{\tau} - \lfloor \tilde{\tau} \rfloor$.
Let $N$ be the random variable such that equals $n$ with probability $p$ and equals $n+1$ with probability $1-p$, so that $\mathbb{E}[ N ] = \tilde{\tau}$.
We construct a \emph{stationary} renewal process $(T_1, T_2, \ldots)$ whose inter-renewal time distribution is given by the distribution of $N$.
That is, $T_{j+1} - T_j \stackrel{\text{d}}{=} N$ for all $j \in \mathbb{N}$, and $T_1$ is drawn from the equilibrium distribution of its excess life time, i.e.,
$$ \mathbb{P}( T_1 = x ) = \left\{ \begin{array}{ll} 1/\tilde{\tau} & \text{if } x \leq n, \\ p/\tilde{\tau} & \text{if } x = n+1, \\ 0 & \text{if } x > n+1, \end{array} \right. \quad \forall x \in \mathbb{N}. $$
Since the process $(T_1, T_2, \ldots)$ is a stationary renewal process,
$$ \mathbb{P}\left( \text{renewal occurs at $t$} \right) = \mathbb{P}\left( \exists j, T_j = t \right) = \frac{1}{ \mathbb{E}[N] } = \frac{1}{\tilde{\tau}}, \quad \forall t \in \mathbb{N}. $$

We now consider a nonstationary Gaussian bandit instance where the mean reward vector is (re-)drawn from $\{ \mu^{(1)}, \cdots, \mu^{(k)} \}$ uniformly and independently at times $T_1, T_2, \ldots$.
As desired, the effective horizon of this bandit instance matches the target $\tau_{\rm eff}$:
$$ \mathbb{P}\left( A_t^* \ne A_{t-1}^* \right) = \mathbb{P}\left( \exists j, T_j = t \right) \times \mathbb{P}\left( A_t^* \ne A_{t-1}^* | \exists j, T_j = t  \right) = \frac{1}{\tilde{\tau}} \times \left( 1 - \frac{1}{k} \right) = \frac{1}{\tau_{\rm eff}}. $$
Since $T_{j+1} - T_{j} \geq n$,
$$ \mathbb{E}\left[ \sum_{t=T_j}^{T_{j+1}-1} ( R_{t,A_t^*} - R_{t,A_t} ) \right] \geq \mathbb{E}\left[ \sum_{t=T_j}^{T_j+n-1} ( R_{t,A_t^*} - R_{t,A_t} ) \right] = \frac{1}{k} \sum_{i=1}^k \mathbb{E}\left[ \left. \sum_{t=T_j}^{T_j+n-1} ( R_{t,A_t^*} - R_{t,A_t} ) \right| A_{T_j}^* = i \right] \geq \frac{1}{8} \sqrt{nk}, $$
where the last inequality follows from \cref{eq:lower-bound-known-result}.
Since there are at least $\lfloor T/(n+1) \rfloor$ renewals until time $T$, $ \mathbb{E}\left[ \sum_{t=1}^{T} ( R_{t,A_t^*} - R_{t,A_t} ) \right] \geq \lfloor T/(n+1) \rfloor \sqrt{nk} / 8 $, and therefore,
$$ \bar{\Delta}_\infty(\pi) = \limsup_{T \rightarrow \infty} \mathbb{E}\left[ \frac{1}{T} \sum_{t=1}^{T} ( R_{t,A_t^*} - R_{t,A_t} ) \right] \geq \frac{  \sqrt{nk} }{8(n+1)}. $$
Since $n+1 \leq 2n$ and $n =  \lfloor \frac{k-1}{k} \tau_{\rm eff} \rfloor \leq \tau_{\rm eff}$, we have $\bar{\Delta}_\infty(\pi) \geq \frac{1}{16} \sqrt{ \frac{k}{\tau_{\rm eff}} }$.
\end{proof}

\subsection{Proof of \cref{rem:lambda-info-ratio}}
	Let $\Delta_t := \mathbb{E}[ R_{t,A_t^*} - R_{t,A_t} ]$, $G_t := I( A_t^*; (A_t, O_{t,A_t}) | \mathcal{F}_{t-1} )$, and $\Gamma_{\lambda,t} := \Delta_t^\lambda/G_t$.
	Then, $ \Gamma_\lambda(\pi) = \sup_{t \in \mathbb{N}} \Gamma_{\lambda,t}$, and we have
	\begin{align*}
		\bar{\Delta}_T(\pi) &= T^{-1}\sum_{t=1}^T \Delta_t \\
		&=  T^{-1}\sum_{t=1}^T \Gamma_{\lambda,t}^{1/\lambda} G_t^{1/\lambda}
		\\& \, \leq \Gamma_\lambda(\pi)^{1/\lambda} \cdot \left(  T^{-1} \sum_{t=1}^T G_t^{1/\lambda} \right)
		\\&\stackrel{(a)}{\leq} \Gamma_\lambda(\pi)^{1/\lambda} \cdot \left[  T^{-1} \left( \sum_{t=1}^T G_t \right)^{1/\lambda} \cdot \left( \sum_{t=1}^T 1 \right)^{1-1/\lambda} \right] \\
		\\& = \Gamma_\lambda(\pi)^{1/\lambda} \cdot \left( T^{-1} \sum_{t=1}^T G_t \right)^{1/\lambda}
		\\& \stackrel{(b)}{\leq} \Gamma_\lambda(\pi)^{1/\lambda} \cdot \bar{H}_T(A^*)^{1/\lambda},
	\end{align*}
	where step (a) uses H\"{o}lder's inequality, and step (b) uses $T^{-1}\sum_{t=1}^{T} G_t \leq \bar{H}_T( A^* )$.

\subsection{Proof of \cref{lem:contextual}}
	Recall the definition, $\Gamma(\pi) := \sup_{t \in \mathbb{N}} \Gamma_{t}(\pi)$ where 
	\[
	\Gamma_{t}(\pi)=\frac{ \left( \mathbb{E}\left[ R_{t,A^*_t} - R_{t,A_t} \right] \right)^2 }{ I\left( A_t^*;  (A_t, O_{t,A_t})  \mid \mathcal{F}_{t-1} \right) }.
	\]
	Our goal is to bound the numerator of $\Gamma_{t}(\pi^{\rm TS})$ in terms of the denominator.
	
	Let $\mathbb{E}_{t}\left[ \cdot \right] := \mathbb{E}\left[ ~ \cdot \mid X_t, \mathcal{F}_{t-1} \right]$ denote the conditional expectation operator which conditions on observations prior to time $t$ AND the context at time $t$.
	Similarly, define the probability operation $\mathbb{P}_{t}\left( \cdot \right) := \mathbb{P}\left( \cdot \mid X_t, \mathcal{F}_{t-1} \right)$ accordingly. 
	Define $I_{t}(\cdot; \cdot)$ to be the function that evaluates mutual information when the base measure is $\mathbb{P}_t$.
	
	The law of iterated expectations states that for any real valued random variable $Z$, $\mathbb{E}[\mathbb{E}_t[Z]]=\mathbb{E}[Z]$.
	The definition of conditional mutual information states that for any random variables $Z_1$, $Z_2$,
	\begin{equation}\label{eq:contextual-tower-poperty-of-MI}
		\mathbb{E}\left[I_{t}( Z_1 ; Z_2)\right] = I(Z_1 ; Z_2  \mid X_t, \mathcal{F}_{t-1} ).
	\end{equation}
	
	Under \cref{assumption:contextual}, there exists a function $\mu: \Theta \times \mathcal{X}  \times [k] \rightarrow \mathbb{R}$ such that
	\begin{equation}\label{eq:conetxtual-mu}
		\mu(\theta', x, i) = \mathbb{E}\left[ R_{t,A_t} \mid \theta_t = \theta', i_t=i, A_t\right].  
	\end{equation}
	This specifies expected rewards as a function of the parameter and chosen arm, regardless of the specific decision-rule used. 
	
	The definition of Thompson sampling is the probability matching property on decision-rules, $\mathbb{P}\left( A^*_t = a \mid \mathcal{F}_{t-1}\right)= \mathbb{P}\left( A^*_t = a \mid \mathcal{F}_{t-1}\right)$, for each $a \in \mathcal{A}$. It implies the following probability matching property on arms: with $i^*_t := A_t^*(X_t)\in [k]$. 
	\[ 
	\mathbb{P}_t\left( i^*_t = i \right)= \mathbb{P}\left( A^*_t(X_t) = i  \mid \mathcal{F}_{t-1}, X_t\right)=  \mathbb{P}\left( A_t(X_t) = i \mid \mathcal{F}_{t-1}, X_t\right) = 	\mathbb{P}_t\left( i_t = i \right),
	\]
	which holds for each $i\in [k]$. 
	
	With this setup, repeating the analysis in Proposition 3, or Corollary 1, of \citet{russo16} implies, immediately, that 
	\begin{equation}\label{eq:contextual-old-info-ratio-bound}
		\left(\mathbb{E}_{t}\left[  \mu(\theta_t, X_t, i_t^*) -  \mu(\theta_t, X_t, i_t)  \right]\right)^2  \leq 2 \cdot  \sigma^2 \cdot k \cdot I_{t}\left( i^*_t ; (i_t, R_t) \right). 
	\end{equation}
	(Conditioned on context, one can repeat the same proof to relate regret to information gain about the optimal arm.) Now, we complete the proof:
	\begin{align*}
		\left(\mathbb{E}\left[R_{t,A*_t} - R_{t,A_t} \right]\right)^2 & 
		\overset{(a)}{=} \left( \mathbb{E}\left[ \mu(\theta_t, X_t, i_t^*) -  \mu(\theta_t, X_t, i_t)  \right] \right)^2 \\
		&\overset{(b)}{\leq} \mathbb{E}\left[ \left(\mathbb{E}_{t}\left[  \mu(\theta_t, X_t, i_t^*) -  \mu(\theta_t, X_t, i_t)  \right]\right)^2 \right] \\
		&\overset{(c)}{\leq} 2 \cdot  \sigma^2 \cdot k \cdot \mathbb{E}\left[ I_{t}\left( i^*_t ; (i_t, R_t) \right) \right] \\
		& \overset{(d)}{=}  2 \cdot  \sigma^2 \cdot k \cdot  I\left( i^*_t ; (i_t, R_t) \mid X_t, \mathcal{F}_{t-1} \right) \\
		&\overset{(e)}{\leq}  2 \cdot  \sigma^2 \cdot k \cdot  I\left( A^*_t ; (i_t, R_t) \mid X_t, \mathcal{F}_{t-1} \right)\\
		&\overset{(f)}{\leq}  2 \cdot  \sigma^2 \cdot k \cdot  I\left( A^*_t ; (A_t, R_t) \mid X_t, \mathcal{F}_{t-1} \right) \\
		&\overset{(g)}{=}  2 \cdot  \sigma^2 \cdot k \cdot \left[  I\left( A^*_t ; (A_t, X_t, R_t) \mid \mathcal{F}_{t-1} \right)  - I\left( A^*_t ; X_t \mid \mathcal{F}_{t-1} \right)  \right]  \\
		&\overset{(h)}{\leq} 2 \cdot  \sigma^2 \cdot k \cdot   I\left( A^*_t ; (A_t, X_t, R_t) \mid \mathcal{F}_{t-1} \right) \\
		&\overset{(i)}{=}  2 \cdot  \sigma^2 \cdot k \cdot   I\left( A^*_t ; (A_t, O_t) \mid \mathcal{F}_{t-1} \right),
	\end{align*}
	where step (a)  uses \eqref{eq:conetxtual-mu}, step (b) is Jensen's inequality, step (c) applies \eqref{eq:contextual-old-info-ratio-bound}, step (d) is \eqref{eq:contextual-tower-poperty-of-MI},
	steps (e) and (f) apply the data processing inequality, step (g) uses the chain-rule of mutual information, step (h) uses that mutual information is non-negative, and step (i) simply recalls that $O_t=(X_t, R_t)$. 

\subsection{Proof of \cref{cor:contextual}}
If $\theta_t \in \{ -1, -1+\epsilon, \ldots, 1-\epsilon, 1 \}^{p}$ is a discretized $p$ dimensional vector, and the optimal policy process $(A^*_t)_{t\in \mathbb{N}}$ is stationary, then, by \cref{prop:effective-horizon-bound},  
\begin{align*}
	\bar{H}(A^*) &\leq \frac{1 + \log(\tau_\textup{eff}) + H(A_t^* | A_t^* \ne A_{t-1}^* ) }{\tau_\textup{eff}} \\
	&\leq \frac{1 + \log(\tau_\textup{eff}) + H(\theta_t ) }{\tau_\textup{eff}}\\
	&\leq \frac{1 + \log(\tau_\textup{eff}) + p \ln(2/\epsilon) }{\tau_\textup{eff}}.
\end{align*}
Combining this with \cref{thm:main-result} and the information ratio bound in \cref{lem:contextual} gives 
\[
\bar{\Delta}_\infty(\pi^{\rm TS}) \leq \sqrt{ 2 \cdot \sigma^2 \cdot k \times \frac{1 + \log(\tau_\textup{eff}) + p \ln(2/\epsilon) }{\tau_\textup{eff}  } }=  \tilde{O}\left( \sigma \sqrt{ \frac{p \cdot k}{\tau_\textup{eff}} } \right).
\]

\subsection{Proof of \cref{thm:main-result-sts}}
The proof is almost identical to that of \cref{thm:main-result}.
Let $\Delta_t^\dagger := \mathbb{E}[ R_{t,A_t^\dagger} - R_{t,A_t} ]$, $G_t^\dagger := I( A_t^\dagger; (A_t, O_{t,A_t}) | \mathcal{F}_{t-1} )$, and $\Gamma_t^\dagger = {\Delta_t^\dagger}^2/G_t^\dagger$. 
Then, 
	\[
		\sum_{t=1}^T \Delta_t^\dagger
			= \sum_{t=1}^T \sqrt{\Gamma_t^\dagger} \sqrt{G_t^\dagger}
			\leq \sqrt{ \sum_{t=1}^T \Gamma_t^\dagger } \sqrt{ \sum_{t=1}^T G_t^\dagger }
			\leq \sqrt{ \Gamma(\pi; A^\dagger) \cdot T \cdot \sum_{t=1}^T G_t^\dagger },
	\]
	and
    \begin{align*}
    \sum_{t=1}^T G_t^\dagger 
        &= \sum_{t=1}^T I( A_t^\dagger; (A_t, O_{t,A_t}) | \mathcal{F}_{t-1} )
	\\&\leq I( A_{1:T}^\dagger ; \mathcal{F}_T )
        \\&\leq I( A_{1:T}^\dagger ; (\theta_{1:T}, \mathcal{F}_T) )
        \\&= I( A_{1:T}^\dagger; \theta_{1:T} ) + I( A_{1:T}^\dagger; \mathcal{F}_t | \theta_{1:T} )
	\\&= I( A_{1:T}^\dagger; \theta_{1:T} ),
 \end{align*}
 where the last step uses the fact that $I( A_{1:T}^\dagger; \mathcal{F}_t | \theta_{1:T} ) = 0$ since $A_{1:T}^\dagger \perp \mathcal{F}_t | \theta_{1:T}$ according to \cref{def:satisficing_action}.
    Combining these results,
    \[
        \bar{\Delta}_T(\pi; A^\dagger) = \frac{\sum_{t=1}^T \Delta_t^\dagger}{T}
        \leq \sqrt{ \frac{ \Gamma(\pi; A^\dagger) \cdot  \sum_{t=1}^T G_t^\dagger }{T} }
        \leq \sqrt{ \Gamma(\pi; A^\dagger) \cdot \bar{I}_T(A_{1:T}^\dagger; \theta_{1:T} ) }.
    \]
    The bound on $\bar{\Delta}_\infty(\pi; A^\dagger)$ simply follows by taking limit on both sides.

\subsection{Proof of \cref{prop:variation-non-constructive}}
\label{subsec:variation-budget-proof}
We state and prove a concrete version of \cref{prop:variation-non-constructive}.

\begin{proposition}[Concrete version of \cref{prop:variation-non-constructive}] 
	If $|\mathcal{A}| \geq 2$ and $T \geq 2$, then
	\begin{equation*} \label{eq:drifting-bandit-rate-distortion}
		\bar{\mathcal{R}}_T(D) \leq \frac{2\bar{V}_T}{D} \cdot \left( 1 + \log \left( 1 + \min\left\{ T, \frac{D}{2\bar{V}_T} \right\} \right) + \log |\mathcal{A}| \right) + \frac{3 \log (|\mathcal{A}| T)}{T},
	\end{equation*}
	and
	\begin{align*} \label{eq:drifting-bandit-regret}
		\inf_\pi \bar{\Delta}_T(\pi) 
        &\leq 5 \cdot \left( \Gamma_U \cdot \log |\mathcal{A}| \cdot \bar{V}_T \right)^{1/3} \cdot \sqrt{ \log\left( 1 + \min\left\{ T , \frac{ \big( \Gamma_U \cdot \log |\mathcal{A}| \big)^{1/3} }{ \bar{V}_T^{2/3} }  \right\} \right) } + \sqrt{ \frac{3 \Gamma_U \log(|\mathcal{A}| T)}{T} }
        \\&\leq 10 \cdot \left( \Gamma_U \cdot \log |\mathcal{A}| \cdot \bar{V}_T \right)^{1/3} \cdot \log(T) + \sqrt{ \frac{3 \Gamma_U \log(|\mathcal{A}| T)}{T} }.
	\end{align*}
\end{proposition}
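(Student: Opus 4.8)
The plan is to exhibit a single near-optimal satisficing action sequence whose entropy is controlled by the total variation, and then invoke the machinery already in place. Concretely, take $A^\dagger$ to be the sequence defined in \cref{ex:sts-ignoring-small-suboptimality} with distortion parameter $D$; it is a legitimate satisficing action sequence in the sense of \cref{def:satisficing_action}, since $A^\dagger_t$ is a deterministic function of $\theta_{1:t}$ built recursively. By construction, at every time $t$ the held arm satisfies $\E[R_{t,A^\dagger_t}\mid \theta_t]\ge \E[\max_{a}R_{t,a}\mid\theta_t]-D$ (either $t$ is a switch time and the gap is $0$, or the retention test passed at time $t$). Using the conditional-independence structure of the model --- $W_t$ is independent of $\theta_{1:t}$, so given $\theta_t$ the reward $R_{t,a}$ is independent of $A^\dagger_t$ --- this yields $\E[R_{t,A^*_t}-R_{t,A^\dagger_t}]=\E[\Delta_t(A^\dagger_t)]\le D$, hence $\bar\Delta_T(A^\dagger;A^*)\le D$. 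Thus $A^\dagger$ is feasible for the infimum in \eqref{eq:rate-disortion-function-fixed-T}, and it remains to upper bound $\bar I_T(A^\dagger_{1:T};\theta)\le \bar H_T(A^\dagger_{1:T})$ (mutual information is at most entropy, and $A^\dagger_{1:T}$ is discrete).

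The heart of the argument is a deterministic bound on the number of switches $\mathcal{S}_T(A^\dagger)$ of this sequence. List the switch times as $2\le\sigma_1<\dots<\sigma_m\le T$, and let $a_j:=A^\dagger_{\sigma_j}$ denote the arm held on the block $[\sigma_j,\sigma_{j+1})$. Because a switch resets the held arm to the conditional optimizer, $\Delta_{\sigma_j}(a_j)=0$; because the switch at $\sigma_{j+1}$ is triggered precisely when the retention test fails, $\Delta_{\sigma_{j+1}}(a_j)>D$. Telescoping and the triangle inequality then give, for each $j<m$,
\[
D < \Delta_{\sigma_{j+1}}(a_j)-\Delta_{\sigma_j}(a_j) \le \sum_{t=\sigma_j+1}^{\sigma_{j+1}}\bigl|\Delta_t(a_j)-\Delta_{t-1}(a_j)\bigr| \le \sum_{t=\sigma_j+1}^{\sigma_{j+1}}\max_{a\in\mathcal{A}}\bigl|\Delta_t(a)-\Delta_{t-1}(a)\bigr|.
\]
Summing over $j=1,\dots,m-1$ (the index blocks are disjoint subsets of $\{2,\dots,T\}$) shows $(m-1)D\le\sum_{t=2}^T\max_a|\Delta_t(a)-\Delta_{t-1}(a)|$, so $\mathcal{S}_T(A^\dagger)=m+1\le 2+\tfrac1D\sum_{t=2}^T\max_a|\Delta_t(a)-\Delta_{t-1}(a)|$. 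Taking expectations and dividing by $T$ yields a switching-rate bound $\bar S_T\le \tfrac2T+\tfrac{\bar V_T}{D}$, with $\bar V_T$ as in \eqref{eq:total-variation}.

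With the switching rate in hand, the rate-distortion bound follows by feeding $\bar S_T$ into the entropy-rate estimate of \cref{prop:combinatorial-bound} (equivalently, by running the counting argument in the proof of \cref{lem:combinatorial-bound} conditionally on $\mathcal{S}_T(A^\dagger)$ and applying Jensen's inequality). The two additive $O(1/T)$ terms --- the $2/T$ in $\bar S_T$ and the $H(\mathcal{S}_T)/T\le \log T/T$ contribution --- get absorbed into the stated $\tfrac{3\log(|\mathcal{A}|T)}{T}$ remainder, while $1/\bar S_T\le\min\{T,D/\bar V_T\}$ produces the $\log\bigl(1+\min\{T,D/(2\bar V_T)\}\bigr)$ factor; tracking the universal constants through these steps delivers the displayed bound on $\bar{\mathcal{R}}_T(D)$. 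For the regret bound, plug this into \cref{thm:rate-distortion}, which gives $\inf_\pi\bar\Delta_T(\pi)\le\inf_{D\ge0}\{D+\sqrt{\Gamma_U\bar{\mathcal{R}}_T(D)}\}$, and evaluate at the near-optimal choice $D^\star=(2\Gamma_U\log|\mathcal{A}|\,\bar V_T)^{1/3}$, which balances $D$ against $\sqrt{\Gamma_U\bar V_T/D}$; the leftover $\sqrt{\Gamma_U\log(|\mathcal{A}|T)/T}$ term comes from the $O(1/T)$ remainder in $\bar{\mathcal{R}}_T$. The second, cruder inequality follows by bounding the residual $\sqrt{\log(1+\cdot)}$ factor by $\log T$ for $T\ge 2$.

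I expect the main obstacle to be the switching-count step: one must verify carefully that a switch can only be caused by the held arm's suboptimality having risen by more than $D$ since the previous switch, that the resulting inequalities live on disjoint time intervals so that they may be summed into a single total-variation bound, and that the model's conditional-independence assumptions permit passing freely between $\E[R_{t,a}\mid\theta_t]$ and realized rewards (both when certifying $\bar\Delta_T(A^\dagger;A^*)\le D$ and when taking the expectation that produces $\bar V_T$). Once that bound is in place, everything else is bookkeeping together with direct appeals to \cref{prop:combinatorial-bound} and \cref{thm:rate-distortion}; the only remaining care is tracking the universal constants so as to land exactly on the stated $2$, $3$, $5$, and $10$.
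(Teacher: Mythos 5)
Your proposal is correct and follows essentially the same route as the paper: the same satisficing sequence from \cref{ex:sts-ignoring-small-suboptimality}, a variation-to-switch-count bound, the entropy estimate of \cref{prop:combinatorial-bound}, and \cref{thm:rate-distortion} evaluated at $D^*\propto(\Gamma_U\log|\mathcal{A}|\,\bar V_T)^{1/3}$. The only deviation is your per-block telescoping (from $\Delta_{\sigma_j}(a_j)=0$ up to $\ge D$) in place of the paper's single global telescoping with a factor $2$ --- this is fine and in fact slightly tighter on the main term, but to land on the stated $3\log(|\mathcal{A}|T)/T$ remainder you should also count the block preceding the first switch, giving $\mathcal{S}_T(A^\dagger)\le 1+\frac{1}{D}\sum_{t=2}^T\max_{a}\left|\Delta_t(a)-\Delta_{t-1}(a)\right|$ rather than your $2+\cdots$, since the extra additive $+1$ otherwise inflates the $O(1/T)$ remainder beyond the stated constant when $|\mathcal{A}|$ is small.
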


\begin{proof}
For brevity we define $\mu_t(a) := \mathbb{E}[ R_{t,a} | \theta_t ]$ so that $\Delta_t(a) = \mu_t(A_t^*) - \mu_t(a)$.

Fix $D$ and consider a satisficing action sequence $A^\dagger$ that gets synchronized to $A^*$ whenever $D$-gap occurs:
\[
A_t^\dagger = \left\{ \begin{array}{ll} A_t^* & \text{if } t=1 \text{ or } \mu_{t}(A_t^*) \geq \mu_{t}(A_{t-1}^\dagger) + D, \\ A_{t-1}^\dagger & \text{otherwise}. \end{array}  \right.
\]
It is obvious that $\bar{\Delta}_T(A^\dagger;A^*) \leq D$ since $\mu_{t}(A_t^*) - \mu_{t}(A_t^\dagger) \leq D$.

\paragraph{Step 1.}
Let $V_T := \bar{V}_T \times T$, and $S_T^\dagger := \mathbb{E}\left[ 1+\sum_{t=2}^T \mathbb{I}\{ A_t^\dagger \ne A_{t-1}^\dagger \} \right]$.
We first show that $S_T^\dagger \leq \frac{2 V_T}{D}+1$ by arguing that the total variation must increase at least by $D/2$ whenever the satisficing action switches.
Observe that, in every sample path,
\begin{align*}
	2 \times \sum_{t=2}^T \sup_{a \in \mathcal{A}} \left| \Delta_t(a) - \Delta_{t-1}(a) \right|
        &\geq \sum_{t=2}^T \left| \Delta_{t}(A_t^*) - \Delta_{t-1}(A_t^*) \right| + \left| \Delta_{t}(A_{t-1}^\dagger) - \Delta_{t-1}(A_{t-1}^\dagger) \right|
        \\&= \sum_{t=2}^T \left| \Delta_{t}(A_t^*) - \Delta_{t-1}(A_t^*) \right| + \left| \Delta_{t-1}(A_{t-1}^\dagger) - \Delta_{t}(A_{t-1}^\dagger) \right|
        \\&\geq \sum_{t=2}^T \left( \Delta_{t}(A_t^*) - \Delta_{t-1}(A_t^*) \right) + \left( \Delta_{t-1}(A_{t-1}^\dagger) - \Delta_{t}(A_{t-1}^\dagger) \right)
        \\&= \sum_{t=2}^T  \left( \Delta_{t}(A_t^*) - \Delta_{t}(A_{t-1}^\dagger) \right) - \left( \Delta_{t-1}(A_t^*) - \Delta_{t-1}(A_{t-1}^\dagger) \right)
        \\&\stackrel{(a)}{=} \sum_{t=2}^T  \left( \mu_{t}(A_t^*) - \mu_{t}(A_{t-1}^\dagger) \right) - \left( \mu_{t-1}(A_t^*) - \mu_{t-1}(A_{t-1}^\dagger)  \right)
        \\&\stackrel{(b)}{\geq} \sum_{t=2}^T  \left( \mu_{t}(A_t^*) - \mu_{t}(A_{t-1}^\dagger) \right) - \left( \mu_{t-1}(A_{t-1}^*) - \mu_{t-1}(A_{t-1}^\dagger)  \right)
        \\&= \sum_{t=2}^T \left( \mu_{t}(A_t^*) - \mu_{t}(A_t^\dagger) \right) - \left( \mu_{t-1}(A_{t-1}^*) - \mu_{t-1}(A_{t-1}^\dagger)  \right) +  \left( \mu_{t}(A_t^\dagger) - \mu_{t}(A_{t-1}^\dagger) \right) 
        \\&=  \left( \mu_{T}(A_T^*) - \mu_{T}(A_T^\dagger) \right) - \left( \mu_{1}(A_1^*) - \mu_{1}(A_1^\dagger) \right)  + \sum_{t=2}^T \left( \mu_{t}(A_t^\dagger) - \mu_{t}(A_{t-1}^\dagger) \right)
        \\&= \left( \mu_{T}(A_T^*) - \mu_{T}(A_T^\dagger) \right) - 0  + \sum_{t=2}^T \left( \mu_{t}(A_t^\dagger) - \mu_{t}(A_{t-1}^\dagger) \right) \mathbb{I}\{ A_t^\dagger \ne A_{t-1}^\dagger \}
        \\&\stackrel{(c)}{\geq} 0 + \sum_{t=2}^T D \cdot \mathbb{I}\{ A_t^\dagger \ne A_{t-1}^\dagger \},
\end{align*}
where step (a) uses the definition of $\Delta_t(a)$, step (b) uses the fact that $\mu_{t-1}(A_t^*) \leq \mu_{t-1}(A_{t-1}^*)$, and step (c) holds since $\mu_{T}(A_T^*) - \mu_{T}(A_T^\dagger) \geq 0$ and if $A_t^\dagger \ne A_{t-1}^\dagger$ then $A_t^* = A_t^\dagger$ and $\mu_{t}(A_t^*) - \mu_{t}(A_{t-1}^\dagger) \geq D$.
By taking expectation on both sides, we obtain $2 \times V_T \geq D \times \mathbb{E}[ \sum_{t=2}^T \mathbb{I}\{ A_t^\dagger \ne A_{t-1}^\dagger \} ] = D \times (S_T^\dagger - 1)$.

\paragraph{Step 2.}
We further bound $\bar{\mathcal{R}}_T(D)$ by utilizing \cref{prop:combinatorial-bound}:
$$
	\bar{\mathcal{R}}_T(D) 
		\leq \bar{I}_T(A^\dagger;\theta)
		\leq \bar{H}_T(A^\dagger) 
		\leq \frac{S_T^\dagger}{T} \cdot \left( 1 + \log \left( 1 + \frac{T}{S_T^\dagger} \right) + \log |\mathcal{A}| \right) + \frac{\log T}{T}.
$$
Let $x \wedge y = \min\{x,y\}$ and $x \vee y = \max\{x,y\}$.
Since $1 \leq S_T^\dagger \leq \left( \frac{2 V_T}{D}+1 \right) \wedge T$,
\begin{align*}
	\bar{\mathcal{R}}_T(D)
		&\leq \frac{S_T^\dagger}{T} \cdot \left( 1 + \log \left( 1 + \frac{T}{S_T^\dagger} \right) + \log |\mathcal{A}| \right) + \frac{\log T}{T}
		\\&\leq \frac{2 V_T/D + 1}{T} \cdot \left( 1 + \log \left( 1 + \frac{T}{S_T^\dagger} \right) + \log |\mathcal{A}| \right) + \frac{\log T}{T}
		\\&\leq \frac{2 V_T }{DT} \cdot \left( 1 + \log \left( 1 + \frac{T}{S_T^\dagger}  \right) + \log |\mathcal{A}| \right) + \frac{1 + \log(1+T) + \log |\mathcal{A}|}{T} + \frac{\log T}{T}
		\\&\leq \frac{2 V_T }{DT} \cdot \left( 1 + \log \left( 1 + T \wedge \left(1 \vee \frac{DT}{2 V_T} \right) \right) + \log |\mathcal{A}| \right) + \frac{2\log T + 3\log |\mathcal{A}|}{T} + \frac{\log T}{T}
            \\&=\frac{2 V_T }{DT} \cdot \left( 1 + \log \left( 1 + T \wedge \left(1 \vee \frac{DT}{2 V_T} \right) \right) + \log |\mathcal{A}| \right) + \frac{3\log (|\mathcal{A}|T)}{T},
\end{align*}
where the last inequality uses $1 \leq 2 \log |\mathcal{A}|$ for $|\mathcal{A}| \geq 2$ and $\log(1+T) \leq 2 \log T$ for $T \geq 2$.

\paragraph{Step 3.}
To simplify notation, let $k := |\mathcal{A}|$, and $\kappa := 1 + \log k$.
We consider a satisficing action sequence $A^\dagger$ constructed with $D=D^* := \left( 2 \cdot \Gamma_{U} \cdot \kappa \cdot \bar{V}_T \right)^{1/3}$.
By \cref{thm:rate-distortion}, with $\tau^* :=  T \wedge \left(1 \vee \frac{D^*}{2 \bar{V}_T} \right)$,
\begin{align*}
	\inf_\pi \bar{\Delta}_T(\pi)
		&\leq D^* + \sqrt{ \Gamma_{U} \times \bar{\mathcal{R}}_T(D^*)  }
		\\&\leq D^* + \sqrt{ \Gamma_{U} \times \left[  \frac{2\bar{V}_T}{D^*} \cdot \left( \kappa + \log \left( 1 + \tau^* \right) \right) + \frac{3 \log (k T)}{T} \right]  }
		\\&\stackrel{(a)}{\leq} D^* + \sqrt{ \Gamma_{U} \times \left[  \frac{2\bar{V}_T}{D^*} \cdot \left( \kappa + \log \left( 1 + \tau^* \right) \right) \right]  } + \sqrt{ \frac{3 \Gamma_{U} \log (k T)}{T} }
		\\&= D^* + \sqrt{ \frac{ 2 \Gamma_{U} \kappa \bar{V}_T }{D^*} \times \left( 1 + \frac{ \log \left( 1 + \tau^* \right) }{\kappa} \right)  } + \sqrt{ \frac{3 \Gamma_{U} \log (k T)}{T} }
		\\&= D^* + D^* \times \sqrt{ 1 + \frac{ \log \left( 1 + \tau^* \right) }{\kappa} } + \sqrt{ \frac{3 \Gamma_{U} \log (k T)}{T} }
		\\&\stackrel{(b)}{\leq} 2.6 \times D^* \times \sqrt{ \log \left( 1 + \tau^* \right) } + \sqrt{ \frac{3 \Gamma_{U} \log (k T)}{T} },
\end{align*}
where step (a) uses $\sqrt{x+y} \leq \sqrt{x} + \sqrt{y}$ for any $x, y \geq 0$, and step (b) uses $\kappa \geq 1$ and $\log \left( 1 + \tau^* \right)  \geq \log 2$.
Since $\kappa \leq 3 \log k$, we have 
$$
	D^* = (2 \cdot \Gamma_{U} \cdot \kappa \cdot \bar{V}_T )^{1/3}
		\leq (6 \cdot \Gamma_{U} \cdot \log k \cdot \bar{V}_T )^{1/3}
		\leq 1.9 \cdot (\Gamma_{U} \cdot \log k \cdot \bar{V}_T )^{1/3}.
$$
Therefore,
$$
	\inf_\pi \bar{\Delta}_T(\pi)
		\leq 5 \cdot (\Gamma_{U} \cdot \log k \cdot \bar{V}_T )^{1/3} \times \sqrt{ \log \left( 1 + \min\left\{ T, \frac{(\Gamma_{U} \cdot \log k \cdot \bar{V}_T )^{1/3}}{\bar{V}_T} \right\} \right) } + \sqrt{ \frac{3 \Gamma_{U} \log (k T)}{T} }.
$$
The final claim is obtained by observing that $\sqrt{\log(1+T)} \leq 2 \log T$ for $T \geq 2$.
\end{proof}

\section{Comparison with \cite{liu23}} \label{app:predictive-sampling}

In this section, we provide a detailed comparison between our work and \citet{liu23}.

Both papers adopt a Bayesian perspective that models the nonstationarity through a stochastic latent parameter process whose law is known to the decision maker. Both papers consider Thompson sampling (TS), which is the rule that employs probability matching with respect to the optimal action sequence $A^*$. The primary concern in \cite{liu23} revolves around the potential shortcomings of TS in rapidly changing environments where learning $A^*$ 
is hopeless. Our paper considers this shortcoming of TS in \cref{sec:sts}.
To tackle this challenge, both studies propose some variants of TS designed to `satisfice', i.e., aim to learn and play a different target other than $A^*$.

More specifically, \citet{liu23} proposes a brilliant algorithm called \emph{predictive sampling} (PS) that improves over TS by ``deprioritizing acquiring information that quickly loses usefulness''.
Using our notation, PS's learning target is $A_t^\dagger = \argmax_{a \in \mathcal{A}} \mathbb{E}[ R_{t,a} | 
 \mathcal{F}_{t-1}, (R_{t',a'})_{a' \in \mathcal{A}, t' \geq t+1} ]$, the arm that would have been played by a Bayesian decision maker informed of all future rewards but not the latent parameters.\footnote{
    In its implementation, it generates a fictitious future scenario, and makes an inference on the best arm after simulating the posterior updating procedure with respect to this future scenario.
    This makes the algorithm to take into account the total amount of information about the current state that the decision maker can potentially accumulate in the future, and hence prevents it from being overly optimistic about the learnability of the target.
    A similar idea can be found in \citet{min19} for the stationary bandit problems with a finite-time horizon.}
By ignoring the unresolvable uncertainty in the latent states --- i.e. that which remains even conditioned on all future reward observations --- PS avoids wasteful exploration.
We remark that one of our suggested satisificing action sequence designs, \cref{ex:sts-obscuring-info}, was motivated from \citet{liu23}.
But our satisficing TS does not cannot fully generalize PS since we have restricted the learning target to be determined independently from the algorithm's decisions (see \cref{def:satisficing_action}), while $A_t^\dagger$ above depends on the history of observations under the algorithm.

\citet{liu23} uses an information-ratio analysis to bound an algorithm’s ‘foresight regret’ in terms of the total possible ‘predictive information.’ 
Their custom modifications to the information-ratio analysis in \cite{russo16} enable an analysis of predictive sampling which seems not to be possible otherwise. By contrast, our analysis does not apply to predictive sampling, but produces a wealth of results that do not follow from \cite{liu23}. We elaborate on three main differences in the information-theoretic analysis. 

First, \cite{liu23} bound a custom suboptimality measure which they call foresight regret, whereas we bound conventional (`dynamic') regret. 
Compared to the conventional regret which we denote by $\bar{\Delta}_T(\pi; A^*)$,  foresight regret can serve as a tighter benchmark that better quantifies the range of achievable performance, since it is always non-negative while not exceeding the conventional regret.
For the same reason, however, their results do not provide upper bounds on the conventional regret , a metric treated in a huge preceding literature.
Since our work bounds the conventional regret metric, it enables us to recover results that are comparable to the existing literature; see for instance Section \ref{sec:adversarial}. 

Second, whereas \cite{liu23} introduce a new information ratio, partly specialized to predictive sampling, we use the same information ratio as in \cite{russo16} and many subsequent papers.  Using the conventional  definition of the information ratio  enables us to systematically inherit all the upper bounds established for stationary settings (See Sec.~\ref{sec:information-ratio-bounds}). \cite{liu23} provide an upper bound on their modified information ratio for a particular nonstationary environment they termed `modulated $k$-armed Bernoulli bandit,' (this is identical to our \cref{ex:news-rec}) but more research is required to bound it in other settings. 

Finally, due to using a different definitions of the information ratio, our bounds depend on different notions of the maximial cumulative information gain. 
The corresponding term in our \cref{thm:main-result} is simply the entropy rate of the optimal action process (and elsewhere,  a mutual information rate).
As one of the most fundamental metrics in information theory, numerous techniques can be employed to bound this entropy rate as in \cref{subsec:entropy-rate-bounds}. 
The regret bound in \cite{liu23} involves a term which they call cumulative predictive information. They bound this quantity in the special case of modulated $k$-armed bandits (equivalent to \cref{ex:news-rec}). In that example, their bound on cumulative predictive information is roughly $k$ times larger than our bound on the entropy rate of the optimal action process. As a result, \cite{liu23} obtain a final regret bound that scales with $k$ (see theorem 5 of \citet{liu23}) whereas our bound scales with $\sqrt{k \log(k)}$ (see \cref{ex:news-rec-revisit}).

\section{Numerical Experiment in Detail} \label{app:numerical}

We here illustrate the detailed procedure of the numerical experiment conducted in \cref{subsec:example}.

\paragraph{Generative model.}
We say a stochastic process $(X_t )_{t \in \mathbb{N}} \sim \mathcal{GP}( \sigma_X^2, \tau_X )$ if $(X_1, \ldots, X_t)$ follows a multivariate normal distribution satisfying
$$ \mathbb{E}[ X_i ] = 0, \quad \text{Cov}( X_i, X_j ) = \sigma_X^2 \exp\left( -\frac{1}{2}\left( \frac{i-j}{\tau_X} \right)^2 \right), $$
for any $i, j \in [t]$ and any $t \in \mathbb{N}$.
Note that this process is stationary, and given horizon $T$ a sample path $(X_1, \ldots, X_T)$ can be generated by randomly drawing a multivariate normal variable from the distribution specified by $\sigma_X^2$ and $\tau_X$.

As described in \cref{subsec:example}, we consider a nonstionary two-arm Gaussian bandit with unit noise variance:
$$ R_{t,a} = \underbrace{ \theta_t^{\rm cm} + \theta_{t,a}^{\rm id}}_{:= \mu_{t,a}} + \epsilon_{t,a}, \quad \forall a \in \{1,2\}, t \in \mathbb{N}, $$
where $\epsilon_{t,a}$'s are i.i.d. noises $\sim \mathcal{N}(0,1^2)$, $\big( \theta_t^{\rm cm} \big)_{t \in \mathbb{N}}$ is the common variation process $\sim \mathcal{GP}( 1^2, \tau^{\rm cm} )$, and $\big( \theta_{t,a}^{\rm id} \big)_{t \in \mathbb{N}}$ is arm $a$'s idiosyncratic variation process $\sim \mathcal{GP}( 1^2, \tau^{\rm id} )$.

Note that the optimal action process is completely determined by $\theta^{\rm id}$:
$$ A_t^* = \left\{ \begin{array}{cc} 1 & \text{if } \theta_{t,1}^{\rm id} \geq \theta_{t,2}^{\rm id} \\ 2 & \text{if } \theta_{t,1}^{\rm id} < \theta_{t,2}^{\rm id} \end{array} \right. , $$
and the optimal action switches more frequently when $\tau^{\rm id}$ is small (compare \cref{fig:example-illustration-fast} with \cref{fig:example-illustration}).

\begin{figure}[ht]
\begin{center}
\centerline{\includegraphics[width=0.6\columnwidth]{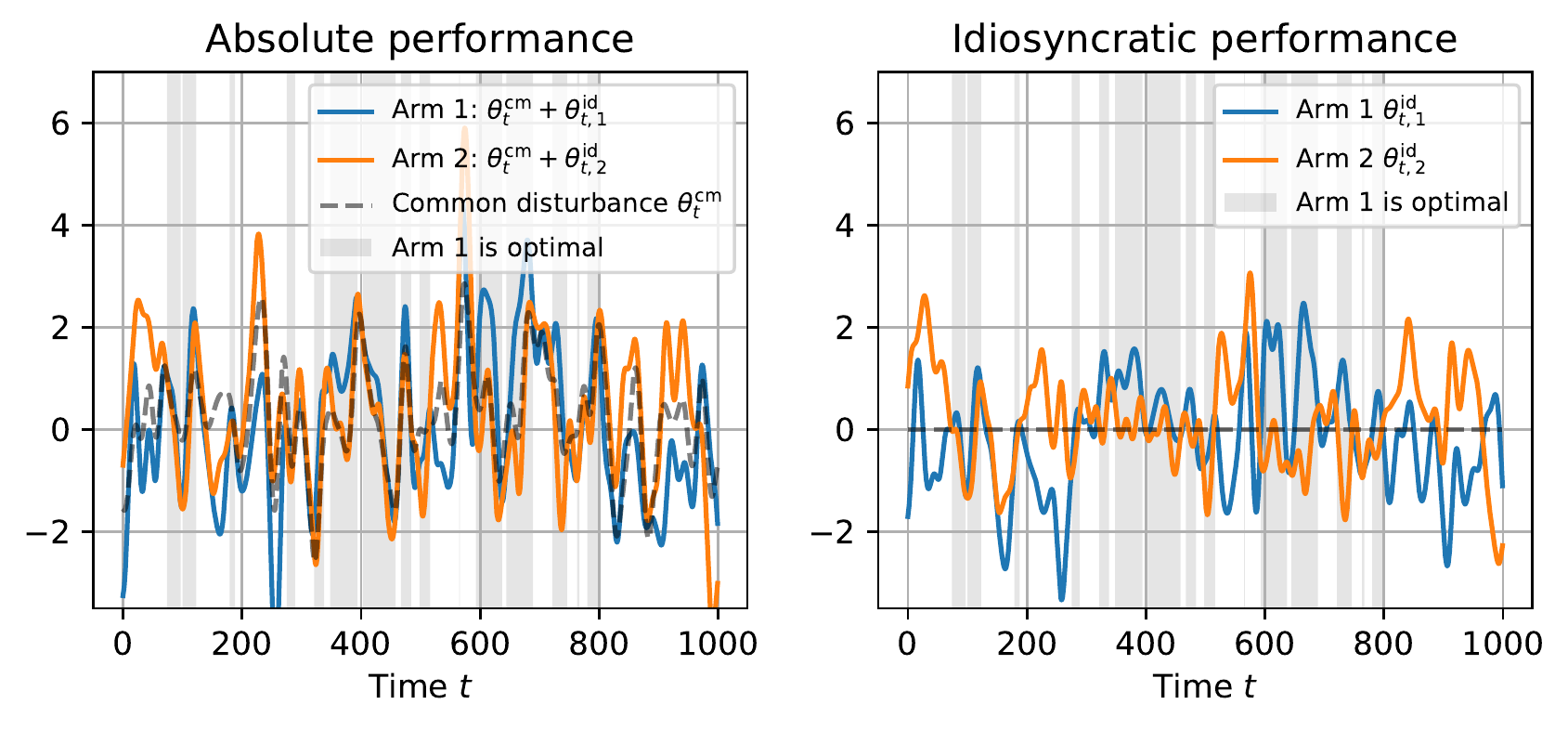}}
\caption{A sample path generated with $\tau^{\rm cm}=\tau^{\rm id}=10$ (cf., \cref{fig:example-illustration} was generated with $\tau^{\rm cm}=10$ and $\tau^{\rm id}=50$).}
\label{fig:example-illustration-fast}
\end{center}
\vskip -0.2in
\end{figure}

Consequently, the problem's effective horizon $\tau_{\rm eff} := 1/\mathbb{P}( A_t^* \ne A_{t-1}^* )$, defined in \eqref{eq:effective-horizon}, depends only on $\tau^{\rm id}$.
To visualize this relationship, we estimate $\tau_{\rm eff}$ using the sample average of the number of switches occurred over $T=1000$ periods (averaged across 100 sample paths), while varying $\tau^{\rm id}$ from $1$ to $100$.
See \cref{fig:effective-horizon}.
As expected, $\tau_{\rm eff}$ is linear in $\tau^{\rm id}$ (more specifically, $\tau_{\rm eff} \approx \pi \times \tau^{\rm id}$ by Rice formula).

\begin{figure}[ht]
\begin{center}
\centerline{\includegraphics[width=0.4\columnwidth]{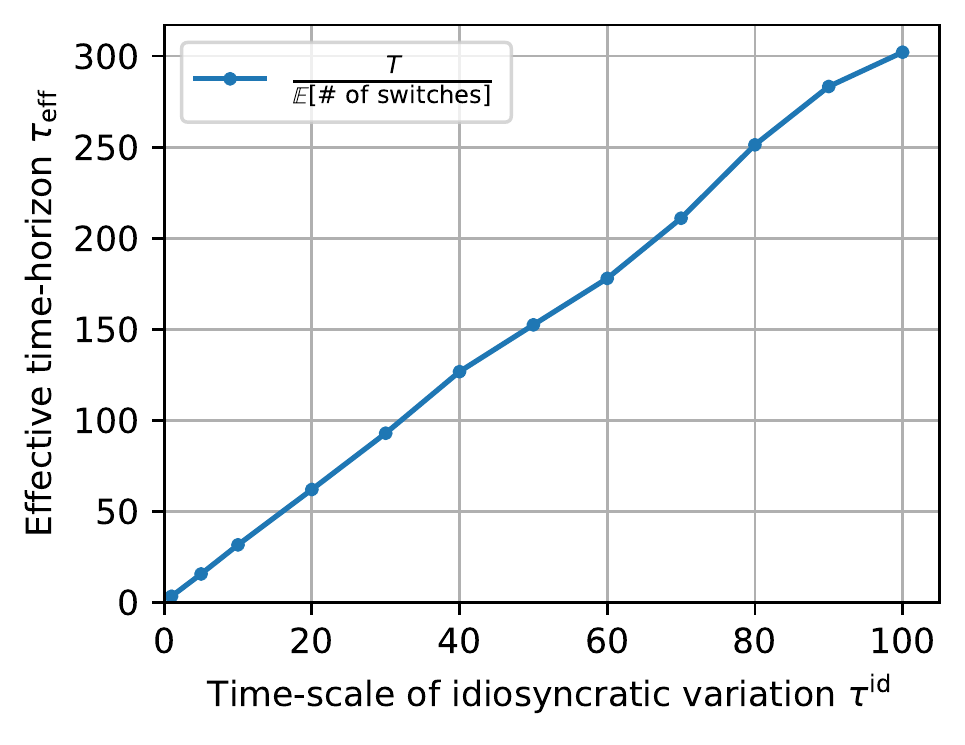}}
\caption{The effective time horizon $\tau_{\rm eff}$ as a function of $\tau^{\rm id} \in \{1,5,10, \ldots, 100\}$, estimated from 100 sample paths randomly generated.}
\label{fig:effective-horizon}
\end{center}
\vskip -0.2in
\end{figure}

\paragraph{Tested bandit algorithms.}
Given the generative model described above, we evaluate four algorithms -- Thompson sampling (TS), Sliding-Window TS (SW-TS; \citet{trovo20}), Sliding-Window Upper-Confidence-Bound (SW-UCB; \citet{garivier2011upper}), and Uniform.

\emph{Thompson sampling} (TS) is assumed to know the dynamics of latent state processes as well as the exact values of $\tau^{\rm cm}$ and $\tau^{\rm id}$ (i.e., no model/prior misspecification).
More specifically, in each period $t$, $\pi^{\rm TS}$ draws a random sample $(\tilde{\theta}_{t,1}^{\rm id}, \tilde{\theta}_{t,2}^{\rm id})$ of the latent state $(\theta_{t,1}^{\rm id}, \theta_{t,2}^{\rm id})$ from its posterior distribution, and then selects the arm $A_t \gets \argmax_{a \in \{1,2\}} \tilde{\theta}_{t,a}^{\rm id}$.
Here, the posterior distribution of $(\theta_{t,1}^{\rm id}, \theta_{t,2}^{\rm id})$ given the history $\mathcal{F}_{t-1} = (A_1, R_1, \ldots, A_{t-1}, R_{t-1} )$ is a multivariate normal distribution that can be computed as follows.
Given the past action sequence $(A_1, \ldots, A_{t-1}) \in \mathcal{A}^{t-1}$, the (conditional) distribution of $(R_1, \ldots, R_{t-1}, \theta_{t,1}^{\rm id}, \theta_{t,2}^{\rm id})$ is given by
$$ \left. \left[ \begin{array}{c} R_1 \\ \vdots \\ R_{t-1} \\ \theta_{t,1}^{\rm id} \\ \theta_{t,2}^{\rm id} \end{array} \right] \right| (A_1, \ldots, A_{t-1}) \sim \mathcal{N}\left( 0 \in \mathbb{R}^{(t-1)+2}, \left[ \begin{array}{cc} \Sigma_{t,RR} \in \mathbb{R}^{(t-1) \times (t-1)} & \Sigma_{t,\theta R}^\top \in \mathbb{R}^{(t-1) \times 2} \\ \Sigma_{t,\theta R} \in \mathbb{R}^{2 \times (t-1)} & \Sigma_{t,\theta \theta} \in \mathbb{R}^{2 \times 2} \end{array} \right] \right), $$
where the pairwise covariances are given by $ (\Sigma_{t,RR})_{ij} := \text{Cov}( R_i, R_j | A_i, A_j ) = \mathbb{I}\{ i = j \} \cdot \text{Var}(\epsilon_{i,a}) + \text{Cov}( \theta_i^{\rm cm}, \theta_j^{\rm cm} ) + \mathbb{I}\{ A_i = A_j \} \cdot \text{Cov}( \theta_{i,a}^{\rm id}, \theta_{j,a}^{\rm id} )$ for $i,j \in [t-1]$, $(\Sigma_{t,\theta R})_{ai} := \text{Cov}( R_i, \theta_{t,a}^{\rm id} | A_i ) = \mathbb{I}\{ A_i = a \} \cdot \text{Cov}( \theta_{i,a}^{\rm id}, \theta_{t,a}^{\rm id}) $ for $a \in [2]$ and $i \in [t-1]$, and $\Sigma_{t,\theta \theta}$ is the identity matrix.
Additionally given the reward realizations, 
$$ \left. \left[ \begin{array}{c} \theta_{t,1}^{\rm id} \\ \theta_{t,2}^{\rm id} \end{array} \right] \right| \mathcal{F}_{t-1} \sim \mathcal{N}( \hat{\mu}_t \in \mathbb{R}^2, \hat{\Sigma}_t \in \mathbb{R}^{2 \times 2} ), \quad 
\text{where} \quad \hat{\mu}_t :=  \Sigma_{t,\theta R} \Sigma_{t, RR}^{-1} \left[ \begin{array}{c} R_1 \\ \vdots \\ R_{t-1} \end{array} \right] , \quad \hat{\Sigma}_t := \Sigma_{t,\theta \theta} - \Sigma_{t,\theta R} \Sigma_{t,R R}^{-1} \Sigma_{t, \theta R}^\top. $$

\emph{Sliding-Window TS} is a simple modification of stationary Thompson sampling such that behaves as if the environment is stationary but discards all observations revealed before $L$ periods ago.
More specifically, in each period $t$, $\pi^{\rm SW-TS}$ with window length $L$ draws a random sample of mean rewards $\tilde{\mu}_{t,a}$ from $\mathcal{N}( \hat{\mu}_{t,a}, \hat{\sigma}_{t,a}^2 )$ where
$$ \hat{\mu}_{t,a} := \frac{ \sum_{s=\max\{1, t-L\}}^{t-1} R_s \mathbb{I}\{A_s = a\} }{1 + N_{t,a}}
	, \quad \hat{\sigma}_{t,a}^2 := \frac{1}{1+ N_{t,a}}
	, \quad N_{t,a} :=  \sum_{s=\max\{1, t-L\}}^{t-1} \mathbb{I}\{A_s = a\}, $$
and then selects the arm $A_t \gets \argmax_{a \in \mathcal{A}} \tilde{\mu}_{t,a}$.
Here, $L$ is a control parameter determining the degree of adaptivity.

Similarly, \emph{Sliding-Window UCB} implements a simple modification of UCB such that computes UCB indices defined as
$$ U_{t,a} := \hat{\mu}_{t,a} + \beta \frac{1}{\sqrt{N_{t,a}}}
	, \quad \hat{\mu}_{t,a} := \frac{ \sum_{s=\max\{1, t-L\}}^{t-1} R_s \mathbb{I}\{A_s = a\} }{N_{t,a}}
	, \quad N_{t,a} :=  \sum_{s=\max\{1, t-L\}}^{t-1} \mathbb{I}\{A_s = a\}, $$
and then selects  the arm $A_t \gets \argmax_{a \in \mathcal{A}} U_{t,a}$.
Here, $L$ is a control parameter determining the degree of adaptivity, and $\beta$ is a control parameter determining the degree of exploration.

\emph{Uniform} is a na\"{i}ve benchmark policy that always selects one of two arms uniformly at random.
One can easily show that $\bar{\Delta}_T( \pi^{\rm Uniform} ) \approx 0.57$ in our setup, regardless of the choice of $\tau^{\rm cm}$ and $\tau^{\rm id}$.

\paragraph{Simulation results.}
Given a sample path specified by $\theta$, we measure the (pathwise) Ces\`{a}ro average regret of an action sequence $A$ as
$$ \bar{\Delta}_T(A; \theta ) := \frac{1}{T} \sum_{t=1}^T (\mu_t^* - \mu_{t,A_t}), \quad \text{where} \quad \mu_{t,a} := \theta_t^{\rm cm} + \theta_{t,a}^{\rm id}, \quad \mu_t^* := \max_{a \in \mathcal{A}} \mu_{t,a}. $$

Given an environment specified by $(\tau^{\rm cm}, \tau^{\rm id})$, we estimate the per-period regret of an algorithm $\pi$ using $S$ sample paths:
$$ \hat{\bar{\Delta}}_T(\pi; \tau^{\rm cm}, \tau^{\rm id}) := \frac{1}{S} \sum_{s=1}^S \bar{\Delta}_T( A^{\pi,(s)}; \theta^{(s)} ),  $$
where $\theta^{(s)}$ is the $s^\text{th}$ sample path (that is shared by all algorithms) and $A^{\pi,(s)}$ is the action sequence taken by $\pi$ along this sample path.
In all experiments, we use $T = 1000$ and $S = 1000$.

We first report convergence of instantaneous regret in \cref{fig:numerical-convergence}: we observe that $\mathbb{E}[ \mu_t^* - \mu_{t,A_t} ]$ quickly converges to a constant after some initial transient periods, numerically verifying the conjecture made in \cref{rem:conjecture}.
While not reported here, we also observe that the Ces\`{a}ro average $\bar{\Delta}_T(A; \theta )$ converges to the same limit value as $T \rightarrow \infty$ in every sample path, suggesting the ergodicity of the entire system.
\begin{figure}[ht]
\begin{center}
\centerline{\includegraphics[width=0.5\columnwidth]{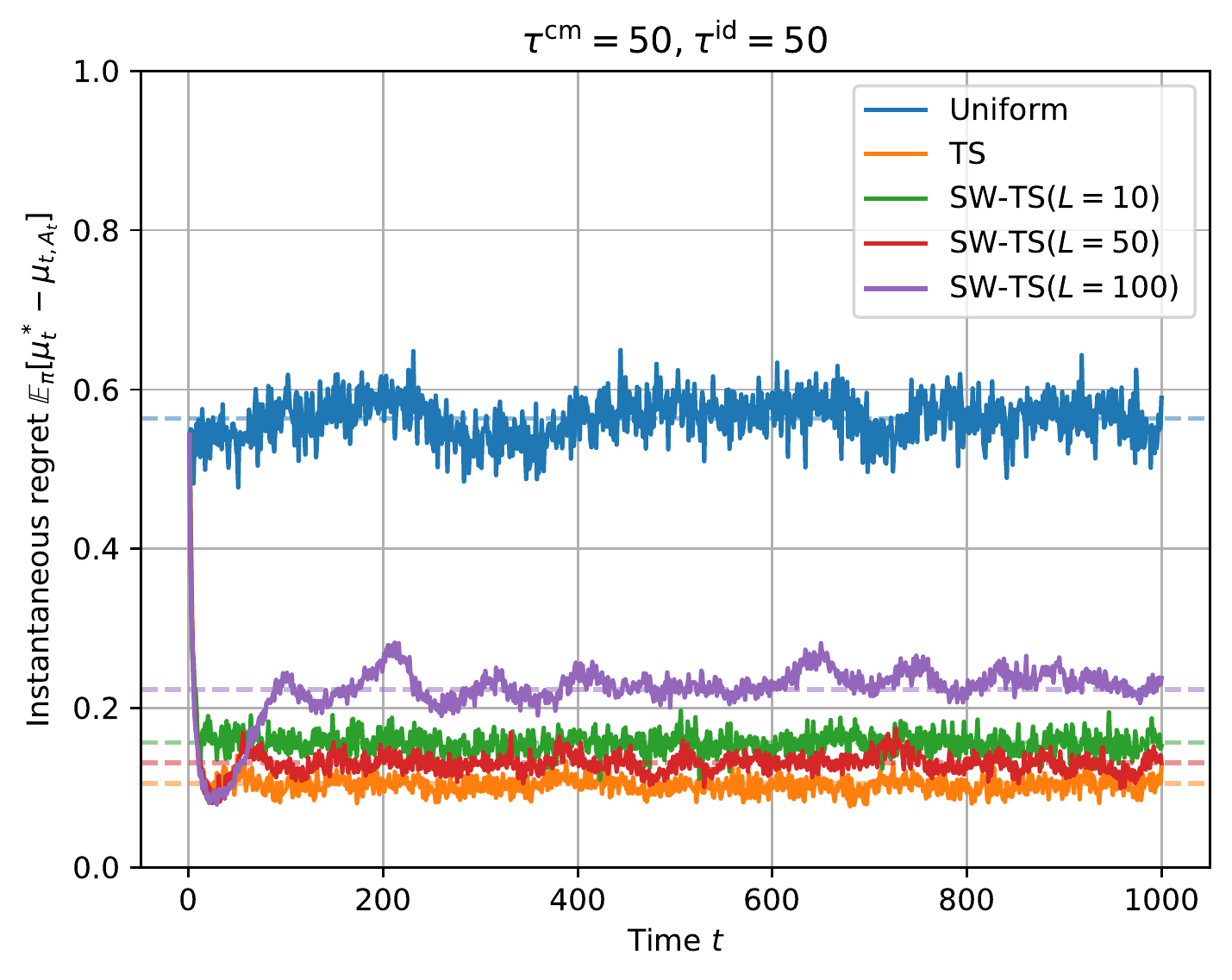}}
\caption{Convergence of instantaneous regret $\mathbb{E}[ \mu_t^* - \mu_{t,A_t} ]$ in the case of $\tau^{\rm cm} = \tau^{\rm id} = 50$. The solid lines report the instantaneous regret of the algorithms, averaged across $S=1000$ sample paths, and the dashed horizontal lines represent the estimated per-period regret.}
\label{fig:numerical-convergence}
\end{center}
\vskip -0.2in
\end{figure}

We next examine the effect of $\tau^{\rm id}$ and $\tau^{\rm cm}$ on the performance of algorithms, and provide the detailed simulation results that complement \cref{fig:example-performance} of \cref{subsec:example}.
While varying $\tau^{\rm id}$ and $\tau^{\rm cm}$, we measure the per-period regret $\hat{\bar{\Delta}}_T(\pi; \tau^{\rm cm}, \tau^{\rm id})$ of algorithms according to the procedure described above.
We observe from \cref{fig:example-performance2} that for every algorithm its performance is mainly determined by $\tau^{\rm id}$, independent of $\tau^{\rm cm}$, numerically confirming our main claim -- the difficulty of problem can be sufficiently characterized by the entropy rate of optimal action sequence, $\bar{H}(A^*)$, which depends only on $\tau^{\rm id}$.
We additionally visualize the upper bound on TS's regret that our analysis predicts (\cref{ex:two-armed-bandit-revisited}).
We also observe that TS performs best across all settings, perhaps because TS exploits the prior knowledge about nonstationarity of the environment, whereas SW-TS or SW-UCB performs well when the window length roughly matches $\tau^{\rm id}$.

\begin{figure}[ht]

\begin{center}
    \begin{minipage}{0.45\textwidth}
        \centering
        \includegraphics[width=0.9\textwidth]{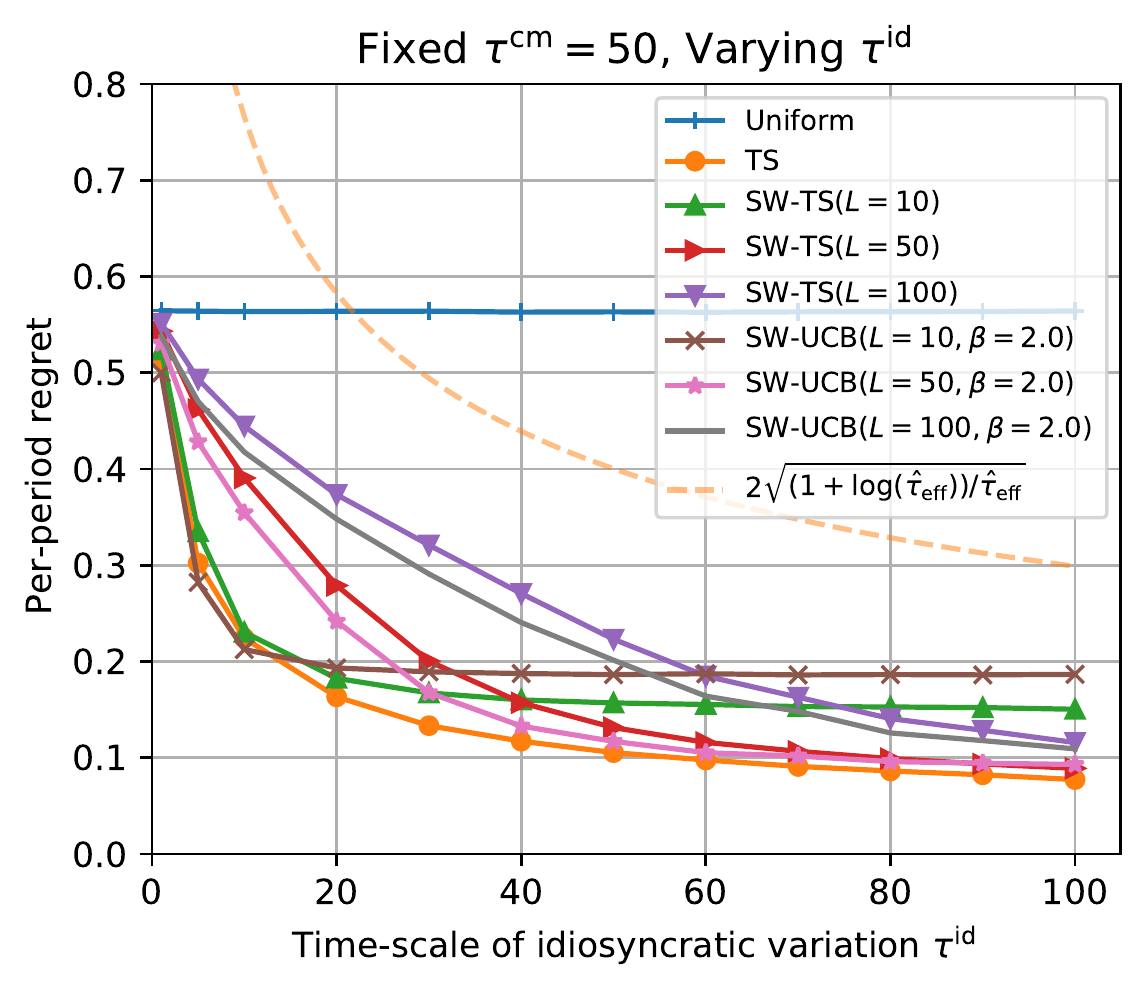} 
    \end{minipage}
    \begin{minipage}{0.45\textwidth}
        \centering
        \includegraphics[width=0.9\textwidth]{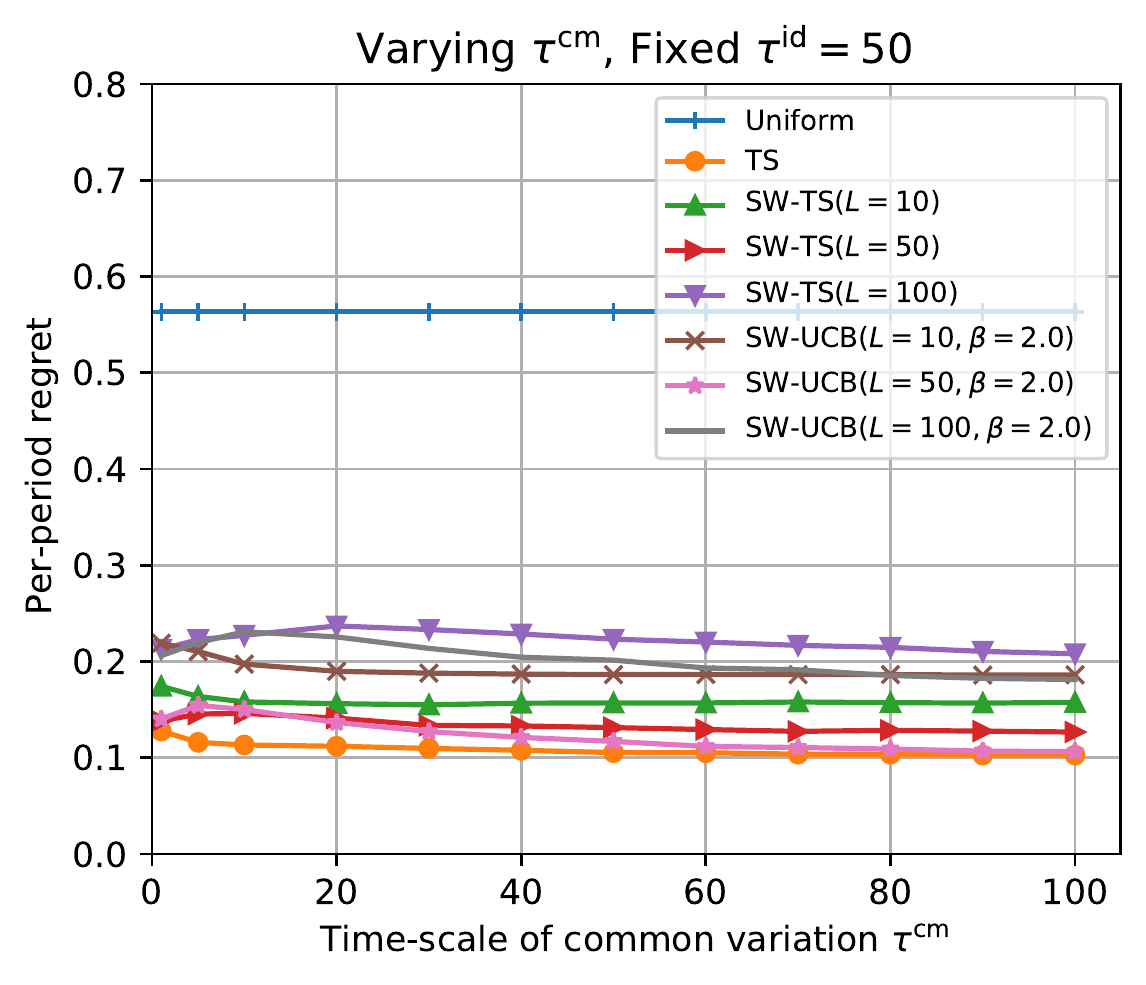} 
    \end{minipage}
    \caption{The effect of $\tau^{\rm id}$ (left) and $\tau^{\rm cm}$ (right) on the performance of algorithms. The dashed line in the left plot represents the upper bound on $\bar{\Delta}_\infty(\pi^{\rm TS})$, implied by \cref{cor:k-armed} and \cref{ex:two-armed-bandit-revisited}.}
\label{fig:example-performance2}
\end{center}
\vskip -0.2in
\end{figure}

\end{document}